
\documentclass{article}

\usepackage{microtype}
\usepackage{graphicx}
\usepackage{subfigure}
\usepackage{booktabs} 

\usepackage{hyperref}



\usepackage[accepted]{icml2024}

\usepackage{amsmath}
\usepackage{amssymb}
\usepackage{mathtools}
\usepackage{amsthm,bbm,dsfont}

\usepackage[capitalize,noabbrev]{cleveref}

\theoremstyle{plain}
\newtheorem{theorem}{Theorem}[section]

\newtheorem{lemma}[theorem]{Lemma}

\theoremstyle{definition}
\newtheorem{definition}[theorem]{Definition}
\newtheorem{assumption}[theorem]{Assumption}
\theoremstyle{remark}
\newtheorem{remark}[theorem]{Remark}
\newtheorem{example}[theorem]{Example}

\usepackage[textsize=tiny]{todonotes}

\def\cA{\mathcal{A}}

\def\cD{\mathcal{D}}
\def\cE{\mathcal{E}}

\def\cG{\mathcal{G}}

\def\cM{\mathcal{M}}

\def\cO{\mathcal{O}}

\def\cR{\mathcal{R}}
\def\cS{\mathcal{S}}
\def\cT{\mathcal{T}}

\def\cX{\mathcal{X}}

\def\cZ{\mathcal{Z}}

\newcommand{\rR}{{\mathbb{R}}}
\newcommand{\E}{{\mathbb{E}}}
\newcommand{\Pb}{{\mathbb{P}}}

\newcommand{\reg}{\mathrm{Reg}}

\newcommand{\md}{\mathrm{d}}
\newcommand{\tv}{\mathrm{TV}}

\newcommand{\tO}{\Tilde{O}}

\newcommand{\hcM}{\widehat{\cM}}
\newcommand{\hpi}{\hat{\pi}}
\newcommand{\hM}{\widehat{M}}

\newcommand{\tcO}{\tilde{\mathcal{O}}}

\def\##1\#{\begin{align}#1\end{align}}
\def\$#1\${\begin{align*}#1\end{align*}}

\usepackage{xcolor}         

\usepackage{enumitem}
\def\kl{{\mathrm{KL}}}
\def\ED{{\mathrm{ED}}}
\def\tv{{\mathrm{TV}}}
\def\Var{{\mathrm{Var}}}

\newcommand{\norm}[1]{\left\|#1\right\|}

\newcommand{\argmax}{\mathop{\mathrm{argmax}}}
\newcommand{\IC}{\mathrm{IC}}
\newcommand{\subopt}{\mathrm{SubOpt}}
\newcommand{\Cov}{\mathrm{Cov}}

\icmltitlerunning{Towards Robust Model-Based Reinforcement Learning Against Adversarial Corruption}

\begin{document}

\twocolumn[
\icmltitle{Towards Robust Model-Based Reinforcement Learning \\ Against Adversarial Corruption}



\icmlsetsymbol{equal}{*}

\begin{icmlauthorlist}
\icmlauthor{Chenlu Ye}{equal,yyy}
\icmlauthor{Jiafan He}{equal,zzz}
\icmlauthor{Quanquan Gu}{zzz}
\icmlauthor{Tong Zhang}{aaa}
\end{icmlauthorlist}

\icmlaffiliation{yyy}{The Hong Kong University of Science and Technology.}
\icmlaffiliation{zzz}{University of California, Los Angeles.}
\icmlaffiliation{aaa}{University of Illinois Urbana-Champaign.}

\icmlcorrespondingauthor{Quanquan Gu}{qgu@cs.ucla.edu}
\icmlcorrespondingauthor{Tong Zhang}{tongzhang@tongzhang-ml.org}

\icmlkeywords{Machine Learning, ICML}

\vskip 0.3in
]



\printAffiliationsAndNotice{\icmlEqualContribution} 

\begin{abstract}
This study tackles the challenges of adversarial corruption in model-based reinforcement learning (RL), where the transition dynamics can be corrupted by an adversary. Existing studies on corruption-robust RL mostly focus on the setting of model-free RL, where robust least-square regression is often employed for value function estimation. However, the uncertainty weighting techniques cannot be directly applied to model-based RL. In this paper, we focus on model-based RL and take the maximum likelihood estimation (MLE) approach to learn transition model. Our work encompasses both online and offline settings. In the online setting, we introduce an algorithm called corruption-robust optimistic MLE (CR-OMLE), which leverages total-variation (TV)-based information ratios as uncertainty weights for MLE. We prove that CR-OMLE achieves a regret of $\tilde{\mathcal{O}}(\sqrt{T} + C)$, where $C$ denotes the cumulative corruption level after $T$ episodes. We also prove a lower bound to show that the additive dependence on $C$ is optimal. We extend our weighting technique to the offline setting, and propose an algorithm named corruption-robust pessimistic MLE (CR-PMLE). Under a uniform coverage condition, CR-PMLE exhibits suboptimality worsened by $\mathcal{O}(C/n)$, nearly matching the lower bound. To the best of our knowledge, this is the first work on corruption-robust model-based RL algorithms with provable guarantees.
\end{abstract}

\section{Introduction}
Reinforcement learning (RL) seeks to find the optimal policy within an unknown environment associated with rewards and transition dynamics. A representative model for RL is the Markov decision process (MDP) \citep{sutton2018reinforcement}. While numerous studies assume static rewards and transitions, the environments in real-world scenarios are often non-stationary and vulnerable to adversarial corruption. For instance, autonomous vehicles frequently fall victim to misled navigation caused by hacked maps and adversarially contaminated traffic signs \citep{eykholt2018robust}. Similarly, in smart healthcare systems, an adversary with partial knowledge can easily manipulate patient statuses \citep{nanayakkara2022unifying}. Under this situation, standard RL algorithms often fail to find policies robust to such adversarial corruption. Therefore, how to identify the optimal policies against adversarial corruption has witnessed a flurry of recent investigations. 

In this work, we focus on the scenario where the adversary can manipulate the transitions before the agent observes the next state. Achieving a sub-linear regret bound under transition corruption has been shown to be computationally challenging with full information feedback \citep{abbasi2013online}, and even information-theoretically challenging with bandit feedback \citep{tian2021online}. Therefore, a series of studies have introduced constraints on the level of corruption, such as limiting the fraction of corrupted samples \citep{zhang2022corruption} or the cumulative sum of corruptions over $T$ rounds \citep{lykouris2018stochastic,gupta2019better,he2022nearly,ye2023corruptiona,ye2023corruptionb,yang2023towards}. While these works exhibit a sub-linear regret bound, to the best of our knowledge, existing works all focus on the setting of model-free RL, where the agent directly learns a policy or a value function from the experiences gained through interactions.

In contrast to model-free RL, 
in model-based RL, the agent learns an explicit model of the environment and utilizes this model for decision-making. This paradigm not only exempts from Bellman completeness \citep{jin2021bellman} (which is a standard assumption for model-free RL) but also has demonstrated impressive sample efficiency in both theories \citep{kearns2002near,
brafman2002r,auer2008near,sun2019model,agarwal2022model} and applications \citep{chua2018deep,nagabandi2020deep,schrittwieser2020mastering}, such as robotics \citep{polydoros2017survey} and autonomous driving \citep{wu2021reinforcement}. In the online setting, one of the most representative frameworks is Optimistic Maximum Likelihood (OMLE) \citep{liu2023optimistic}. This framework establishes a confidence set based on log-likelihood and selects the most optimistic model within the confidence set. However, how to make model-based RL provably robust against adversarial corruption remains an open problem.


In this paper, we resolve this open problem in both online and offline settings with a general function approximation. 
For simplicity, we assume that the reward is known and mainly focuses on learning the transition, which is both unknown and subject to corruption. In particular, we first introduce a cumulative measure for the corruption level of the transition probabilities.

In the online setting, to enhance the resilience of exploitation and exploration to potential corruption, we integrate OMLE with a novel uncertainty weighting technique. Distinct from previous works \citep{he2022nearly,ye2023corruptiona,ye2023corruptionb} in the bandits or model-free RL that characterize uncertainty with rewards or value functions, we characterize the uncertainty with the probability measure of transitions. More specifically, we define the uncertainty as a total-variation (TV)-based information ratio (IR) between the current sample and historical samples. Notably, the introduced IR resembles the eluder coefficient in \citet{zhang2023mathematical}. The samples with higher uncertainty are down-weighted since they are more vulnerable to corruption. Additionally, similar to \citet{liu2023optimistic}, we quantify the complexity of the model class $\cM$ with $TV$-based eluder dimension and establish a connection between the TV-based eluder dimension and the cumulative TV-based IR following \citet{ye2023corruptiona} (by considering uncertainty weights).

In addition to the online setting, we further apply the uncertainty weighting technique to the offline learning scenario, and employ pessimistic MLE with uncertainty weights. 

\paragraph{Contributions.} 
We summarize our contributions as follows.
\begin{itemize}[leftmargin=*]
\setlength\itemsep{-0.5em}
    \item For the online setting, we propose an algorithm CR-OMLE (Corruption-Robust Optimistic MLE) by integrating uncertainty weights with OMLE.
    This algorithm enjoys a regret bound of $\tcO(H\log B\sqrt{T\log|\cM|\ED} + CH\cdot\ED)$, where $H$ is the episode length, $B$ is an upper bound of transition ratios, $T$ is the number of episodes, $|\cM|$ is the cardinality of the model class, $\ED$ is the eluder dimension, and $C$ is the corruption level over $T$ episodes. Moreover, we construct a novel lower bound of $\Omega(HdC)$ for linear MDPs \citep{jin2020provably} with dimension $d$. As a result, the corruption-dependent term in the upper bound matches the lower bound, when reduced to the linear setting. 
    These results collectively suggest that our algorithm is not only robust to adversarial corruption but also near-optimal with respect to the corruption level $C$. 
    \item For the offline setting, we propose an algorithm CR-PMLE (Corruption-Robust Pessimistic MLE) and demonstrate that,  given a corrupted dataset with an uniform coverage condition, the suboptimality of the policy generated by this algorithm against the optimal policy can be upper bounded by $\tcO(H\log B\sqrt{\log |\cM|/(T\Cov(\cM,\cD))} + CH/(T\Cov(\cM,\cD)))$, where $T$ is the number of trajectories, $\Cov(\cM,\cD)$ is the coverage coefficient with respect to model class $\cM$ and dataset $\cD$. 
Furthermore, we establish a lower bound of $\Omega \big({C}/{(T \cdot \Cov(\cM,\cD)})\big)$ for the suboptimality within the linear MDP region. Remarkably, the corruption-dependent term in the upper bound aligns closely with the lower bound 
up to $H$. It is also worth noting that this is also the first provable model-based offline RL algorithm even without considering the corruption.
\end{itemize}

\paragraph{Notation.} For two probabilities $P_1, P_2$, we denote the total variation and the Hellinger distance of $P_1$ and $P_2$ by $\tv(P_1\|P_2)=\|P_1-P_2\|_1/2$ and $H(P_1\|P_2)^2=\E_{z\sim P_1}(\sqrt{\md P_2/\md P_1}-1)^2$. We use the short-hand notation $z=(x,a)$ for $(x,a)\in\cX\times\cA$ when there is no confusion. For two positive sequences $\{f(n)\}_{n=1}^\infty$, $\{g(n)\}_{n=1}^\infty$,
we say $f(n)=\cO(g(n))$ if there exists a constant $C > 0$ such that $f(n)\le Cg(n)$ for all $n \ge 1$, and $f(n)=\Omega(g(n))$ if there exists a constant $C > 0$ such that $f(n)\ge Cg(n)$ for all $n \ge 1$. We use $\tcO(\cdot)$ to omit polylogarithmic factors.

\section{Related Work}
\textbf{Corruption-Robust Bandits and RL.} The adversarial corruption is first brought into multi-armed bandit problems by \citet{lykouris2018stochastic}, where rewards face attacks by $c_t$ in each round, with the cumulative corruption level over $T$ rounds represented as $C=\sum_{t=1}^T|c_t|$. A regret lower bound of $o(T)+O(C)$ was constructed by \citet{gupta2019better}, illustrating that an ``entangled" relationship between $T$ and $C$ is ideal. 
Subsequently, the corruption was extended to linear contextual bandits by \citet{bogunovic2021stochastic,ding2022robust,foster2020adapting,lee2021achieving,zhao2021linear,kang2023robust}. However, these approaches either obtain sub-optimal regret bounds or necessitate additional assumptions. \citet{he2022nearly} first achieved a regret bound that matches the lower bound by utilizing an uncertainty weighting technique. Turning to MDPs, the corruption in transitions triggers considerable interest. \citet{wu2021reinforcement} first studied corruption on rewards and corruption simultaneously for tabular MPDs. Later, a unified framework to deal with unknown corruption was established by \citet{wei2022model}, involving a weak adversary who decides the corruption amount for each action before observing the agent's action. Subsequently, \citet{ye2023corruptiona,ye2023corruptionb} extended the uncertainty weighting technique from \citet{he2022nearly} to RL with general function approximation for both online and offline settings respectively, aligning with the lower bound in terms of the corruption-related term. However, previous works all focused on the model-free setting, where models are learned via regression. Notably, corruption in the model-based setting, where the models are learned via maximum likelihood estimation (MLE), remains an unexplored area in the literature.

\textbf{Model-Based RL.} There is an emerging body of literature addressing model-based RL problems, from tabular MDPs \citep{kearns2002near,
brafman2002r,auer2008near} to rich observation spaces with linear function approximation \citep{yang2020reinforcement,
ayoub2020model}. Extending beyond linear settings, recent studies \citep{sun2019model,agarwal2020flambe,uehara2021representation,du2021bilinear,agarwal2022model,wang2023benefits,liu2023optimistic,foster2021statistical,zhong2022gec,chen2022general} have explored general function approximations. Simultaneously, a parallel line of research has focused on the model-free approach with general function approximations \citep{jin2021bellman,
jiang2017contextual,chen2022general,zhang2023mathematical,agarwal2023vo,zhao2023nearly,di2023pessimistic,liu2023one}. Among the works, the most relevant one to ours is \citet{liu2023optimistic}, who makes optimistic estimations via a log-likelihood approach and generalizes an eluder-type condition from \citet{russo2013eluder}. Despite numerous works on online model-based RL problems, to the best of our knowledge, there is still a gap in the literature regarding a theoretical guarantee for offline model-based RL. 

\textbf{Distributional Robust RL.} Our work shares the same goal of robustness with another line of work, called distributional robust RL, which formulates the uncertainty of the transitions as an uncertainty set \citep{roy2017reinforcement,badrinath2021robust,wang2021online} or a set of distributions surrounding the nominal models \citep{zhou2021finite,shi2024curious,clavier2023towards}. In this setting, there is no corruption during the training process or exploration phase, and the aim is to find a robust policy that maintains a near-optimal policy when there exists a small distributional shift between the training and real environments. In comparison, our work focuses on cases where the training data is corrupted, and we aim to identify the optimal policy for the hidden environment even with a few corrupted observations. Hence, these two categories of works study different notions of robustness and have different challenges.

\section{Preliminaries}

Consider an episodic MDP$(\cX,\cA,H,\Pb,r)$ specified by a state space $\cX$, an action space $\cA$, the number of transition steps $H$, a group of transition models $P=\{P^h:\cX\times\cA\rightarrow\Delta(\cX)\}_{h=1}^H$ and a reward function $r=\{r^h:\cX\times\cA\rightarrow\rR\}_{h=1}^H$. Given a policy $\pi=\{\pi^h:\cX\rightarrow\Delta(\cA)\}_{h=1}^H$, we define the $Q$-value and $V$-value functions as the cumulative sum of rewards from the $h$-th step for policy $\pi$: $Q^h_{\pi}(x,a)=\sum_{h'=h}^H \E^*_{\pi}[r^{h'}(x^{h'},a^{h'})\,|\, x^h=x,a^h=a]$, $V^h_{\pi}(x)=\sum_{h'=h}^H \E^*_{\pi}[r^{h'}(x^{h'},a^{h'})\,|\,x^h=x]$. For simplicity, we assume that the reward function $r$ is known and is normalized: $\sum_{h=1}^Hr^h(x^h,a^h)\in[0,1]$. Hence, we can delve into learning the transition model by interacting with the system online or using offline data. 

In \textit{model-based} RL, we consider an MDP model class $\cM$. Without loss of generality, we assume that the class $\cM$ has finite elements. Note that the finite class assumption is only to simplify the analysis. We can extend the proof to an infinite model class by taking a finite covering. Each model $M\in\cM$ depicts transition probability $P_M^h(x^{h+1}|x,a)$. We use $\E^M[\cdot]$ to represent the expectation over the trajectory under transition probability $P_M^{\pi}$, $V_M^{\pi}$ and $Q_M^{\pi}$ to denote the $V$-value and $Q$-value function of model $M$ and policy $\pi$, and $\pi_M=\pi_{Q_M}$ to denote the optimal policy for the model $M$. Then, we short-notate $V_M^{\pi_M,h}$ and $Q_M^{\pi_M,h}$ as $V_M^h$ and $Q_M^h$. Additionally, we suppose that there exists an underlying true model $M^*\in\cM$ such that the true transition probability is $P_{M_*}^h(x^{h+1}|x^h,a^h)$, and use the short-hand notation $P_*^h=P_{M^*}^h$ and $\E^*[\cdot]=\E^{M_*}[\cdot]$. Given a model $M$, the model-based Bellman error is defined as
\$
&\cE^h(M,x^h,a^h)\\
&= Q_M^h(x^h,a^h) - R_*^h(x^h,a^h) - \E^*[V_M^{h+1}(x^{h+1})|x^h,a^h]\\
&= \E^M[V_M^{h+1}(x^{h+1})|x^h,a^h]-\E^*[V_M^{h+1}(x^{h+1})|x^h,a^h].
\$
For analysis, we assume that for any model $M\in\cM$, the ratio between transition dynamic $P_M$ and $P_*$ is upper-bounded.
\begin{assumption}\label{as:Bounded Condition}
There exists a constant $B>0$ such that
{\small
    \$
        \sup_{h\in[H],M\in\cM}\max\bigg\{\Big\|\frac{P_*^h}{P_M^h}\Big\|_{\infty}\, \Big\|\frac{P_M^h}{P_*^h}\Big\|_{\infty}\bigg\} \le B.
    \$}
\end{assumption}

\paragraph{Online Learning.}
In online learning, an agent interacts with the environment iteratively for $T$ rounds with the \textbf{goal} of learning a sequence of policy $\{\pi_t\}_{t=1}^T$ that minimize the cumulative suboptimality: $
\reg(T)= \sum_{t=1}^T\left[V_*^1(x_t^1)-V_{\pi_t}^1(x_t^1)\right].
$

To depict the structure of the model space $\cM$, we introduce the model-based version of the eluder dimension similar to \citet{russo2013eluder,liu2023optimistic}.

\begin{definition}[$\epsilon$-Dependence]
For an $\epsilon>0$, we say that a point $z$ is $\epsilon$-dependent on a set $\mathcal{Z}$  with respect to $\mathcal{M}$ if there exists $M,M'\in\mathcal{M}$ such that $\sum_{z'\in\mathcal Z}|\mathrm{tv}(P^h_M(\cdot|z')\|P^h_{M'}(\cdot|z'))|^2 \le\epsilon^2$ implies $|\mathrm{tv}(P^h_{M}(\cdot|z)\|P^h_{M'}(\cdot|z))|\le\epsilon$.
\end{definition}
This dependence means that a small in-sample error leads to a small out-of-sample error. Accordingly, we say that $z$  is $\epsilon$-independent of $\cZ$ if it is not $\epsilon$-dependent of $\cZ$.

\begin{definition}[$\tv$-Eluder Dimension]\label{df:tv-Eluder Dimension}
    For a model class $\cM$, an $\epsilon>0$, the $\tv$-eluder dimension $\ED(\cM,\epsilon)$ is the length $n$ of the longest sequence $\{z_1,\ldots,z_n\}\subset\cX\times\cA$ such that for some $\epsilon'\ge\epsilon$ and all $h\in[H]$, $i\in[n]$, $z_i$ is $\epsilon'$ independent of its predecessors.
\end{definition}
Particularly, \citet{liu2023optimistic} also defines the TV-eluder condition. The distinction lies in their consideration of a sequence of policies, while we focus on a sequence of state-action samples. In Theorems \ref{th:TV-Eluder Dimension for Tabular MDPs} and \ref{th:TV-Eluder Dimension for Linear MDPs}, we offer examples of tabular and linear MDPs where the TV-eluder dimension can be bounded. 
To facilitate analysis, we formulate the TV-norm version of the eluder coefficient drawing inspiration from the approach of \citet{zhang2023mathematical}. Given a model class $\cM'$, sample set $\cS_t^h=\{z_s^h\}_{s=1}^{t}$ and some estimator $\bar M_t$ from $\cS_t^h$ (will be specified by later algorithms), the information ratio (IR) between the estimation error and the training error within $\cM'$ with respect to $\bar M_t$ is
{\small
\#\label{eq:tv_information_ratio}
    I^h(\lambda,\cM',\cS_t^h)= \min\bigg\{1,\sup_{M\in\cM'}\frac{l_t^h(M,\bar M_t)}{\sqrt{\lambda+\sum_{s=1}^{t-1}l_s^h(M,\bar M_t)^2}}\bigg\},
\#}
where $l_t^h(M,M')$ denotes $\tv(P_{M}^h(\cdot|z_t^h)\|P_{M'}^h(\cdot|z_t^h))$.
We use the linear MDP as an example to illustrate IR. If the transition can be embedded as $P^h_M(x^{h+1}|z^h)=\nu^h(M,x^{h+1})^{\top}\phi^h(z^h)$, according to Example \ref{eg:Information Ratio for Linear Model}, the IR can be reduced to
\$
\min\Big\{1,\|\phi^h(z^h)\|_{(\Sigma_t^h)^{-1}}\Big\},
\$
where $\Sigma_t^h = \sum_{s=1}^{t-1}\phi^h(z_s^h)\phi^h(z_s^h)^{\top}$. Intuitively, this quantity represents the uncertainty of vector $\nu^h(M,x^{h+1})$ in the direction of $\phi(z_t^h)$.
After observing the state-action pair $(x_t^h,a_t^h)$ in each round, an adversary corrupts the transition dynamics from $P_*^h$ to $P_t^h$ and the agent receives the next state $x_t^{h+1}$ induced by $P_t^h(\cdot|x_t^h,a_t^h)$. To measure the corruption level, we define cumulative corruption.
\begin{definition}[Corruption Level]\label{def:Corruption Level}
We define the corruption level $C$ as the minimum value that satisfies the following property: For the sequence $\{x_t^h,a_t^h\}_{t,h=1}^{T,H}$ chosen by the agent and each stage $h\in[H]$,
    \$
    &c_t^h = c_t^h(x_t^h,a_t^h) = \sup_{x^{h+1}\in\Delta_t(\cX)}\Big|\frac{P_t^h(x^{h+1}|x_t^h,a_t^h)}{P_*(x^{h+1}|x_t^h,a_t^h)}-1\Big|,\\
    &\sum_{t=1}^Tc_t^h \le C,
    \$
    where $\Delta_t(\cX)$ is the support of $P_*^h(\cdot|x_t^h,a_t^h)$.
\end{definition}
We use $\E^t$ to denote expectations evaluated in the corrupted transition probability $P_t$. Note that $P_t^h(\cdot|x_t^h,a_t^h)$ and $P_*^h(\cdot|x_t^h,a_t^h)$ have the same support, i.e., the adversary cannot make the agent transfer to a state that is impossible to be visited under $P_*$.

\paragraph{Offline Setting.} 
For offline environment, the agent independently collects $T$ trajectories $\cD={(x_t^h,a_t^h,r_t^h)}_{i,h=1}^{T,H}$ during the data collection process. For each trajectory $t\in[T]$, an adversary corrupts the transition probability to $P_t^h(\cdot|x_t^h,a_t^h)$ after observing $(x_t^h,a_t^h)$. The corruption level $C$ is measured in the same way as in Definition \ref{def:Corruption Level}. The only difference is that corruption occurs during the collection process. The \textbf{goal} is to learn a policy $\hat{\pi}$ such that the suboptimality with respect to the uncorrupted transition is sufficiently small:
\[
\subopt(\hat{\pi},x^1) = V_*^h(x^1) - V_{\hat{\pi}}^h(x^1).
\]

\section{Online Model-based RL}\label{sec:online}

In this section, we will first present a novel uncertainty weighting technique tailored for model-based RL, followed by the corruption-robust model-based online RL algorithm and its analysis. 

\subsection{Uncertainty Weighting for Model-based RL}\label{ss:Uncertainty Weighting for Model-based RL}
In this subsection, we will illustrate how our uncertainty-weighting technique differs from the one for model-free RL. The main difficulty arises from the different estimators applied by these two settings. In model-free RL, we estimate the value functions using a value target regression. In contrast, for model-based RL, we directly estimate the hidden transition probability with MLE.

In detail, for model-free RL, let $f^*$ be the uncorrupted true value function and $\hat{f}$ be the least squares estimator with weights $\sigma_s$. The corruption term can be directly decomposed as the multiplication between uncertainty $U_s = \sup_{f\in\mathcal{F}}\frac{|\hat{f}(z_s) - f(z_s)|}{\sqrt{\lambda + \sum_{j=1}^t(\hat{f}(z_j)-f(z_j))^2/\sigma_j}}$ and corruption \citep{ye2023corruptiona,ye2023corruptionb}:
{\footnotesize
\$
&\sum_{s=1}^t\frac{(\hat{f}(z_s) - f^*(z_s))^2}{\sigma_s} = \sum_{s=1}^t\frac{(\hat{f}(z_s) - y_s)^2 - (f^*(z_s) - y_s)^2}{\sigma_s}\\
&\qquad + 2\sum_{s=1}^t\frac{(\hat{f}(z_s) - f^*(z_s))\epsilon_s}{\sigma_s} + 2\underbrace{\sum_{s=1}^t\frac{U_s c_s}{\sigma_s}\cdot\beta}_{\text{Corruption~ term}}.
\$}
By choosing $\sigma_s\ge U_s/\alpha$, we can convert the corruption term to $\alpha C \beta$. To conclude, only if we transform the corruption term into the multiplication between uncertainty $U_s$ and corruption $c_s$, can the uncertainty-related weights cancel $U_s$ and bring in a small hyper-parameter $\alpha$.

While for model-based RL, it is difficult to decompose the corruption term as the multiplication between uncertainty and corruption terms from the log-likelihood, especially when uncertainty is based on the total variation (TV) norm. Specifically, let $P_M$ be the estimated probability, $P_*$ be the true probability, and $P_t$ be the corrupted probability at round $t$. Neglecting the upscript $h$ for convenience, we write:
{\footnotesize
\$
&\mathbb{E}\frac{1}{\sigma_s}\log\sqrt{\frac{\mathrm{d}P_{\widehat M}(x|z_s)}{\mathrm{d}P_*(x|z_s)}} = \underbrace{\frac{1}{\sigma_s}\int \mathrm{d}P_*(x|z_s)\log\sqrt{\frac{\mathrm{d}P_{\widehat M}(x|z_s)}{\mathrm{d}P_*(x|z_s)}}}_{\text{Uncorrupted~term}}\\
&\qquad + \underbrace{\frac{1}{\sigma_s}\int (\mathrm{d}P_s(x|z_s) - \mathrm{d}P_*(x|z_s)) \log\sqrt{\frac{\mathrm{d}P_{\widehat M}(x|z_s)}{\mathrm{d}P_*(x|z_s)}}}_{\text{Corrupted~term}}.
\$

To make the corrupted term a multiplication between uncertainty and corruption, we use $\log x \leq x-1$ and decompose the integration into two regions according to whether $\mathrm{d}P_s - \mathrm{d}P_*$ is positive or not. Then, we deal with the variance similarly and use Assumption \ref{as:Bounded Condition} and the Freedman inequality to obtain
{\small
\$
&\E\sum_{s=1}^{t-1}\frac{1}{\sigma_s^h}\log\sqrt{\frac{\md P^h_{\bar M_t}(x^{h+1}|z_s^h)}{\md P^h_*(x^{h+1}|z_s^h)}}\\
&\lesssim -\sum_{s=1}^{t-1}\frac{\tv\big(P^h_*(\cdot|z_s^h)\big\|P^h_{\bar M_t}(\cdot|z_s^h)\big)^2}{\sigma_s^h} + \underbrace{\sum_{s=1}^{t-1}\frac{c_s^h U_s(z_s^h)\beta}{\sigma_s^h}}_{\text{Corruption~Term}},
\$}
where we use $\lesssim$ to omit constants for conciseness, and $U_s$ is the uncertainty defined in \eqref{eq:uncertainty}. Therefore, our new technique makes it possible for an TV-based uncertainty weight to control the corruption term. 
}

\subsection{Algorithm: CR-OMLE}
\begin{algorithm}[th]
\small
    \caption{Corruption-Robust Optimistic MLE (CR-OMLE)}
    \label{alg:CR-OMLE}
    \begin{algorithmic}[1]
        \STATE \textbf{Input:} $\cM_1=\cM$ and $\cD=\{\}$.
        \FOR{t=1,\ldots,T}
            \STATE Observe $x_t^1$;
            \STATE Construct $M_t=\argmax_{M\in\cM_t}V_M^1(x_t^1)$;
            \STATE Let $\pi_t$ be the greedy policy of $V_{M_t}^1$;
            \STATE Collect new trajectory $\{x_t^1,a_t^1,r_t^1,\ldots,x_t^H,a_t^H,r_t^H\}$ and update it into $\cD$;
            \STATE Set $\sigma_t^h$ as \eqref{eq:unc_weight}, and calculate 
            \$      \bar{M}_{t+1}=\argmax_{M'\in\cM_t}\sum_{s=1}^{t}\sum_{h=1}^H \frac{\log P_{M'}^h(x_s^{h+1}|x_s^h,a_s^h)}{\sigma_s^h};
            \$
            \STATE Find $\beta$ and construct the confidence set $\cM_{t+1}$ as
            \#\label{eq:confidence_set}
            \Big\{&M\in\cM_t: \forall~h\in[H], ~\sum_{s=1}^t \frac{\log P_M^h(x_s^{h+1}|x_s^h,a_s^h)}{\sigma_s^h}\notag\\ 
            &\qquad\ge \sum_{s=1}^t \frac{\log P_{\bar{M}_{t+1}}^h(x_s^{h+1}|x_s^h,a_s^h)}{\sigma_s^h} - \beta^2\Big\}.
            \#
        \ENDFOR
    \end{algorithmic}
\end{algorithm}

We propose Algorithm \ref{alg:CR-OMLE} for the online episodic learning setting. In each round $t$, after observing an initial state $x_t^1$, the agent follows the principle of optimism and selects the model $M_t$ from the confidence set $\cM_t$ to maximize the value function $V_M^1(x_t^1)$. Subsequently, the agent follows the greedy policy $\pi_t$ to collect a trajectory $\{(x_t^h, a_t^h, r_t^h)\}_{h=1}^H$. In constructing the confidence set, the agent initially learns $\bar M_{t+1}$ by maximizing the weighted log-likelihood, where the weight $\sigma_t^h$ is a truncated variant of the Information Ratio (IR) \eqref{eq:tv_information_ratio}, referred to as uncertainty
\#\label{eq:unc_weight}
    \sigma_t^h = \max\Big\{1, \frac{1}{\alpha}U_t(x_t^h,a_t^h)\Big\},
\#
where the hyper-parameter $\alpha>0$ is inversely proportional to the corruption level, and we define uncertainty as the Information Ratio (IR) with weights:
{\small
\#\label{eq:uncertainty}
&U_t(x_t^h,a_t^h)\notag\\
& = \sup_{M\in\cM_t}\frac{\tv\big(P_{M}^h(\cdot|x_t^h,a_t^h)\|P_{\bar M_t}^h(\cdot|x_t^h,a_t^h)\big)}{\sqrt{\lambda+\sum_{s=1}^{t-1}\tv\big(P_{M}^h(\cdot|x_s^h,a_s^h)\|P_{\bar M_t}^h(\cdot|x_s^h,a_s^h)\big)^2/\sigma_{s}^h}}.
\#}
Subsequently, the confidence set is a subset of $\cM_t$ that introduces a $\beta^2$-relaxation to the maximum of the log-likelihood as shown in \eqref{eq:confidence_set}.

\textbf{Computation Efficiency.}
If the model class is finite or has a finite cover with cardinality $M$, the computational complexity is $MTH$ since in Line 4 and 7 of Algorithm \ref{alg:CR-OMLE}, we have to search over also the components in the model class. If the model class is complicated, we have to acknowledge that methods like version space methods \citep{liu2023optimistic,jiang2017contextual,wang2023benefits,jin2021bellman}, which construct a confidence set, face computational drawbacks as they are computationally inefficient. However, practical algorithms may only need to leverage the insight that optimism is helpful and may not require such thorough exploration. Instead, we could construct a bonus and add it to the value function, and then choose the greedy policy for the optimistic value function \citep{curi2020efficient}. 

When computing the weights, calculating the uncertainty quantity in practical scenarios is the main difficulty. Inspired by \citet{ye2023corruptionb}, we can approximate the uncertainty by "bootstrapped uncertainty". The intuition is that if the transition probability can be embedded into a vector space $P_M^h(x^{h+1}|z^h) = \nu^h(M,x^{h+1})^{\top}\phi^h(z^h)$. According to Example \ref{eg:Information Ratio for Linear Model}, the uncertainty can be reduced to $\|\phi^h(z^h)\|_{(\Sigma_t^h)^{-1}}$, where the covariance matrix $\Sigma_t^h = \sum_{s=1}^{t-1}\phi^h(z_s^h)\phi^h(z_s^h)^{\top}$.
Hence, from \citet{ye2023corruptionb}, the bootstrapped variance (called bootstrapped uncertainty) is an estimation of the uncertainty. To compute the bootstrapped uncertainty, we first learn $K$ transition probabilities independently with different seeds. Then, we take the uncertainty estimation as $\sqrt{\mathrm{Var}_{i=1,\ldots,K}[P_{M_k}(z)]}$. We leave the development of practical algorithms as future work.

\subsection{Regret Bounds}
The following two theorems offer theoretical guarantees for the upper and lower bounds of regret under the CR-OMLE algorithm.
\begin{theorem}[Upper Bound]\label{th:online_upper}
Under Assumption \ref{as:Bounded Condition}, given a finite eluder dimension $\ED(\cM,\epsilon)$, if we choose $\beta=  5\sqrt{\log(|\cM|/\delta)\log^2 B} + 7\alpha C$
and $\lambda=\log|\cM|$, $\alpha=\sqrt{\log|\cM|\log^2 B}/C$, with probability at least $1-\delta$, the regret of Algorithm \ref{alg:CR-OMLE} is upper bounded by
\$
    \reg(T)
    =& \tcO\Big(H\log B\sqrt{T\log|\cM|\ED(\cM,\sqrt{\lambda/T})}\\
    &\quad + CH\cdot\ED(\cM,\sqrt{\lambda/T})\Big).
\$
\end{theorem}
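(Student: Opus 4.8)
The plan is to follow the optimistic‑MLE template of \citet{liu2023optimistic}, but to run every argument through the uncertainty‑weighted confidence set \eqref{eq:confidence_set} while carrying the corruption budget of \Cref{def:Corruption Level}. The first ingredient I would establish is a \emph{weighted} MLE concentration lemma: fix $h$ and $M\in\cM$ and look at $\sum_{s\le t}\tfrac1{\sigma_s^h}\log\tfrac{P_{M_*}^h(x_s^{h+1}|z_s^h)}{P_M^h(x_s^{h+1}|z_s^h)}$. Since $\sigma_s^h$ is measurable before $x_s^{h+1}$ is drawn, I subtract the conditional mean under the \emph{corrupted} kernel $P_s^h$ and apply Freedman's inequality; by \Cref{as:Bounded Condition} the per‑step variance is $\lesssim \log^2 B\cdot H(P_*^h\|P_M^h;z_s^h)^2/(\sigma_s^h)^2$, so after a union bound over $\cM$ and a peeling step over the value of $\sum_{s\le t}H(P_*^h\|P_M^h;z_s^h)^2/\sigma_s^h$ the martingale deviation is $\tcO(\log B\,\beta+\log^2 B\log(|\cM|/\delta))$ whenever that Hellinger sum is $\lesssim\beta^2$. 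For the conditional mean I split $\E^s[\cdot]=\E^*[\cdot]+(\E^s-\E^*)[\cdot]$: Jensen gives $\E^*\log\sqrt{P_M^h/P_*^h}\le-\tfrac12 H(P_*^h\|P_M^h)^2$, while for the corruption part I use $\log x\le x-1$, split the integral by the sign of $P_s^h-P_*^h$, and combine $|P_s^h/P_*^h-1|\le c_s^h$ with $\E^*|\sqrt{P_M^h/P_*^h}-1|\le 2\,\tv(P_*^h\|P_M^h)$, obtaining a per‑step term of order $c_s^h\cdot\tv(P_*^h(\cdot|z_s^h)\|P_M^h(\cdot|z_s^h))$.

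\textbf{Confidence‑set validity and the in‑sample bound.}
I would run the concentration step by induction on $t$. Because $\bar M_s$ and $M_*$ lie in $\cM_s$ and, by the inductive hypothesis, obey $\sum_{j<s}\tv(\cdot;z_j^h)^2/\sigma_j^h\lesssim\beta^2$, the definition \eqref{eq:uncertainty} of $U_s$ together with the triangle inequality turns the corruption term into $\tv(P_*^h\|P_M^h;z_s^h)\lesssim\beta\,U_s(z_s^h)$ for every $M\in\cM_s$; hence the corruption contribution telescopes to $\beta\sum_{s\le t}c_s^h U_s(z_s^h)/\sigma_s^h\le\alpha\beta\sum_{s}c_s^h\le\alpha\beta C$ via $\sigma_s^h\ge U_s(z_s^h)/\alpha$. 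Combining this with the concentration bound and the stated choice of $\beta$ gives simultaneously (a) $M_*\in\cM_t$ for all $t$, so \eqref{eq:confidence_set} yields valid confidence sets, and (b) for every $M\in\cM_t$ (in particular for $M_t$ and $\bar M_t$) and every $h$, $\sum_{s<t}\tv(P_M^h(\cdot|z_s^h)\|P_*^h(\cdot|z_s^h))^2/\sigma_s^h\lesssim\beta^2$; here $H^2\ge\tv^2$ is used to pass from Hellinger to TV, and $\beta^2\gtrsim\log^2 B\log(|\cM|/\delta)+\alpha\beta C$ is exactly what the choice $\beta=5\sqrt{\log(|\cM|/\delta)\log^2 B}+7\alpha C$ guarantees.

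\textbf{Optimism and the simulation lemma.}
Since $M_*\in\cM_t$ and $M_t$ maximizes $V_M^1(x_t^1)$ over $\cM_t$, optimism gives $V_{M_t}^1(x_t^1)\ge V_*^1(x_t^1)$, hence $\reg(T)\le\sum_t\big[V_{M_t}^1(x_t^1)-V_{\pi_t}^1(x_t^1)\big]$. The model‑based value‑difference (simulation) lemma rewrites the right‑hand side as $\sum_t\sum_h\E_{\pi_t,P_*}[\cE^h(M_t,x^h,a^h)]$, and since the normalized value functions lie in $[0,1]$, $|\cE^h(M_t,z)|\le 2\,\tv(P_{M_t}^h(\cdot|z)\|P_*^h(\cdot|z))$. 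I then pass from the on‑policy expectation to the realized sample $z_t^h$: because the round‑$t$ trajectory follows the corrupted kernels, $\E_{\pi_t,P_*}[g]=\E[g(z_t^h)\mid\cF_{t-1}]+(\text{shift})$ with the shift bounded by $\tfrac12\sum_{h'<h}c_t^{h'}$, which aggregates to a lower‑order corruption term dominated by $CH\cdot\ED$, while Azuma controls the martingale part, contributing $\tcO(H\sqrt T)$. This leaves $\reg(T)\lesssim\sum_h\sum_t\tv(P_{M_t}^h(\cdot|z_t^h)\|P_*^h(\cdot|z_t^h))+\tcO(H\sqrt T)+(\text{lower‑order corruption})$.

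\textbf{Weighted TV‑eluder potential, and the main obstacle.}
Fix $h$. Using $M_t,\bar M_t,M_*\in\cM_t$, the definition of $U_t$, and bound (b), I get $\tv(P_{M_t}^h(\cdot|z_t^h)\|P_*^h(\cdot|z_t^h))\lesssim\beta\,U_t(z_t^h)$, so it suffices to bound $S:=\sum_{t=1}^T U_t(z_t^h)$. Writing $U_t(z_t^h)=\sigma_t^h\cdot(U_t(z_t^h)/\sigma_t^h)$ and applying Cauchy--Schwarz, $S\le\sqrt{\sum_t\sigma_t^h}\cdot\sqrt{\sum_t U_t(z_t^h)^2/\sigma_t^h}$; the second factor is a weighted TV information‑ratio sum, which the TV analogue of the Russo--Van Roy eluder / elliptical‑potential argument (as in \citet{russo2013eluder,liu2023optimistic,ye2023corruptiona}) bounds by $\tcO(\ED(\cM,\sqrt{\lambda/T}))$ with $\lambda=\log|\cM|$, and $\sigma_t^h\le 1+U_t(z_t^h)/\alpha$ gives $\sum_t\sigma_t^h\le T+S/\alpha$. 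Solving the resulting quadratic $S^2\lesssim\ED(\cM,\sqrt{\lambda/T})(T+S/\alpha)$ yields $S\lesssim\sqrt{T\,\ED(\cM,\sqrt{\lambda/T})}+\alpha^{-1}\ED(\cM,\sqrt{\lambda/T})$ up to logs; multiplying by $H\beta$, summing over $h$, and substituting $\alpha=\sqrt{\log|\cM|\log^2 B}/C$ (so $\beta\asymp\sqrt{\log|\cM|\log^2 B}$ and $H\beta\,\alpha^{-1}\asymp CH$) turns these into the claimed $\tcO(H\log B\sqrt{T\log|\cM|\,\ED(\cM,\sqrt{\lambda/T})})$ and $\tcO(CH\,\ED(\cM,\sqrt{\lambda/T}))$. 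The main obstacle is the coupling in the first two stages: the adversary's contribution to the weighted log‑likelihood must be expressed as $c_s^h$ times a genuine TV distance (not merely $c_s^h$ times a crude constant such as $B$), and that TV distance must then be identified with the algorithm's $U_s(z_s^h)$ in a way that survives the self‑referential loop among ``$M_*\in\cM_t$'', ``in‑sample weighted TV error $\lesssim\beta^2$'', and ``$\tv\lesssim\beta\,U_s$''—resolved by the induction on $t$ with a peeling device inside Freedman. The weighted TV‑eluder potential bound is the other subtle point, because the pair of models witnessing $U_t$ changes with $t$, so it is not a literal potential telescoping but an eluder‑dimension counting argument carried out with the weights $\sigma_s^h$.
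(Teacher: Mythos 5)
Your proposal follows the paper's proof in all of its essential steps: the weighted-MLE concentration with the $\log x\le x-1$ trick and the sign-split of $P_s^h-P_*^h$ to extract a per-step term $c_s^h\cdot\tv(P_*^h\|P_M^h;z_s^h)$, the induction on $t$ that uses $\sigma_s^h\ge U_s(z_s^h)/\alpha$ to collapse the corruption contribution to $\alpha\beta C$ and certify both $M_*\in\cM_t$ and the in-sample weighted TV bound $\lesssim\beta^2$ (this is exactly Lemma~\ref{lm:confidence_set}), optimism plus the value-difference decomposition, and finally the split according to $\sigma_t^h=1$ versus $\sigma_t^h>1$ (your Cauchy--Schwarz-plus-quadratic formulation $S^2\lesssim\ED\,(T+S/\alpha)$ is algebraically equivalent to the paper's separate treatment of $P_1$ and $P_2$ in \eqref{eq:ap_aaj}--\eqref{eq:ap_aak}) together with the weighted eluder-dimension counting of Lemma~\ref{lm:Relation between Information Ratio and Eluder Dimension}. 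Your explicit Azuma step to pass from on-policy expectations to realized samples is, if anything, more careful than the paper's $\sup_{\cS_T}$ device, and the extra $\tcO(H\sqrt T)$ it produces is harmless.

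The one place where your route genuinely deviates and loses something is the regret decomposition. You first write the simulation lemma under the \emph{uncorrupted} kernel, $\reg(T)\le\sum_t\sum_h\E_{\pi_t,P_*}[\cE^h(M_t,\cdot)]$, and only afterwards shift each stage-$h$ expectation to the corrupted trajectory law, paying $\tfrac12\sum_{h'<h}c_t^{h'}$ per pair $(t,h)$. Aggregating this gives $\sum_t\sum_h\sum_{h'<h}c_t^{h'}\le H\sum_{h'}\sum_t c_t^{h'}\le H^2C$, and $CH^2$ is \emph{not} dominated by the claimed corruption term $CH\cdot\ED(\cM,\sqrt{\lambda/T})$ unless $\ED\gtrsim H$ (e.g.\ for a $d$-dimensional linear MDP with $d<H$ your bound is strictly weaker than the theorem). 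The paper avoids this by performing the telescoping of Lemma~\ref{lm:bellman_decomposition} entirely under the corrupted kernel $\E^t_{\pi_t}$: each episode then pays only $\sum_h c_t^h$ once (total $HC$), and the Bellman errors are already evaluated at the distribution from which the samples $z_t^h$ are actually drawn, so no further shift is needed. Replacing your ``simulate under $P_*$, then shift'' step with this one-pass decomposition closes the gap; everything else in your argument stands.
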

This regret bound consists of two parts. The main part $H(T\log B\log|\cM|\ED(\cM,\sqrt{\lambda/T}))^{1/2}$ is unrelated to corruption and matches the bound for OMLE \citep{liu2023optimistic}. According to Theorem \ref{th:TV-Eluder Dimension for Linear MDPs}, in the linear MDP setting with $d$ dimensions, $\ED(\cM,\sqrt{\lambda/T})=O(d)$. Thus, the regret reduces to $\tcO(H\log B\sqrt{Td\log|\cM|} + CHd)$, where the corruption part $CHd$ also matches the lower bound from Theorem \ref{th:online_lower}. We can achieve sub-linear regret when the corruption level is sub-linear. We highlight the proof sketch in the sequel and delay the detailed proof to Appendix \ref{ss:Proof of Theorem online_upper}.

\begin{theorem}[Lower Bound]\label{th:online_lower}
For any dimension $d\ge 2$, stage $H\ge 3$ and a known corruption level $C>0$, if the number of episode $T$ satisfied $T\ge \Omega (dCH +H^2)$, there exists an instance such that any algorithm must incur $(H-2)(d-1)C/64$ expected regret.
\end{theorem}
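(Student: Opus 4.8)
The plan is to construct a hard instance built from the classical lower-bound construction for linear bandits/MDPs, but with the adversary using its corruption budget to obscure the single ``good'' direction at a carefully chosen subset of episodes. Concretely, I would embed a linear MDP of dimension $d$ in which stages $1$ and $H$ (or $h=1$ and $h=H-1$) are ``trivial'', and the intermediate $H-2$ stages each carry an independent copy of a $(d-1)$-armed stochastic structure: at each such stage there is an unknown index $j^\star\in\{1,\dots,d-1\}$ such that pulling arm $j^\star$ yields a slightly higher transition probability toward a rewarding state, while all other arms are symmetric. Because the stages are decoupled and the reward is normalized to lie in $[0,1]$, a per-stage gap of order $\Delta/(H-2)$ accumulates across $H-2$ stages; I would set $\Delta = \Theta(1)$ so the total gap is $\Theta(1)$, and scale so the per-stage contribution is what produces the $(H-2)$ factor.

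Next I would set up the adversary. Within each of the $H-2$ nontrivial stages, the adversary spends its budget on the first $\Theta(dC)$ episodes (this is where the hypothesis $T = \Omega(dCH + H^2)$ enters — we need enough episodes that the corrupted prefix is a vanishing fraction, and enough that the argument is information-theoretically clean): on those episodes it corrupts the transition so that the distribution of the next state is exactly the one that would arise under the ``null'' model with no good arm, i.e. it erases the signal. By Definition~\ref{def:Corruption Level}, making $P_t^h$ equal to the null transition rather than the true $P_*^h$ costs per episode a $c_t^h$ of order $\Delta$ (the multiplicative perturbation needed to flip the biased coordinate), so over the corrupted prefix the total cost is $\Theta(dC \cdot \Delta/d) = \Theta(C)$ in each stage after choosing the per-episode scaling; I would tune constants so the cumulative corruption across all stages is at most $C$. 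On all remaining episodes the adversary is passive.

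Then comes the statistical core: I would argue via a Le Cam / two-point (or Assouad, to get the $d-1$ factor) argument that any algorithm, facing the corrupted prefix, simply cannot distinguish which arm is $j^\star$ during those first $\Theta(dC)$ episodes — the observations are literally drawn from the null model, identical across all $d-1$ hypotheses. Hence on each of those episodes the learner's expected instantaneous regret at that stage is at least $\Omega(\Delta/(d-1))$ on average over a uniform prior on $j^\star$, because it cannot do better than guessing. Summing: in each nontrivial stage the learner eats $\Omega(dC) \cdot \Omega(\Delta/(d-1)) = \Omega(C\Delta \cdot d/(d-1)) = \Omega(C\Delta)$ regret from the corrupted prefix alone; multiplying by the $H-2$ independent stages and choosing $\Delta$ to absorb constants gives the claimed $(H-2)(d-1)C/64$. (The $(d-1)$ in the final bound, versus the naive $C\Delta$, comes from re-parametrizing: one packs $d-1$ coordinates, and with the natural scaling of the construction the per-coordinate gap times the number of forced-blind rounds, $\Theta(C)$ rounds per coordinate, yields the product $(d-1)C$.)

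I would expect the main obstacle to be the bookkeeping that simultaneously (i) keeps the total corruption budget $\le C$, (ii) keeps $c_t^h \le $ something feasible given Definition~\ref{def:Corruption Level}'s multiplicative form (the perturbation must keep the same support and stay a bounded ratio, which constrains how large $\Delta$ can be and hence forces $\Delta = \Theta(1)$ with small constant, hinting at the $64$), and (iii) forces the learner to be blind for enough rounds — roughly $C/\Delta$ rounds \emph{per coordinate}, i.e. $\Theta(dC)$ rounds per stage — which is exactly why the condition $T \ge \Omega(dCH + H^2)$ is needed so that these forced rounds fit within the horizon across all $H$ stages. Making the ``adversary erases the signal'' step rigorous in the multiplicative-corruption model of Definition~\ref{def:Corruption Level}, rather than the additive model used in prior bandit lower bounds, is the delicate point; I would handle it by choosing the biased coordinate's transition probability bounded away from $0$ and $1$ so that the ratio $P_t^h/P_*^h$ stays in a controlled range, making $c_t^h$ exactly the size of the bias and the support condition automatic.
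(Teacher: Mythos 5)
Your construction is essentially the paper's: each of the $H-2$ nontrivial stages carries an independent multi-armed structure with one good arm, and the adversary spends its budget replacing the informative transition by a null transition so that the learner is information-theoretically blind for $\Theta(dC)$ episodes per stage. The paper realizes the stage-decoupling with a lazy chain (from $x_0$ the agent moves to $x_1$ with probability $1/H$, so each episode feeds exactly one stage's bandit) rather than your rescaled per-stage gaps, and it picks transition probabilities $1/4$ versus $3/4$ so that each corruption costs exactly $c_t^h=2$ under the multiplicative Definition~\ref{def:Corruption Level} --- precisely your ``bounded away from $0$ and $1$'' device. Your use of the hypothesis $T\ge\Omega(dCH+H^2)$ (to guarantee each stage accumulates $\Theta(dC)$ relevant episodes) also matches the paper, which proves this via Azuma--Hoeffding.

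There is, however, one step in your budget accounting that does not hold as written. You claim that keeping the learner blind over the first $\Theta(dC)$ episodes costs the adversary $\Theta(dC\cdot\Delta/d)=\Theta(C)$, which implicitly assumes the good arm $j^\star$ is pulled only an $O(1/d)$ fraction of the time. The adversary only pays when $j^\star$ is pulled (the other arms already look null), and nothing prevents an algorithm from pulling a single arm in every one of those episodes; if that arm is $j^\star$, the budget is exhausted after $O(C/\Delta)$ pulls and the signal leaks for the rest of the prefix. The fix --- and the core of the paper's Lemma~\ref{lemma:lower-stage} --- is to observe that during the blind prefix the learner's behavior is a fixed (randomized) function of null-model observations, hence independent of $j^\star$; by pigeonhole at least $d/2$ arms have expected pull count less than $C$ over the $dC/2$ prefix episodes, and by Markov each such arm is pulled fewer than $C/2$ times with probability at least $1/2$. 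Averaging the uniform prior on $j^\star$ over these under-pulled candidates, the budget provably suffices and the learner errs on all but $C/2$ of the $dC/2$ episodes, giving $(d-1)C/16$ per stage. Your Le Cam/Assouad framing can absorb this, but the pigeonhole-plus-Markov step must be made explicit; without it, ``total cost is $\Theta(C)$'' is false against an adversarial choice of algorithm. (A minor additional slip: during the blind prefix the per-episode regret under the uniform prior is $\Omega(\Delta)$, since the error probability is $1-1/(d-1)\ge 1/2$, not $\Omega(\Delta/(d-1))$; your final bound comes out right only because this undercount offsets the overcount of affordable blind rounds.)
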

\begin{remark}
It is noteworthy that the lower bound in Theorem \ref{th:online_lower} matches the corruption term $dCH$ in our upper bound, up to logarithmic factors. This result suggests that our algorithm is optimal for defending against potential adversarial attacks. Furthermore, for any algorithm, the regret across the first $T$ episodes is limited to no more than $\Omega(T)$. In this situation, the requirement that the number of episodes $T>\Omega (dCH)$ is necessary to achieve a lower bound of $\tilde \Omega(dCH)$. The proof is available in Appendix \ref{ss:Proof of Theorem online_lower1}.
\end{remark}

\paragraph{Unknown Corruption Level}
Since it is hard to know the corruption level ahead of time, we discuss the solution for the unknown corruption level. Note that only the choice of parameter $\alpha=\sqrt{\log|\cM|\log^2 B}/C$ requires the knowledge of $C$, so we following \citet{he2022nearly} to replace $C$ in $\alpha$ by a predefined corruption tolerance threshold $\bar C$. As long as the actually corruption level $C$ is no more then than the tolerance threshold $\bar{C}$, we can still obtain a non-trivial regret bound.
\begin{theorem}[Unknown Corruption Level]\label{th: Unknown Corruption Level}
Under the same conditions as Theorem \ref{th:online_upper}, let $\beta_t^h=\Theta(\sqrt{\log(|\cM|/\delta)\log^2 B})$ and $\alpha=\sqrt{\log|\cM|}/\bar C$. If $C \le \bar C$, we have with probability at least $1-\delta$,
\$
    \reg(T)
    =& \tcO\Big(H\log B\sqrt{T\log|\cM|\ED(\cM,\sqrt{\lambda/T})}\\
    &\quad + \bar C H\cdot\ED(\cM,\sqrt{\lambda/T})\Big).
\$
On the other hand, if $C>\bar C$, we have $\reg(T)=\Tilde{\cO}(T)$.
\end{theorem}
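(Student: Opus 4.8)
The plan is to treat the two regimes separately. The case $C>\bar C$ is immediate, and essentially all the work is in the regime $C\le\bar C$, which reduces to re-reading the proof of Theorem~\ref{th:online_upper} and checking that $\bar C$ can play the role of $C$ throughout.

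\textbf{The regime $C>\bar C$.} Since the reward is normalized so that $\sum_{h=1}^H r^h(x^h,a^h)\in[0,1]$, every value function takes values in $[0,1]$, hence $V_*^1(x_t^1)-V_{\pi_t}^1(x_t^1)\le 1$ for each $t$, and therefore $\reg(T)=\sum_{t=1}^T[V_*^1(x_t^1)-V_{\pi_t}^1(x_t^1)]\le T=\tcO(T)$ regardless of the policies played.

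\textbf{The regime $C\le\bar C$.} First I would isolate the only two places in the proof of Theorem~\ref{th:online_upper} where the value of $C$ enters: (i) the choice of the weighting parameter $\alpha$, and (ii) the corruption term in the argument showing $M^*\in\cM_{t+1}$ (together with the structurally identical corruption term in the subsequent regret decomposition). For (ii), recall from Section~\ref{ss:Uncertainty Weighting for Model-based RL} that after applying $\log x\le x-1$, splitting the integral according to the sign of $\mathrm{d}P_s-\mathrm{d}P_*$, and invoking the Freedman inequality together with Assumption~\ref{as:Bounded Condition}, the corrupted part of the weighted log-likelihood is controlled by $\sum_{s=1}^{t-1}c_s^h U_s(z_s^h)\beta/\sigma_s^h$. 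Since $\sigma_s^h\ge U_s(z_s^h)/\alpha$ by \eqref{eq:unc_weight}, this is at most $\alpha\beta\sum_{s=1}^{t-1}c_s^h\le\alpha\beta C\le\alpha\beta\bar C$, and the choice $\alpha=\sqrt{\log|\cM|}/\bar C$ gives $\alpha\bar C=\sqrt{\log|\cM|}$, so the corruption term is at most $\sqrt{\log|\cM|}\,\beta$, which is dominated by the $\beta^2$ relaxation in \eqref{eq:confidence_set} whenever $\beta=\Theta(\sqrt{\log(|\cM|/\delta)\log^2 B})$. Hence $M^*\in\cM_t$ for all $t\in[T]$ with probability at least $1-\delta$, exactly as in Theorem~\ref{th:online_upper}. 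Once this is in place, optimism ($V_{M_t}^1(x_t^1)\ge V_*^1(x_t^1)$ because $M^*\in\cM_t$) and the whole eluder-dimension-based regret decomposition go through verbatim; the only change is that the corruption-dependent term, which scales as $\alpha^{-1}$ times a polylogarithmic factor in $|\cM|$ and $B$ times $H\cdot\ED(\cM,\sqrt{\lambda/T})$, now equals $\bar C H\cdot\ED(\cM,\sqrt{\lambda/T})$ instead of $C H\cdot\ED(\cM,\sqrt{\lambda/T})$. This yields the stated bound.

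\textbf{Main obstacle.} The only delicate point is (ii): verifying that the corruption term in the concentration step remains $\lesssim\beta^2$ after substituting $\bar C$ for $C$ inside $\alpha$. This is exactly where the hypothesis $C\le\bar C$ is consumed — it upgrades $\sum_{s}c_s^h\le C$ to $\sum_{s}c_s^h\le\bar C$, so that the down-weighting $\sigma_s^h\ge U_s/\alpha$ produces the controlled quantity $\alpha\bar C\beta=\sqrt{\log|\cM|}\,\beta$ rather than an uncontrolled $\alpha C\beta$. If instead $C>\bar C$ this step fails and $M^*\in\cM_t$ can no longer be guaranteed, but then the trivial bound $\reg(T)\le T$ from the first regime takes over. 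Beyond this bookkeeping, no new ideas are needed relative to the proof of Theorem~\ref{th:online_upper}.
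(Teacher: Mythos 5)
Your proposal is correct and follows essentially the same route as the paper: the paper's own proof simply observes that since only $\alpha$ depends on the corruption level, the analysis of Theorem~\ref{th:online_upper} (including Lemma~\ref{lm:confidence_set} and the eluder-dimension bound of Lemma~\ref{lm:Relation between Information Ratio and Eluder Dimension}) goes through with $\bar C$ substituted for $C$ whenever $C\le\bar C$, and falls back on the trivial bound $\reg(T)\le T$ when $C>\bar C$. Your write-up is in fact more explicit than the paper's about where the hypothesis $C\le\bar C$ is consumed (the corruption term $\alpha\beta\sum_s c_s^h\le\alpha\bar C\beta=\sqrt{\log|\cM|}\,\beta\lesssim\beta^2$ in the concentration step), which is exactly the right bookkeeping.
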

\begin{remark}
    Theorem \ref{th: Unknown Corruption Level} establishes a trade-off between adversarial defense and algorithmic performance.  Specifically, a higher predefined corruption tolerance threshold $\bar C$ can effectively fortify against a broader spectrum of attacks. However, this advantage comes at the cost of a more substantial regret guarantee. For a notable case, when we set $\bar{C} ={ \sqrt{T\log |\cM|/\ED(\cM,\sqrt{\lambda/T})}}$, our algorithm demonstrates a similar performance as the no corruption case, even in the absence of prior information, within the specified region of $C\leq \bar{C}.$ We defer the proof to Appendix \ref{ss:th Unknown Corruption Level}.
\end{remark}

\subsection{Proof Sketch.} 
We provide the proof sketch for Theorem \ref{th:online_upper}, which consists of three key steps.

\paragraph{Step I. Regret Decomposition.}
Following the optimism principle, we can decompose regret as the sum of Bellman errors and corruption:
\$
\reg(T) \le \sum_{t=1}^T\E^t_{\pi_t}\sum_{h=1}^H\cE^h(M_t,z_t^h) + \sum_{t=1}^T\E^t_{\pi_t} \sum_{h=1}^H c_t^h(x^h,a^h).
\$
The summation of corruption occurs because, when transforming the regret into Bellman error, we must consider the Bellman error under the distribution of real data, i.e., the corrupted one.

\paragraph{Step II. Confidence Set for Optimism.} 
To ensure that the true, uncorrupted model $M^*$ belongs to the confidence set $\cM_t$, we demonstrate in the following lemma that $M^*$ satisfies \eqref{eq:confidence_set}. Moreover, for any $M\in\cM_t$, the in-sample error is bounded.
\begin{lemma}[Confidence Set]\label{lm:confidence_set}
Under Assumption \ref{as:Bounded Condition}, if we choose
$
    \beta= 5\sqrt{\log(|\cM|/\delta)\log^2 B} + 7\alpha C
$ in Algorithm \ref{alg:CR-OMLE}, then
with probability at least $1-\delta$, for all $h\in[H]$ and all $t\in[T]$, $M_*\in\cM_t$, we have
\$
    \sum_{s=1}^{t-1} \tv(P_*^h(\cdot|x_s^h,a_s^h)\|P_{\bar M_t}^h(\cdot|x_s^h,a_s^h))^2/\sigma_s^h \le 2\beta^2.
\$
Moreover, for any $M\in\cM_t$, with probability at least $1-\delta$, we have
\$
    \sum_{s=1}^{t-1} \tv(P_M^h(\cdot|x_s^h,a_s^h)\|P_{\bar M_t}^h(\cdot|x_s^h,a_s^h))^2/\sigma_s^h \le 4\beta^2.
\$
\end{lemma}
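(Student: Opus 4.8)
The plan is to prove the three conclusions together by induction on $t$, the engine being a weighted maximum‑likelihood concentration inequality uniform over the finite class $\cM$. Fix $M\in\cM$ and a stage $h$, and look at the per‑trajectory increments $\tfrac{1}{2\sigma_s^h}\log\tfrac{P_M^h(x_s^{h+1}|z_s^h)}{P_*^h(x_s^{h+1}|z_s^h)}$. Because $x_s^{h+1}$ is drawn from the \emph{corrupted} kernel $P_s^h(\cdot|z_s^h)$, its conditional mean splits into an uncorrupted part $\int\md P_*^h(\cdot|z_s^h)\log\sqrt{\md P_M^h/\md P_*^h}$, which is at most $-\tfrac14\tv\big(P_*^h(\cdot|z_s^h)\,\|\,P_M^h(\cdot|z_s^h)\big)^2$ (by $\log\sqrt y\le\tfrac12(y-1)$ and Pinsker), plus a corruption part $\xi_s^h(M):=\int\big(\md P_s^h-\md P_*^h\big)(\cdot|z_s^h)\log\sqrt{\md P_M^h/\md P_*^h}$. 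Controlling the conditional variance through Assumption~\ref{as:Bounded Condition} and applying a Freedman‑type maximal inequality together with a union bound over $\cM$ and $h\in[H]$ yields, with probability at least $1-\delta$, for all $M\in\cM$, $h\in[H]$, $t\le T$,
\[
 \sum_{s=1}^{t-1}\frac{1}{\sigma_s^h}\log\frac{P_M^h(x_s^{h+1}|z_s^h)}{P_*^h(x_s^{h+1}|z_s^h)}
 \;\le\; -\frac14\sum_{s=1}^{t-1}\frac{\tv\big(P_*^h(\cdot|z_s^h)\,\|\,P_M^h(\cdot|z_s^h)\big)^2}{\sigma_s^h}
 \;+\; c_1\log(|\cM|H/\delta)\log^2 B \;+\; \sum_{s=1}^{t-1}\frac{|\xi_s^h(M)|}{\sigma_s^h}
\]
with $c_1$ universal; summing the increments over $h$ before the maximal inequality gives the same bound with all $h$‑sums inside and the factor $H$ dropped from the logarithm.

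The quantitative heart is to bound the corruption sum $\sum_{s<t}|\xi_s^h(M)|/\sigma_s^h$ by $\tcO(\alpha\beta C)$ for the data‑dependent models that matter. Assumption~\ref{as:Bounded Condition} lets one pass from $\log\sqrt{\md P_M^h/\md P_*^h}$ to $\tv\big(P_M^h(\cdot|z_s^h)\,\|\,P_*^h(\cdot|z_s^h)\big)$, and the definition of the corruption level gives $|\md P_s^h-\md P_*^h|\le c_s^h\,\md P_*^h$ on the common support, so $|\xi_s^h(M)|\lesssim(\log B)\,c_s^h\,\tv\big(P_M^h(\cdot|z_s^h)\,\|\,P_*^h(\cdot|z_s^h)\big)$. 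Next, the confidence sets are nested, so $\bar M_t$, $M_*$ (which lies in $\cM_s$ by the inductive hypothesis) and any $M\in\cM_t$ all belong to $\cM_s$ for $s\le t-1$; hence the definition of $U_s$ in~\eqref{eq:uncertainty} with reference $\bar M_s$, applied once with $M$ and once with $M_*$ and combined with a triangle inequality through $P_{\bar M_s}^h(\cdot|z_s^h)$ and the in‑sample bounds of this lemma at the rounds $s<t$, gives $\tv\big(P_M^h(\cdot|z_s^h)\,\|\,P_*^h(\cdot|z_s^h)\big)\lesssim U_s(z_s^h)\sqrt{\lambda+\beta^2}\lesssim U_s(z_s^h)\,\beta$. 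Finally, the weight design $\sigma_s^h=\max\{1,U_s(z_s^h)/\alpha\}$ forces $U_s(z_s^h)/\sigma_s^h\le\alpha$, so $\sum_{s<t}|\xi_s^h(M)|/\sigma_s^h\lesssim(\log B)\,\alpha\beta\sum_{s<t}c_s^h\le(\log B)\,\alpha\beta C$; with $\lambda=\log|\cM|$ and $\beta=5\sqrt{\log(|\cM|/\delta)\log^2 B}+7\alpha C$, both this term and the $\log(|\cM|H/\delta)\log^2 B$ term are each at most a small constant fraction of $\beta^2$.

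The conclusions now follow from the master inequality. For $M_*\in\cM_t$: since $P_{M_*}^h=P_*^h$, instantiate it at $M=\bar M_t$ (legitimate, as it holds for all $M\in\cM$ simultaneously) and drop the nonpositive $\tv$‑sum to get $\sum_{s<t}\tfrac{1}{\sigma_s^h}\log\tfrac{P_{\bar M_t}^h(x_s^{h+1}|z_s^h)}{P_*^h(x_s^{h+1}|z_s^h)}\le\beta^2$ for every $h$, which is exactly the membership condition~\eqref{eq:confidence_set}. For the in‑sample error of $\bar M_t$: instantiate the $h$‑summed version at $M=\bar M_t$; since $\bar M_t$ maximizes $\sum_h\sum_{s<t}\tfrac{1}{\sigma_s^h}\log P_M^h(x_s^{h+1}|z_s^h)$ over $\cM_{t-1}\ni M_*$, the log‑likelihood term on the left is nonnegative, which forces $\sum_h\sum_{s<t}\tv\big(P_*^h(\cdot|z_s^h)\,\|\,P_{\bar M_t}^h(\cdot|z_s^h)\big)^2/\sigma_s^h\le 2\beta^2$, and by nonnegativity the same $2\beta^2$ bound holds for each fixed $h$. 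For the in‑sample error of any $M\in\cM_t$: the membership condition~\eqref{eq:confidence_set} bounds $\sum_{s<t}\tfrac{1}{\sigma_s^h}\log\tfrac{P_{\bar M_t}^h}{P_M^h}$ by $\beta^2$, which fed into the master inequality for $M$ and combined with the previous step via $\tv(P_M^h\|P_{\bar M_t}^h)^2\le 2\tv(P_M^h\|P_*^h)^2+2\tv(P_*^h\|P_{\bar M_t}^h)^2$ yields the claimed $4\beta^2$.

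The step I expect to be the main obstacle is the corruption control, i.e.\ turning the raw drift $\xi_s^h(M)$ into the product form $c_s^h\cdot U_s(z_s^h)\cdot\mathrm{poly}(\log B,\beta)$: this is precisely where the TV‑based (rather than value‑based) choice of the uncertainty weights is indispensable, it needs Assumption~\ref{as:Bounded Condition} applied delicately so that only $\mathrm{poly}(\log B)$ factors survive, and it must be threaded through the circular dependence among the weights $\sigma_s^h$, the estimators $\bar M_s$, the target $\bar M_t$, and the confidence sets — which is exactly why the whole argument is organized as an induction on $t$ that consumes the lemma's own in‑sample conclusions at the earlier rounds.
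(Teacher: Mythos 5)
Your proposal is correct and follows essentially the same route as the paper's proof: the same decomposition of the conditional mean of the weighted log-likelihood increments into a negative divergence term plus a corruption drift, Freedman plus a union bound over $\cM$, conversion of the drift into $c_s^h\cdot\tv(\cdot\|\cdot)$ and then into $c_s^h\cdot U_s\cdot\beta$ via the triangle inequality through $\bar M_s$, nested confidence sets and the inductive in-sample bounds, the weight design to cancel $U_s$, and finally the maximality of $\bar M_t$ together with the membership condition to extract the three conclusions. The only slippage is quantitative: your uniform bound $|\log y|\le 2\log B\,|y-1|$ puts an extra $\log B$ in front of the corruption sum relative to the paper's sign-based region splitting, so the stated coefficient $7\alpha C$ in $\beta$ would need to be inflated to $O(\alpha C\log B)$ (harmless for the $\tcO$ regret), and your triangle-inequality constant for the third claim comes out nearer $16\beta^2$ than $4\beta^2$ — but the paper's own proof is comparably loose at exactly these two steps.
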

The key ideas of the analysis are presented in Subsection \ref{ss:Uncertainty Weighting for Model-based RL}, and we postpone the detailed explanation to Appendix \ref{ss:Lemmas for Online Setting}.

\paragraph{Step III. Bounding the sum of Bellman Errors.} 
Finally, by combining the results derived from the first two steps and categorizing the samples into two classes based on whether $\sigma_t^h>1$, we obtain an upper bound on the regret:
{\small
\$
\tcO\Bigg(&\sqrt{TH\log|\cM|\sum_{h=1}^H\sup_{\cS_T}\sum_{t=1}^T\E^t_{\pi_t} \big(I^h_{\sigma}(\lambda,\cM_t,\cS_t)\big)^2}\\
&+ C\sum_{h=1}^H\sup_{\cS_T}\sum_{t=1}^T\E^t_{\pi_t} \big(I^h_{\sigma}(\lambda,\cM_t,\cS_t)\big)^2\Bigg),
\$}
where we define the weighted form of IR:
{\small
\#\label{eq:tv_information_ratio_weighted}
&I^h_{\sigma}(\lambda,\cM,\cS_t^h)=\min\bigg\{1,\sup_{M\in\cM_t}\notag\\
&\quad\frac{ \tv(P_{M}^h(\cdot|z_t^h)\|P_{\bar M_t}^h(\cdot|z_t^h))/(\sigma_t^h)^{1/2}}{\sqrt{\lambda+\sum_{s=1}^{t-1}\tv(P_{M}^h(\cdot|z_s^h)\|P_{\bar M_t}^h(\cdot|z_s^h))^2/\sigma_s^h}}\bigg\}.
\#}
Moreover, to establish an instance-independent bound, we follow \citet{ye2023corruptiona} to demonstrate the relationship between the sum of weighted Information Ratio (IR) concerning dataset $\cS_t={z_s^h}_{s=1}^t$ and the eluder dimension:
\$
\sup_{\cS_T^h}\sum_{t=1}^T I_{\sigma}^h(\lambda,\cM,\cS_t)^2 = \tO(\ED(\cM,\sqrt{\lambda/T})),
\$
which leads to the final bound.

\section{Offline Model-based RL}

In this section, we will extend the uncertainty weighting technique proposed in \ref{sec:online} to the setting of offline RL, and propose a corruption-robust model-based offline RL algorithm and its analysis.

\subsection{Algorithm: CR-PMLE}
\begin{algorithm}[th]
\small
\caption{Corruption-Robust Pessimistic MLE (CR-PMLE)}
\label{alg:Corruption-Robust Pessimistic MLE}
\begin{algorithmic}[1]
\STATE {\bf Input:} $\mathcal{D}=\{(x_t^h, a_t^h, r_t^h)\}_{t,h=1}^{T,H},\cM$.
\STATE For $h=1,\ldots,H$, choose weights $\{\sigma_t^h\}_{t=1}^T$ by proceeding Algorithm \ref{alg:wi_off} with inputs $\{(x_t^h,a_t^h)\}_{t=1}^T, \cM, \alpha$;
\STATE Let 
\$
\bar{M} = \argmax_{M\in\cM}\sum_{t=1}^T\sum_{h=1}^H (\sigma_t^h)^{-1}\log P_M^h(x_t^{h+1}|x_t^h,a_t^h);
\$
\STATE Find $\beta$ and construct confidence set $\hcM$
\$
\Big\{&M\in\cM: \forall~h\in[H], ~\sum_{t=1}^T \frac{\log P_M^h(x_t^{h+1}|x_t^h,a_t^h)}{\sigma_t^h}\\
&\quad\ge \sum_{t=1}^T \frac{\log P_{\bar{M}}^h(x_t^{h+1}|x_t^h,a_t^h)}{\sigma_t^h} - \beta^2\Big\};
\$
\STATE Set $(\hat{\pi},\hM) = \argmax_{\pi\in\Pi}\min_{M\in\hcM}V_{M,\pi}^{1}$;
\STATE {\bf Output:} $\{\hat{\pi}^h\}_{h=1}^H$.
\end{algorithmic}
\end{algorithm}

In the offline learning setting, we introduce our algorithm, Corruption-Robust Pessimistic MLE, shown in Algorithm \ref{alg:Corruption-Robust Pessimistic MLE}. In this algorithm, we also estimate the model by maximizing the log-likelihood with uncertainty weights and constructing the confidence set $\hcM$. To address the challenge that weights cannot be computed iteratively in rounds, as in the online setting, we adopt the weight iteration algorithm (Algorithm \ref{alg:wi_off}) from \citet{ye2023corruptionb} to obtain an approximated uncertainty. They prove that this iteration will converge since the weights are monotonically increasing and upper-bounded. Moreover, as long as the weights $\sigma_t^h$ are in the same order as the truncated uncertainty, the learning remains robust to corruption. For completeness, we present the convergence result in Lemma \ref{lm:converge_weight}.

\begin{algorithm}[th]
\small
\caption{Uncertainty Weight Iteration}
\label{alg:wi_off}
\begin{algorithmic}[1]
\STATE {\bf Input:} $\{(x_t^h, a_t^h)\}_{t=1}^T$,$\cM$,$\alpha>0$.
\STATE {\bf Initialization:} $k=0,~\sigma_t^0=1$, $t=1,\ldots,T$.
\REPEAT
\STATE $k\leftarrow k+1$;
\STATE For $t=1,\ldots,T$, let
\$
\sigma_t^k \leftarrow  \max\Big(&1,\sup_{M,M'\in\cM}\\
&\frac{l_t^h(M,M')/\alpha}{\sqrt{\lambda + \sum_{s=1}^Tl_s^h(M,M')^2/\sigma_s^{k-1}}}\Big);
\$
\UNTIL $\max_{t\in[T]} \sigma_t^{k}/\sigma_t^{k-1} \le 2$;
\STATE {\bf Output:} $\{\sigma_t^k\}_{t=1}^T$.
\end{algorithmic}
\end{algorithm}

To consider the case where the offline data lacks full coverage, we proceed with pessimism and tackle a minimax optimization
\$
(\hpi,\hM) = \argmax_{\pi\in\Pi}\min_{M\in\hcM}V_{M,\pi}^{\pi},
\$
which shares a similar spirit with the model-free literature \citep{xie2021bellman}.

\subsection{Analysis}
In this subsection, we first provide an instance-dependence suboptimality upper bound. Then, we can obtain an instance-independent bound under the uniform coverage condition. Finally, we present a lower bound under the uniform coverage condition, which aligns with the corruption term in the upper bound.
\paragraph{Instance-Dependent Bound.}
To facilitate analysis, we define the offline variant of IR \eqref{eq:tv_information_ratio} as follows.
\begin{definition}\label{df:information coefficient}
Given an offline dataset $\cD$, for any initial state $x^1=x\in\cX$, the information coefficient with respect to space $\hcM$ is:
{\small
\$
&\IC^{\sigma}(\lambda,\hcM,\cD)\\
&= \max_{h\in[H]} \E_{\pi_*}\bigg[\sup_{M,M'\in\hcM} \frac{T\cdot l^h(M,M',z^h)^2/\sigma^h(z^h)}{\lambda + \sum_{t=1}^Tl_t^h(M,M')^2/\sigma_t^h}\bigg],
\$}
where we define $l^h(M,M',z^h)=\tv(P_M^h(\cdot|z^h)\|P_{M'}^h(\cdot|z^h))$, $\E_{\pi_*}$ is taken with respect to $(x^h,a^h)$ induced by policy $\pi_*$ for the uncorrupted transition, and we define
{\small
\$
\sigma^h(z^h) = \max\bigg\{1, \sup_{M,M'\in\hcM}\frac{l^h(M,M',z^h)/\alpha}{\sqrt{\lambda + \sum_{t=1}^Tl_t^h(M,M')^2/\sigma_t^h}}\bigg\}.
\$}
\end{definition}
This coefficient depicts the information ratio of the trajectory for the optimal policy $\pi_*$ and the dataset. Now, we can demonstrate an instance-dependent bound, where uniform data coverage is not required.
\begin{theorem}[Instance-Dependent Bound]\label{th:Instance-Dependent Bound}
Suppose that Assumption \ref{as:Bounded Condition} holds. Under Algorithm \ref{alg:Corruption-Robust Pessimistic MLE}, if we choose $\lambda=\log |\cM|$, $\alpha=\sqrt{\log|\cM|\log^2 B}/C$ and
\$
\beta= 5\sqrt{\log(|\cM|/\delta)\log^2 B} + 7\alpha C.
\$
Then, with probability at least $1-2\delta$, the sub-optimality $\subopt(\hat\pi, x)$ is bounded by
{\small
\$
\tcO\bigg(&H\log B\sqrt{\frac{\IC^{\sigma}(\lambda,\hcM,\cD)\log|\cM|}{T}}+ \frac{\IC^{\sigma}(\lambda,\hcM,\cD)CH}{T}\bigg),
\$}
where the weighted information coefficient $\IC^{\sigma}(\lambda,\hcM,\cD)$ is defined in Definition \ref{df:information coefficient}.
\end{theorem}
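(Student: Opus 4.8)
The plan is to follow the three-step template of the online analysis, replacing optimism/regret by pessimism/suboptimality and replacing the eluder-dimension bound by a coverage bound expressed through $\IC^{\sigma}$.

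\textbf{Step 1 (validity of the confidence set).} I would first prove the offline analogue of Lemma~\ref{lm:confidence_set}: with probability at least $1-\delta$, $M_*\in\hcM$, and for every $M\in\hcM$ and $h\in[H]$ the in-sample weighted error $\sum_{t=1}^{T}\tv\big(P_M^h(\cdot|z_t^h)\|P_{M_*}^h(\cdot|z_t^h)\big)^2/\sigma_t^h$ is $\tcO(\beta^2)$. The mechanism is the weighted-MLE concentration sketched in Section~\ref{ss:Uncertainty Weighting for Model-based RL}: for fixed $h$ and each $M\in\cM$, a Freedman-type bound controls $\sum_t(\sigma_t^h)^{-1}\log\big(\md P_M^h/\md P_{M_*}^h\big)(x_t^{h+1}|z_t^h)$ by $-c\sum_t(\sigma_t^h)^{-1}\tv(P_M^h\|P_{M_*}^h)^2+\tcO(\log(|\cM|/\delta)\log^2 B)+\sum_t c_t^h U_t(z_t^h)\beta/\sigma_t^h$; then $\sigma_t^h\ge U_t(z_t^h)/\alpha$ and $\sum_t c_t^h\le C$ collapse the last term to $\alpha\beta C$, and the stated choice of $\beta$ makes the total fluctuation at most $\beta^2$, so $M_*$ satisfies the defining inequality of $\hcM$; a triangle inequality among $P_M^h,P_{\bar M}^h,P_{M_*}^h$ then yields the in-sample bound for all $M\in\hcM$. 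Because the weights are produced by the iteration in Algorithm~\ref{alg:wi_off}, I would invoke its convergence (Lemma~\ref{lm:converge_weight}) so that each $\sigma_t^h$ sits within a constant factor of the self-consistent weight, which is all the concentration step needs.

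\textbf{Step 2 (pessimism decomposition).} Let $\tilde M:=\argmin_{M\in\hcM}V_M^{\pi_*}(x^1)$. Since $M_*\in\hcM$ and $(\hpi,\hM)$ solves the max--min problem, $V_{M_*}^{\hpi}(x^1)\ge\min_{M\in\hcM}V_M^{\hpi}(x^1)\ge\min_{M\in\hcM}V_M^{\pi_*}(x^1)=V_{\tilde M}^{\pi_*}(x^1)$, hence $\subopt(\hpi,x^1)=V_{M_*}^{\pi_*}(x^1)-V_{M_*}^{\hpi}(x^1)\le V_{M_*}^{\pi_*}(x^1)-V_{\tilde M}^{\pi_*}(x^1)$. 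Expanding this difference by the standard value-difference identity as $\sum_{h=1}^{H}\E_{\pi_*}^{M_*}\big[(\E^{M_*}-\E^{\tilde M})[V_{\tilde M}^{\pi_*,h+1}(x^{h+1})\mid x^h,a^h]\big]$ and using $V_{\tilde M}^{\pi_*,h+1}\in[0,1]$ (normalized rewards), each summand is at most $2\,\tv\big(P_{\tilde M}^h(\cdot|z^h)\|P_{M_*}^h(\cdot|z^h)\big)$, so $\subopt(\hpi,x^1)\le 2\sum_{h=1}^{H}\E_{\pi_*}\big[l^h(\tilde M,M_*,z^h)\big]$, the expectation being over the uncorrupted trajectory of $\pi_*$ exactly as in Definition~\ref{df:information coefficient}.

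\textbf{Step 3 (from on-policy error to $\IC^{\sigma}$; the main obstacle).} Fix $h$, write $S:=\lambda+\sum_{t=1}^{T}l_t^h(\tilde M,M_*)^2/\sigma_t^h$ (a data-dependent constant, $\tcO(\beta^2)$ by Step~1), and let $\sigma^h(\cdot)$ be the test-point weight of Definition~\ref{df:information coefficient}. I would split $\E_{\pi_*}[l^h(\tilde M,M_*,z^h)]$ over $\{\sigma^h(z^h)=1\}$ and $\{\sigma^h(z^h)>1\}$. On the first event, Cauchy--Schwarz and $\sigma^h\ge1$ give $\E_{\pi_*}[l^h\mathbbm{1}[\sigma^h=1]]\le\big(\E_{\pi_*}[(l^h)^2/\sigma^h(z^h)]\big)^{1/2}$; pulling the constant $S$ out and applying the definition of $\IC^{\sigma}$ bounds this by $\tcO(\sqrt{S\,\IC^{\sigma}/T})=\tcO(\beta\sqrt{\IC^{\sigma}/T})$. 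On the second event, the defining supremum in $\sigma^h(z^h)$ yields $l^h(\tilde M,M_*,z^h)\le\alpha\,\sigma^h(z^h)\sqrt S$, and the key self-bounding observation is that plugging the supremizing pair into the $\IC^{\sigma}$ integrand makes that integrand dominate $\alpha^2\sigma^h(z^h)$ there, so $\E_{\pi_*}[\sigma^h(z^h)\mathbbm{1}[\sigma^h>1]]\le\IC^{\sigma}/(\alpha^2T)$; together with $\beta/\alpha=\tcO(C)$ (from the choices of $\beta,\alpha$) this part is $\tcO(C\,\IC^{\sigma}/T)$. Summing over $h$ and substituting $\beta=\tcO(\log B\sqrt{\log|\cM|}+\alpha C)$ with $\alpha C=\sqrt{\log|\cM|\log^2 B}$ gives the claimed $\tcO\big(H\log B\sqrt{\IC^{\sigma}\log|\cM|/T}+CH\,\IC^{\sigma}/T\big)$. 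I expect Step~3 to be the delicate part — in particular the self-bounding estimate $\E_{\pi_*}[\sigma^h(z^h)\mathbbm{1}[\sigma^h>1]]\lesssim\IC^{\sigma}/(\alpha^2T)$, which is what produces the linear (not square-root) dependence on $\IC^{\sigma}$ and on $C$ in the corruption term — whereas Step~1's weighted-MLE concentration, though technically heavy, is essentially the online Lemma~\ref{lm:confidence_set}.
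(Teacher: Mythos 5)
Your proposal is correct and follows essentially the same route as the paper's proof: the offline confidence-set lemma via weighted-MLE concentration (the paper's Lemma~\ref{lm:off_Confidence Set}), a pessimism/value-difference decomposition into on-policy TV errors under $\pi_*$, and the split over $\{\sigma^h(z^h)=1\}$ versus $\{\sigma^h(z^h)>1\}$ with the self-bounding use of the weight definition to extract $\IC^{\sigma}/(\alpha T)$. The only cosmetic difference is that you compare $\tilde M$ directly to $M_*$, whereas the paper routes both $\hM$ and $M_*$ through the MLE estimator $\bar M$ via the triangle inequality; both are valid since all these models lie in $\hcM$ on the good event.
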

This bound depends on the information coefficient with uncertainty weights along the trajectory induced by $\pi_*$. To remove the weights $\sigma_t^h$ from the suboptimality bound, we need a stronger coverage condition. 

We present the proof in Appendix \ref{ss:Proof of Theorem Instance-Dependent Bound}, and highlight key points here. The proof also contains three steps. First of all, with pessimism, the suboptimality can be decomposed as
\$
\E_{\pi_*}\sum_{h=1}^H\cE^h(\hM,z^h) \le \E_{\pi_*}\sum_{h=1}^H \tv(P_{\hM}^h(\cdot|z^h)\|P_{ M_*}^h(\cdot|z^h)).
\$
Given the proximity of the weights $\sigma_t^h$ to uncertainty measures, we can modify Lemma \ref{lm:confidence_set} to achieve $\beta=\Theta(\sqrt{\log(|\cM|/\delta)\log^2 B} + \alpha C)$ for $\delta>0$. This modification allows us to establish the high-probability inclusion of $M_*$ in $\hcM$. Ultimately, by integrating the initial two steps and leveraging the relationship between uncertainty weights and the information coefficient, we derive the bound.

\paragraph{Instance-Independent Bound.}
To obtain an instance-independent upper bound, we follow \citet{ye2023corruptionb} to introduce a TV-norm-based version of the coverage condition for the dataset.
\begin{assumption}[Uniform Data Coverage]\label{as:offline_upper}
Define the measure for any $M,M'\in\cM$, $\rho^h(M,M')=\sup_{z}\tv(P^h_{M}(\cdot|z)\|P^h_{M'}(\cdot|z))$.
There exists a constant $\Cov(\cM,\cD)$ such that for any $h\in[H]$, and two distinct $M,M'\in\cM$,
{\small
\#\label{eq:condition of lm:Connections between Weighted and Unweighted Coefficient}
&\frac{1}{T}\sum_{t=1}^T\tv(P_{M}^h(\cdot|z_t^h)\|P_{M'}^h(\cdot|z_t^h))^2\notag \ge \Cov(\cM,\cD)\big(\rho^h(M,M')\big)^2.
\#}
\end{assumption}
This is a generally adopted condition in offline literature \citep{duan2020minimax,
wang2020statistical,
xiong2022nearly,di2023pessimistic}. Intuitively, this condition depicts that the dataset explores each direction of the space. In linear case where $\cM^h=\{\langle \nu^h(M,x^{h+1}),\phi^h(\cdot) \rangle : \cX\rightarrow\rR\}$, according to Lemma \ref{lm:minimum eigenvalue condition and condition for W and UW connections}, this assumption is implied by the condition that the covariance matrix $T^{-1}\sum_{t=1}^T \phi(z_t^h)\phi(z_t^h)^\top$ is strictly positive definite, and $\Cov(\cM,\cD)$ has a $\Theta(d^{-1})$ dependence on $d$. Under this condition, the suboptimality upper bound yielded by Algorithm \ref{alg:Corruption-Robust Pessimistic MLE} is guaranteed.
\begin{theorem}[Instance-Independent Bound]\label{th:offline_upper}
Supposing that Assumption \ref{as:Bounded Condition} and \ref{as:offline_upper} hold, if we choose $\lambda=\log |\cM|$, $\alpha=\sqrt{\log|\cM|\log^2 B}/C$ and
\$
\beta= 5\sqrt{\log(|\cM|/\delta)\log^2 B} + 7\alpha C.
\$
Then, with probability at least $1-2\delta$, the sub-optimality $\subopt(\hat\pi, x)$ of Algorithm \ref{alg:Corruption-Robust Pessimistic MLE} is bounded by
\$
\tcO\bigg(H\log B\sqrt{\frac{\log |\cM|}{T\Cov(\cM,\cD)}} + \frac{CH}{T\Cov(\cM,\cD)}\bigg).
\$
\end{theorem}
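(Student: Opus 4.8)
The plan is to derive Theorem~\ref{th:offline_upper} as a corollary of the instance-dependent bound in Theorem~\ref{th:Instance-Dependent Bound}, so the whole task reduces to controlling the weighted information coefficient $\IC^{\sigma}(\lambda,\hcM,\cD)$ by a quantity depending only on $T$ and $\Cov(\cM,\cD)$. First I would recall that Theorem~\ref{th:Instance-Dependent Bound} already gives, with probability at least $1-2\delta$,
\$
\subopt(\hpi,x)=\tcO\bigg(H\log B\sqrt{\frac{\IC^{\sigma}(\lambda,\hcM,\cD)\log|\cM|}{T}}+\frac{\IC^{\sigma}(\lambda,\hcM,\cD)CH}{T}\bigg),
\$
so it suffices to show $\IC^{\sigma}(\lambda,\hcM,\cD)=\tcO(1/\Cov(\cM,\cD))$ under Assumption~\ref{as:offline_upper}, after which the stated bound follows by substitution.

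The key step is the bound on $\IC^{\sigma}$. Fix $h$ and $M,M'\in\hcM\subseteq\cM$. By Definition~\ref{df:information coefficient}, the relevant expression is
$\E_{\pi_*}\big[ T\, l^h(M,M',z^h)^2/\sigma^h(z^h)\,/\,(\lambda+\sum_{t=1}^T l_t^h(M,M')^2/\sigma_t^h)\big]$.
Since $\sigma^h(z^h)\ge 1$ and $l^h(M,M',z^h)\le\rho^h(M,M')$ pointwise, the numerator is at most $T(\rho^h(M,M'))^2$. For the denominator, I would use Assumption~\ref{as:offline_upper}: because each $\sigma_t^h\le \sigma_{\max}$ for the upper bound $\sigma_{\max}$ guaranteed by the convergence of Algorithm~\ref{alg:wi_off} (Lemma~\ref{lm:converge_weight}, with $\sigma_{\max}$ only polylogarithmic in the relevant parameters, hence absorbed into $\tcO$), we get $\sum_{t=1}^T l_t^h(M,M')^2/\sigma_t^h \ge \sigma_{\max}^{-1}\sum_{t=1}^T l_t^h(M,M')^2 \ge \sigma_{\max}^{-1}\, T\,\Cov(\cM,\cD)\,(\rho^h(M,M'))^2$. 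Combining numerator and denominator and taking the supremum over $M,M'$ then the max over $h$ yields $\IC^{\sigma}(\lambda,\hcM,\cD)\le \sigma_{\max}/\Cov(\cM,\cD)=\tcO(1/\Cov(\cM,\cD))$; the additive $\lambda$ in the denominator only helps. One subtlety is the degenerate case $\rho^h(M,M')=0$, where the ratio should be read as $0$ (since then $l^h\equiv 0$), so the supremum is well defined.

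Plugging $\IC^{\sigma}(\lambda,\hcM,\cD)=\tcO(1/\Cov(\cM,\cD))$ into the instance-dependent bound, the first term becomes $\tcO\big(H\log B\sqrt{\log|\cM|/(T\Cov(\cM,\cD))}\big)$ and the second $\tcO\big(CH/(T\Cov(\cM,\cD))\big)$, which is exactly the claimed bound; the parameter choices $\lambda=\log|\cM|$, $\alpha=\sqrt{\log|\cM|\log^2 B}/C$, $\beta=5\sqrt{\log(|\cM|/\delta)\log^2 B}+7\alpha C$ are inherited verbatim from Theorem~\ref{th:Instance-Dependent Bound}, and the probability $1-2\delta$ is unchanged.

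The main obstacle I anticipate is not the algebra above but making the use of the upper bound $\sigma_{\max}$ on the weights rigorous: one must verify that the weights $\{\sigma_t^h\}$ produced by Algorithm~\ref{alg:wi_off} are indeed uniformly bounded by a polylogarithmic factor (via Lemma~\ref{lm:converge_weight}) and that this same bound may be used simultaneously for \emph{all} pairs $M,M'$ appearing in the supremum defining $\IC^{\sigma}$ — i.e. that the weights do not depend on the particular pair. Since Algorithm~\ref{alg:wi_off} fixes a single weight sequence per stage $h$ before the supremum is taken, this is fine, but the write-up needs to state it carefully; a secondary, purely bookkeeping point is checking that $\hcM\subseteq\cM$ so that Assumption~\ref{as:offline_upper}, stated for $M,M'\in\cM$, applies to the models in $\hcM$.
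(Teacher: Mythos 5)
Your high-level plan is the same as the paper's: invoke Theorem~\ref{th:Instance-Dependent Bound} and then show $\IC^{\sigma}(\lambda,\hcM,\cD)\le \tcO(1/\Cov(\cM,\cD))$ (the paper's Lemma~\ref{lm:off Connections between Weighted and Unweighted Coefficient}). However, your argument for that key step has a genuine gap. You bound the numerator by dropping $\sigma^h(z^h)\ge 1$ and the denominator by $\sum_t l_t^h(M,M')^2/\sigma_t^h \ge \sigma_{\max}^{-1}T\Cov(\cM,\cD)\rho^h(M,M')^2$, which yields $\IC^{\sigma}\le \sigma_{\max}/\Cov(\cM,\cD)$, and you then assert that $\sigma_{\max}$ is polylogarithmic "via Lemma~\ref{lm:converge_weight}." Lemma~\ref{lm:converge_weight} gives no such bound: it only sandwiches the output weights between $\max\{1,\psi(z_t)/2\}$ and $\max\{1,\psi(z_t)\}$, and $\psi(z_t)\le (\alpha\sqrt{\lambda})^{-1}=C/(\log|\cM|\log B)$ with the prescribed $\alpha,\lambda$, so the weights can be as large as order $C$ up to logarithms. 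Even the sharper bound one can extract from Assumption~\ref{as:offline_upper}, namely $\sigma_{\max}\le\max\{1,1/(\alpha^2 T\Cov(\cM,\cD))\}$, is polynomially large whenever $C>\sqrt{T\Cov(\cM,\cD)\log|\cM|\log^2 B}$, and in that regime your bound $\IC^{\sigma}\le \sigma_{\max}/\Cov(\cM,\cD)$ overshoots $1/\Cov(\cM,\cD)$ by the factor $1/(\alpha^2 T\Cov(\cM,\cD))$, so the stated suboptimality bound does not follow.

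The paper closes exactly this loss with a self-bounding argument that you discard when you replace $\sigma^h(z^h)$ by $1$ in the numerator. Writing $\psi(z^h)$ for the ratio being bounded, the definition of $\sigma^h(z^h)$ gives $\sigma^h(z^h)\ge \psi(z^h)/\alpha^2$, so the numerator is at most $T\,l^h(M,M',z^h)^2\,\alpha^2/\psi(z^h)$; combined with $1/\sigma_t^h\ge \alpha^2 T\Cov(\cM,\cD)$ in the denominator, the two factors of $\alpha^2$ cancel and one obtains $\psi(z^h)\le \psi(z^h)^{-1}(T\Cov(\cM,\cD))^{-2}$, hence $\psi(z^h)\le 1/(T\Cov(\cM,\cD))$ and $\IC^{\sigma}\le 1/\Cov(\cM,\cD)$ with no extra factor. (The easy regime $\alpha^2 T\Cov(\cM,\cD)\ge 1$, where all weights equal $1$, is handled exactly as you describe.) Your remaining observations — that the same weight sequence serves all pairs $M,M'$, that $\hcM\subseteq\cM$ so Assumption~\ref{as:offline_upper} applies, and that the degenerate case $\rho^h(M,M')=0$ is harmless — are correct but do not repair the missing cancellation.
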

We present the proof in Appendix \ref{ss:Proof of Theorem offline_upper}.
Additionally, when the corruption level is unknown, we can take the hyper-parameter $\alpha$ as a tuning parameter and attain a similar result as Theorem \ref{th: Unknown Corruption Level}.

\begin{remark}[Comparison between Online and Offline Results]
Both of the online and offline bounds are composed of the main term (uncorrupted error) and the corruption error, and the corruption error is sublinear as long as the corruption level $C$ is sublinear w.r.t sample size $T$. The main difference is that the online bound is affected by the eluder dimension and the offline bound is affected by the covering coefficient. This difference comes from the ability to continue interacting with the environment. In the online setting, the agent can explore the model space extensively, thus, the final regret bound is controlled by the complexity of the model space.
In contrast, exploration in the offline setting is constrained by a given dataset, thus the final bound is determined by the coverage quality of the dataset.
\end{remark}

Regardless of the different conditions between online and offline cases, this bound roughly
aligns with the online upper bound. Specifically, achieving an $\epsilon$-suboptimality under both online and offline algorithms requires a $\tcO(\epsilon^{-2}H\log B\sqrt{\log|\cM|\text{dim}} + \epsilon^{-1}CH\text{dim})$ sample complexity. The $\text{dim}$ denotes the eluder dimension for the online setting and inverse of coverage $(\Cov(\cM,\cD))^{-1}$ for the offline setting. Particularly, for the linear case, the $\text{dim}=d$ for both settings. Additionally, the corruption component of the bound closely corresponds to the demonstrated lower bound below.

\begin{theorem}[Lower Bound]\label{th:offline_lower}
For a given corruption level $C$, data coverage coefficient $\Cov(\cM,\cD)$, dimension $d>3$ and episode length $H>2$, it the data size $T$ satisfied $T>\Omega \big(\Cov(\cM,\cD)/C\big)$, there exist a hard to learn instance such that the suboptimality for any algorithm is lower bounded by
\begin{align*}
    \mathbb{E}[\subopt(\hat\pi, x)]\ge  \Omega\Big(\frac{C}{\Cov(\cM,\cD) T}\Big). 
\end{align*}
\end{theorem}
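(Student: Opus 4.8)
\textbf{Proof proposal for Theorem \ref{th:offline_lower}.}
The plan is a two-instance (Le Cam-type) argument in which the adversary is used to make the two instances \emph{exactly} indistinguishable from the learner's viewpoint. I would build a pair of linear MDPs $M_0,M_1\in\cM$ by adapting the hard construction behind Theorem \ref{th:online_lower} to the offline setting: both models have identical transitions and rewards on stages $1,\dots,h^*-1$ for a single informative stage $h^*$ (room for which uses $H>2$), and they differ \emph{only} at one informative state--action pair $z^*=(x^*,a^*)$ at stage $h^*$, where the probability of reaching a ``good'' reward-bearing absorbing state is $p$ under $M_0$ and $p(1+\epsilon)$ under $M_1$, for a perturbation $\epsilon$ chosen below and $p\le 1/2$. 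The feature map is designed so that $z^*$ occupies one coordinate direction while $d-1$ ``dummy'' state--action pairs (using $d>3$) fill the remaining directions, and the behavior distribution generating $\cD$ is calibrated so that Assumption \ref{as:offline_upper} holds with coefficient exactly (up to constants) the prescribed value $\Cov(\cM,\cD)\le 1/d$, by placing a $\Cov(\cM,\cD)$ fraction of the mass on $z^*$ at stage $h^*$ (so $N:=\Cov(\cM,\cD)\,T$ trajectories visit $z^*$ in expectation) and distributing the rest evenly over the dummy directions. The optimal policies of $M_0$ and $M_1$ disagree on the action taken at $x^*$, and the value gap between the right and wrong choice is $\Theta(\epsilon)$.

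Next I would specify the adversary. Since $z^*$ is the only place $M_0$ and $M_1$ disagree, on every trajectory $t$ that visits $z^*$ at stage $h^*$: if the ground truth is $M_0$, the adversary multiplies the transition probability into the good state by $(1+\epsilon)$ (so $c_t^{h^*}\le\epsilon$, legitimate since $P_0$ and $P_1$ share support); if the ground truth is $M_1$, it does nothing. Because the models agree everywhere else and the corruption equalizes them at $z^*$, the joint law of the entire corrupted dataset $\cD$ (including the visited $z_t^h$, which are identically distributed because the earlier stages coincide) is \emph{identical} under ``$M_0$ with adversary'' and ``$M_1$ without adversary''. The total corruption injected is $\sum_{t=1}^T c_t^{h^*}\le \epsilon\,N=\epsilon\,\Cov(\cM,\cD)\,T$ (concentrating around this value by a Chernoff bound on the number of visits to $z^*$), so to respect the budget it suffices to take $\epsilon=\Theta\!\big(C/(\Cov(\cM,\cD)\,T)\big)$; the hypothesis on $T$ in the statement guarantees this $\epsilon$ is a legitimate probability perturbation bounded away from $0$ and $1$.

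Finally I would conclude in the standard way: since the corrupted data has the same law in both worlds, the output policy $\hpi$ of \emph{any} algorithm has the same law under both, and any fixed policy makes the wrong choice at $z^*$ for at least one of $M_0,M_1$ (the two ``wrong actions'' are opposite), so $\subopt(\hpi,x)\ge\Omega(\epsilon)$ for that model; taking expectations and then the max over the two worlds yields $\max_{i\in\{0,1\}}\E_{M_i}[\subopt(\hpi,x)]\ge\Omega(\epsilon)=\Omega\!\big(C/(\Cov(\cM,\cD)\,T)\big)$, and the maximizing instance is the claimed hard instance. I expect the main obstacle to be the simultaneous calibration of the feature map and behavior distribution so that the coverage coefficient of Assumption \ref{as:offline_upper} equals the prescribed $\Cov(\cM,\cD)$ \emph{while $z^*$ is visited only the minimal $\Theta(\Cov(\cM,\cD)\,T)$ times} --- this is exactly what ties the per-visit corruption budget to $C/(\Cov(\cM,\cD)\,T)$ rather than $C/T$, and verifying it requires checking that the supremum in $\rho^{h^*}(M_0,M_1)$ is attained at $z^*$ (and not inflated by the dummy directions, where $M_0=M_1$) and that the coverage inequality also holds with the same constant for every other pair in $\cM$. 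A secondary technicality is absorbing the randomness in the number of visits $N$, which I would handle by conditioning on the event $N\le 2\Cov(\cM,\cD)\,T$ or by using a deterministic/contextual behavior design.
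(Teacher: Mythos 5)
Your proposal is correct and matches the paper's argument in its essential mechanism: an adversary that corrupts the transitions at the rarely-covered informative state--action pairs so the candidate models become statistically indistinguishable, with the number of informative visits calibrated to $\Theta(\Cov(\cM,\cD)\,T)$ so that the per-visit budget forces the detectable gap down to $\Theta\big(C/(\Cov(\cM,\cD)\,T)\big)$, plus the same conditioning on a high-probability visit-count event. The only differences are cosmetic --- the paper uses a tabular chain with $d-1$ candidate optimal actions all corrupted toward a common null model $P_0^h$ and a random-guessing argument over those $d-1$ hypotheses, whereas you use a two-point Le Cam argument in a linear parametrization --- and both yield the same $\Omega(\eta)$ conclusion with $\eta=\Theta\big(C/(\Cov(\cM,\cD)\,T)\big)$.
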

\begin{remark}
    The offline lower bound in Theorem \ref{th:offline_lower} matches the corruption term $CH/\big(T\Cov(\cM,\cD)\big)$ in our upper bound, up to a factor of $H$. This result demonstrates that our algorithm achieves near-optimal suboptimality guarantee for defending against adversarial attacks. 
\end{remark}
In comparison to the online lower bound presented in Theorem \ref{th:online_upper}, a discrepancy of $H$ emerges, and we believe this variance comes from the data coverage coefficient $\Cov(\cM,\cD)$. In general, meeting the uniform data coverage assumption becomes more challenging with an increased episode length $H$, resulting in a smaller data coverage coefficient $\Cov(\cM,\cD)$. Furthermore, for the hard-to-learn instances that constructed in the lower bound, the coefficient exhibits an inverse dependence on the episode length $H$, specifically, $\Cov(\cM,\cD)=O(1/H)$. Under this situation, our lower bound does not incur additional dependencies on $H$ and we leave it as a potential future work.
More details can be found in Appendix \ref{sec-offline-lower}.

\section{Conclusion and Future Work}
We delve into adversarial corruption for model-based RL, encompassing both online and offline settings. Our approach involves quantifying corruption by summing the differences between true and corrupted transitions. In the online setting, our algorithm, CR-OMLE, combines optimism and weighted log-likelihood principles, with weights employing a TV-norm-based uncertainty. Our analysis yields a $O(\sqrt{T}+C)$ regret upper bound for CR-OMLE and establishes a new lower bound for MDPS with transition corruption, aligning in terms of corruption-dependent terms. In the offline realm, we present CR-PMLE, a fusion of pessimism and MLE with uncertainty weights, offering theoretical assurances through instance-dependent and instance-independent upper bounds. Notably, the instance-independent bound necessitates a uniform coverage condition, where the corruption-dependent term nearly matches the lower bound. We anticipate that our findings will contribute theoretical insights to address practical challenges in model-based RL, particularly for adversarial corruption.

Several future works are worth exploring: (1) Extending the uncertainty weighting technique to representation learning problems in RL \citep{jiang2017contextual,liu2022partially} would be an intriguing direction; and (2) In situations where the corruption level $C$ is unknown, our method relies on an optimistic estimation of corruption denoted by $\bar{C}$. There is a need for further investigation into the applicability of the model selection technique introduced by \citet{wei2022model} to the realm of model-based RL.

\section*{Acknowledgements}
The authors would like to thank the anonymous reviewers
for many insightful comments and suggestions. JH and QG are supported by the National Science Foundation CAREER Award 1906169 and research fund from UCLA-Amazon Science Hub. The views and conclusions contained in this paper are those of the authors and should not be interpreted as representing any funding agencies.


\section*{Impact Statement}
This paper introduces research aimed at enhancing the resilience of model-based reinforcement learning (RL) when confronted with adversarial corruption and malicious attacks. The enhanced robustness achieved by this work not only enhances the reliability of RL systems but also contributes to the development of more secure and trustworthy AI technologies. It will also promote greater public trust in AI applications, thereby promoting their responsible and ethical deployment across various domains.

\bibliography{ref.bib}
\bibliographystyle{icml2024}

\newpage
\appendix
\onecolumn
\section{Proof of Online Setting}

\subsection{Proof of Theorem \ref{th:online_upper}}\label{ss:Proof of Theorem online_upper}
We delay the proof of the supporting lemmas in Appendix \ref{ss:Lemmas for Online Setting}

\begin{lemma}[Bellman Decomposition]\label{lm:bellman_decomposition}
For any $M\in\cM$ and any $t\in[t]$, we have
\$
    V_M^1(x^1)-V_{\pi_M}^1(x^1)\le\E^t_{\pi_M}\sum_{h=1}^H\big[\cE^h(M,x^h,a^h) + c_t^h(x^h,a^h)\big],
\$
and 
\$
    V_M^1(x^1)-V_{\pi_M}^1(x^1)\ge\E^t_{\pi_M}\sum_{h=1}^H\big[\cE^h(M,x^h,a^h) - c_t^h(x^h,a^h)\big].
\$
\end{lemma}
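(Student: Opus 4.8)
The plan is to derive both inequalities from one exact value-difference (``simulation lemma''-type) identity and then bound the error term two-sidedly. Write $\pi=\pi_M$, let $\E^t_\pi[\cdot]$ denote expectation over a trajectory generated by the corrupted transitions $P_t$ under $\pi$ started from $x^1$, and set $g^{h}(x):=V_M^h(x)-V_\pi^h(x)$, so that $V_\pi^h$ is evaluated under the true transition $P_*$ and $g^{H+1}\equiv 0$.

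First I would establish, by backward induction on $h$, the identity
\[
g^{h}(x^h)=\E^t_\pi\Big[\sum_{h'=h}^{H}\cE^{h'}(M,x^{h'},a^{h'})+\sum_{h'=h}^{H}\big(\E^*-\E^t\big)\big[g^{h'+1}\,\big|\,x^{h'},a^{h'}\big]\ \Big|\ x^h\Big].
\]
For the inductive step, write $g^h(x^h)=\E_{a\sim\pi(\cdot|x^h)}\big[Q_M^h(x^h,a)-Q_\pi^h(x^h,a)\big]$ using the Bellman equations for $Q_M^h$ (under $P_M$) and $Q_\pi^h$ (under $P_*$); cancel the shared reward; apply $\cE^h(M,x^h,a)=(\E^M-\E^*)[V_M^{h+1}\mid x^h,a]$ to get $Q_M^h-Q_\pi^h=\cE^h(M,x^h,a)+\E^*[g^{h+1}\mid x^h,a]$; and split $\E^*[g^{h+1}\mid x^h,a]=\E^t[g^{h+1}\mid x^h,a]+(\E^*-\E^t)[g^{h+1}\mid x^h,a]$. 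The piece $\E_{a\sim\pi}\E^t[g^{h+1}\mid x^h,a]$ is one step along the corrupted trajectory; feeding the induction hypothesis at level $h+1$ into it reproduces the tail sums, while the other two pieces supply the $h$-th summands. Taking $h=1$ yields
\[
V_M^1(x^1)-V_\pi^1(x^1)=\E^t_\pi\Big[\sum_{h=1}^{H}\cE^h(M,x^h,a^h)\Big]+\E^t_\pi\Big[\sum_{h=1}^{H}\big(\E^*-\E^t\big)\big[g^{h+1}\,\big|\,x^h,a^h\big]\Big].
\]

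Second, I would control each drift term pointwise. Since the rewards are nonnegative with $\sum_h r^h\in[0,1]$, both $V_M^{h+1}$ and $V_\pi^{h+1}$ lie in $[0,1]$, so $\|g^{h+1}\|_\infty\le 1$. By Definition~\ref{def:Corruption Level}, $P_t^h(\cdot\mid x^h,a^h)$ and $P_*^h(\cdot\mid x^h,a^h)$ share support and $|P_t^h/P_*^h-1|\le c_t^h(x^h,a^h)$ there, hence
\[
\big|\big(\E^*-\E^t\big)[g^{h+1}\mid x^h,a^h]\big|\le\int|P_*^h-P_t^h|\,|g^{h+1}|\le c_t^h(x^h,a^h)\,\E^*\big[|g^{h+1}|\mid x^h,a^h\big]\le c_t^h(x^h,a^h).
\]
Substituting into the identity and using monotonicity of $\E^t_\pi[\cdot]$ gives $V_M^1(x^1)-V_\pi^1(x^1)\le\E^t_\pi\big[\sum_h(\cE^h(M,x^h,a^h)+c_t^h(x^h,a^h))\big]$ and, with $-c_t^h$, the matching lower bound.

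The argument is mostly routine; the only delicate point is to keep the drift term as the single difference $g^{h+1}=V_M^{h+1}-V_\pi^{h+1}$ throughout, because $\|g^{h+1}\|_\infty\le 1$ — bounding $V_M^{h+1}$ and $V_\pi^{h+1}$ separately would cost a spurious factor of two in front of $c_t^h$. A secondary point is to invoke the equal-support clause of Definition~\ref{def:Corruption Level} so that the density ratio $P_t^h/P_*^h$ used in the drift bound is well defined; this equal-support requirement is also the reason the corruption enters averaged along the corrupted occupancy measure $\E^t_{\pi_M}$ rather than only at a single realized state-action pair.
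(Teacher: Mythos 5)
Your proof is correct and follows essentially the same route as the paper: a one-step decomposition of $V_M^h-V_{\pi_M}^h$ into the Bellman error $\cE^h$, a corruption drift term, and a recursion under the corrupted transition $\E^t$, iterated over $h$, with the drift controlled via the ratio definition of $c_t^h$ and boundedness of the values. The only cosmetic difference is that you bound the drift on the combined difference $g^{h+1}=V_M^{h+1}-V_{\pi_M}^{h+1}$ directly by $c_t^h$, whereas the paper splits it into two pieces and bounds each by $\tv(P_t^h\|P_*^h)\le c_t^h/2$; both yield the same constant, so the "spurious factor of two" you warn about does not actually arise in the paper's separate-bounding version.
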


We demonstrate the relationship between the weighted IR defined in \eqref{eq:tv_information_ratio_weighted} and eluder dimension in Definition \ref{df:tv-Eluder Dimension}. For simplicity, we consider a single step in the following lemma.

\begin{lemma}[Relation between Information Ratio and Eluder Dimension]\label{lm:Relation between Information Ratio and Eluder Dimension}
Consider a model class $\cM$ and sample set $\cS_T=\{z_t\}_{t=1}^T$. Let $\alpha=\sqrt{\log|\cM|}/C$ and $\lambda=\log|\cM|$. The square sum of weighted IR with weight established in Algorithm \ref{alg:CR-OMLE}
\$
I_{\sigma}(\lambda,\cM,\cS_t) = \sup_{M\in\cM_t}\min\bigg\{1,\frac{ \tv(P_{M}(\cdot|z_t)\|P_{\bar M_t}(\cdot|z_t))/\sigma_t^{1/2}}{\sqrt{\lambda+\sum_{s=1}^{t-1}\tv(P_{M}(\cdot|z_s)\|P_{\bar M_t}(\cdot|z_s))^2/\sigma_s}}\bigg\},
\$
\$
\sup_{\cS_T}\dim_{E,\sigma}(\lambda,\cM,\cS_T) =  \sup_{\cS_T}\sum_{t=1}^T I_{\sigma}(\lambda,\cM,\cS_t)^2 = \tO(\ED(\cM,\sqrt{\lambda/T}))
\$
\end{lemma}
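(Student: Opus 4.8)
\textbf{Proof Proposal for Lemma \ref{lm:Relation between Information Ratio and Eluder Dimension}.}

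The plan is to bound the number of rounds $t$ for which the (squared, weighted) information ratio $I_\sigma(\lambda,\cM,\cS_t)^2$ is large, using a potential/pigeonhole argument against the $\tv$-eluder dimension. First I would fix a threshold level: for a dyadic value $\epsilon > 0$, let $\cT_\epsilon = \{t \in [T] : I_\sigma(\lambda,\cM,\cS_t) \in (\epsilon, 2\epsilon]\}$. Since the information ratio is always at most $1$ and at least $\sqrt{\lambda/(\lambda+T)} \gtrsim \sqrt{\log|\cM|/T}$ on the relevant range (or is truncated to $1$), there are only $O(\log T)$ nonempty dyadic levels, so it suffices to show $|\cT_\epsilon| = O(\ED(\cM, \epsilon'))$ for each level with $\epsilon' = \Theta(\epsilon)$, possibly after absorbing $\lambda$-dependent terms; summing over levels then gives the $\tilde O(\ED(\cM,\sqrt{\lambda/T}))$ bound.

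The core step is the pigeonhole argument within a fixed level $\cT_\epsilon$. For $t \in \cT_\epsilon$, the supremum over $M \in \cM_t$ is (nearly) attained by some $M_t$ with
\$
\frac{\tv(P_{M_t}(\cdot|z_t)\|P_{\bar M_t}(\cdot|z_t))^2/\sigma_t}{\lambda + \sum_{s=1}^{t-1}\tv(P_{M_t}(\cdot|z_s)\|P_{\bar M_t}(\cdot|z_s))^2/\sigma_s} > \epsilon^2.
\$
I would argue that if $|\cT_\epsilon|$ were too large, one could extract a long subsequence of $z_t$'s each of which is $\epsilon'$-independent of its predecessors (restricted to $\cT_\epsilon$), contradicting Definition \ref{df:tv-Eluder Dimension}. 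Concretely, standard eluder-dimension arguments (as in \citet{russo2013eluder}, adapted by \citet{liu2023optimistic} and \citet{ye2023corruptiona}) show that a point $z_t$ with the above ratio exceeding $\epsilon^2$ can be $\epsilon'$-dependent on at most $\ED(\cM,\epsilon') \cdot (\text{a }\log\text{ factor from the geometric decay of the denominator})$ disjoint subsets of its predecessors; reorganizing $\cT_\epsilon$ into such subsets bounds $|\cT_\epsilon|$. The weights $\sigma_s \ge 1$ only help here: dividing by $\sigma_s$ shrinks the in-sample sum in the denominator, so the condition for a point to "count" becomes easier, but the combinatorial structure (which $z$'s are eluder-independent) is unchanged since the eluder dimension is defined on the unweighted $\tv$ distances; one checks that replacing $\tv(\cdot)^2$ by $\tv(\cdot)^2/\sigma_s$ rescales thresholds by bounded factors because $\sigma_s$ appears symmetrically. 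The choice $\alpha = \sqrt{\log|\cM|}/C$ and $\lambda = \log|\cM|$ enters only to make $\sqrt{\lambda/T} = \sqrt{\log|\cM|/T}$ the resolution at which we invoke $\ED$.

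I expect the main obstacle to be handling the weights $\sigma_s$ cleanly inside the eluder-dimension pigeonhole, since $\sigma_s$ itself depends on the history and on $\cM_t$, and on the interplay between the "big" rounds (where $\sigma_t = U_t/\alpha > 1$ and the ratio is effectively normalized) and the "small" rounds (where $\sigma_t = 1$). The cleanest route is probably to follow \citet{ye2023corruptiona} and show that the weighted squared IR summed over $t$ is within a constant factor of the unweighted version's bound by observing that each term $I_\sigma(\lambda,\cM,\cS_t)^2 \le I(\lambda,\cM,\cS_t)^2$ pointwise (truncation plus $\sigma \ge 1$ in the numerator, $\sigma \ge 1$ making the denominator no larger... care needed on direction), reducing the whole claim to the unweighted eluder-to-IR bound which is essentially Lemma 4.3-type results in prior work. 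The remaining bookkeeping — summing the $O(\log T)$ dyadic levels and confirming the resolution argument for $\ED(\cM,\sqrt{\lambda/T})$ — is routine.
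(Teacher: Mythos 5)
Your overall architecture (dyadic bucketing plus an eluder-dimension pigeonhole within each bucket, following \citet{ye2023corruptiona}) is the same as the paper's, but the one genuinely delicate step --- handling the history-dependent weights $\sigma_s$ --- is left unresolved, and the specific shortcut you propose does not work. The claim that $I_\sigma(\lambda,\cM,\cS_t)^2 \le I(\lambda,\cM,\cS_t)^2$ pointwise is false in general: while $\sigma_t\ge 1$ shrinks the numerator, dividing each historical term by $\sigma_s\ge1$ also shrinks the denominator $\sqrt{\lambda+\sum_{s<t} l(\cdot)^2/\sigma_s}$, which pushes the ratio \emph{up}. If $\sigma_t=1$ but the historical $\sigma_s$ are large, the weighted IR strictly exceeds the unweighted one, so the reduction to the unweighted eluder-to-IR bound collapses. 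Your fallback remark that ``$\sigma_s$ appears symmetrically'' and only ``rescales thresholds by bounded factors'' is precisely the assertion that needs proof, and it is not true across all of $[T]$ --- it only holds after the samples have been grouped so that all weights within a group are comparable.

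The paper's resolution is to bucket not by the value of the information ratio (as you do) but by the magnitude of the pointwise TV distance $l(M_{z_t},\bar M_t,z_t)^2\in(2^{-\iota-1},2^{-\iota}]$. The reason this works is that the weight $\sigma_t=\max\{1,\psi_t\}$ with $\psi_t=\alpha^{-1}\,l(M_{z_t},\bar M_t,z_t)/\sqrt{\lambda+\sum_{s<t}l(\cdot)^2/\sigma_s}$ has its denominator pinned between $\sqrt{\lambda}$ and $\sqrt{\lambda+\beta^2}=O(\sqrt{\log|\cM|})$ by the confidence-set bound of Lemma \ref{lm:confidence_set} (a fact you never invoke), so $\sigma_t$ is determined up to constants by $l(M_{z_t},\bar M_t,z_t)$ and by whether $C\gtrless 2^{\iota/2}\log|\cM|$. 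Within each bucket the weights are therefore all $1$ (for $\iota>\iota_0$) or all within a fixed constant factor of $C/(2^{\iota/2}\log|\cM|)$ (for $\iota\le\iota_0$), at which point they cancel between numerator and denominator in the pigeonhole and the unweighted eluder argument applies. Without this matched-levels step, or some substitute for it, your proof does not go through.
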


Ultimately, we can prove Theorem \ref{th:online_upper}.
\begin{proof}[Proof of Theorem \ref{th:online_upper}]
Define the event
\[
A_1 = \Big\{M_*\in\cM_t,~\text{and}~\sum_{s=1}^{t-1} \tv(P_*^h(\cdot|z_s^h)\|P_{\bar M_t}^h(\cdot|z_s^h))^2/\sigma_s^h \le 2\beta^2,~\forall~t\in[T]\Big\}.
\]
According to Lemma \ref{lm:bellman_decomposition}, we know that $A_1$ holds with probability at least $1-\delta$.
Assuming that $A_1$ holds and using Lemma \ref{lm:bellman_decomposition}, we have
\$
    \reg(T) &= \sum_{t=1}^T V^1_*(x_t^1)-V_{M_t}^1(x_t^1)+V_{M_t}^1(x_t^1)-V_{\pi_t}^1(x_t^1)\notag\\
    &\le \sum_{t=1}^T V_{M_t}^1(x_t^1)-V_{\pi_t}^1(x_t^1)\notag\\
    &\le \sum_{t=1}^T\E^t_{\pi_t}\sum_{h=1}^H\cE^h(M_t,z_t^h) + \sum_{t=1}^T\E^t_{\pi_t} \sum_{h=1}^H c_t^h(x^h,a^h)\notag\\
    &\le \sum_{t=1}^T\E^t_{\pi_t}\sum_{h=1}^H \big[\E^{M_t}[V_{M_t}^{h+1}(x^{h+1})|z_t^h]-\E^* [V_{M_t}^{h+1}(x^{h+1})|z_t^h]\big] + CH\\
    &\le \sum_{t=1}^T\E^t_{\pi_t}\sum_{h=1}^H\tv\big(P_{M_t}^h(\cdot|z_t^h)\|P_{\bar M_t}^h(\cdot|z_t^h)\big) + \sum_{t=1}^T\E^t_{\pi_t}\sum_{h=1}^H\tv\big(P_*^h(\cdot|z_t^h)\|P_{\bar M_t}^h(\cdot|z_t^h)\big) + CH,
\$
where the first inequality uses $M_*\in \cM_t$ and $M_t=\argmax_{M\in\cM_t}V_M^1(x_t^1)$, and the last inequality uses $V_{M_t}^{h+1}(\cdot)\in[0,1]$.
For any $M\in\cM_t$, we have with probability at least $1-\delta$,
\#\label{eq:ap_aai}
    &\sum_{t=1}^T\E^t_{\pi_t}\sum_{h=1}^H\tv\big(P_M^h(\cdot|z_t^h)\|P_{\bar M_t}^h(\cdot|z_t^h)\big)\notag\\
    &\qquad\le \sum_{h=1}^H\sum_{t=1}^T\E^t_{\pi_t}\min\Bigg\{1,\frac{\tv\big(P_M^h(\cdot|z_t^h)\|P_{\bar M_t}^h(\cdot|z_t^h)\big)}{\sqrt{\lambda+\sum_{s=1}^{t-1}\tv\big(P_M^h(\cdot|z_s^h)\|P_{\bar M_t}^h(\cdot|z_s^h)\big)^2/\sigma_s^h}}\sqrt{\lambda+\sum_{s=1}^{t-1}\frac{\tv\big(P_M^h(\cdot|z_s^h)\|P_{\bar M_t}^h(\cdot|z_s^h)\big)^2}{\sigma_s^h}}\Bigg\}\notag\\
    &\qquad\le 3\sum_{h=1}^H\beta\sum_{t=1}^T\E^t_{\pi_t}\min\Bigg\{1,\frac{\tv\big(P_M^h(\cdot|z_t^h)\|P_{\bar M_t}^h(\cdot|z_t^h)\big)}{\sqrt{\lambda+\sum_{s=1}^{t-1}\tv\big(P_M^h(\cdot|z_s^h)\|P_{\bar M_t}^h(\cdot|z_s^h)\big)^2/\sigma_s^h}}\Bigg\}\notag\\
    &\qquad= 3\underbrace{\sum_{h=1}^H\beta\sum_{t=1}^T\E^t_{\pi_t}\min\Bigg\{1,\frac{\tv\big(P_M^h(\cdot|z_t^h)\|P_{\bar M_t}^h(\cdot|z_t^h)\big)}{\sqrt{\lambda+\sum_{s=1}^{t-1}\tv\big(P_M^h(\cdot|z_s^h)\|P_{\bar M_t}^h(\cdot|z_s^h)\big)^2/\sigma_s^h}}\mathbbm{1}\{\sigma_t^h=1\}\Bigg\}}_{P_1}\notag\\
    &\qquad\qquad + 3\underbrace{\sum_{h=1}^H\beta\sum_{t=1}^T\E^t_{\pi_t}\min\Bigg\{1,\frac{\tv\big(P_M^h(\cdot|z_t^h)\|P_{\bar M_t}^h(\cdot|z_t^h)\big)}{\sqrt{\lambda+\sum_{s=1}^{t-1}\tv\big(P_M^h(\cdot|z_s^h)\|P_{\bar M_t}^h(\cdot|z_s^h)\big)^2/\sigma_s^h}}\mathbbm{1}\{\sigma_t^h>1\}}_{P_2}\Bigg\},
\#
where the second inequality holds by invoking Lemma \ref{lm:confidence_set}:
\begin{align*}
\sum_{s=1}^{t-1}\tv\big(P_M^h(\cdot|z_s^h)\|P_{\bar M_t}^h(\cdot|z_s^h)\big)^2/\sigma_s^h \le 4\beta^2.
\end{align*}
Then, we bound terms $P_1$ and $P_2$ separately. For term $P_1$,
\#\label{eq:ap_aaj}
    P_1 &= \sum_{h=1}^H\sum_{t=1}^T\beta \E^t_{\pi_t} \min\Bigg\{1,\frac{\tv\big(P_M^h(\cdot|z_t^h)\|P_{\bar M_t}^h(\cdot|z_t^h)\big)/(\sigma_t^h)^{1/2}}{\sqrt{\lambda+\sum_{s=1}^{t-1}\tv\big(P_M^h(\cdot|z_s^h)\|P_{\bar M_t}^h(\cdot|z_s^h)\big)^2/\sigma_s^h}}\mathbbm{1}\{\sigma_t^h=1\}\Bigg\}\notag\\
    &\le \sqrt{\sum_{h=1}^H\sum_{t=1}^T\beta^2}\cdot\sqrt{\sum_{t=1}^T\E^t_{\pi_t} \sum_{h=1}^H \big(I^h_{\sigma}(\lambda,\cM_t,\cS_t)\big)^2}\notag\\
    &\le \sqrt{TH}\Big(5\sqrt{\log(|\cM|/\delta)\log^2 B} + 7\alpha C\Big)\sqrt{\sum_{t=1}^T\E^t_{\pi_t} \sum_{h=1}^H \big(I^h_{\sigma}(\lambda,\cM_t,\cS_t)\big)^2}.
\#
For term $P_2$,
\#\label{eq:ap_aak}
    P_2 &= \sum_{h=1}^H\beta\sum_{t=1}^T\E^t_{\pi_t}\min\Bigg\{1,\frac{\tv\big(P_M^h(\cdot|z_t^h)\|P_{\bar M_t}^h(\cdot|z_t^h)\big)/(\sigma_t^h)^{1/2}}{\sqrt{\lambda+\sum_{s=1}^{t-1}\tv\big(P_M^h(\cdot|z_s^h)\|P_{\bar M_t}^h(\cdot|z_s^h)\big)^2/\sigma_s^h}}(\sigma_t^h)^{1/2}\mathbbm{1}\{\sigma_t^h>1\}\Bigg\}\notag\\
    &\le \sum_{h=1}^H\beta\sum_{t=1}^T\E^t_{\pi_t}\min\Bigg\{1,\sup_{M\in\cM_t^h}\bigg(\frac{\tv\big(P_M^h(\cdot|z_t^h)\|P_{\bar M_t}^h(\cdot|z_t^h)\big)/(\sigma_t^h)^{1/2}}{\sqrt{\lambda+\sum_{s=1}^{t-1}\tv\big(P_M^h(\cdot|z_s^h)\|P_{\bar M_t}^h(\cdot|z_s^h)\big)^2/\sigma_s^h}}\bigg)^2\frac{1}{\alpha}\Bigg\}\notag\\
    &= \frac{1}{\alpha}\sum_{h=1}^H\beta\sum_{t=1}^T\E^t_{\pi_t}\big(I^h_{\sigma}(\lambda,\cM_t,\cS_t)\big)^2\notag\\
    &= \frac{1}{\alpha}\Big(5\sqrt{\log(|\cM|/\delta)\log^2 B} + 7\alpha C\Big)\sum_{t=1}^T\E^t_{\pi_t}\sum_{h=1}^H \big(I^h_{\sigma}(\lambda,\cM_t,\cS_t)\big)^2,
\#
where the inequality uses the definition of $\sigma_t^h$ when $\sigma_t^h>1$:
\begin{align*}
(\sigma_t^h)^{1/2} = \frac{1}{\alpha}\sup_{M\in\cM_t}\frac{\tv\big(P_{M}^h(\cdot|x_t^h,a_t^h)\|P_{\bar M_t}^h(\cdot|x_t^h,a_t^h)\big)/(\sigma_t^h)^{1/2}}{\sqrt{\lambda+\sum_{s=1}^{t-1}\tv\big(P_{M}^h(\cdot|x_s^h,a_s^h)\|P_{\bar M_t}^h(\cdot|x_s^h,a_s^h)\big)^2/\sigma_{s}^h}}
\end{align*}
By taking \eqref{eq:ap_aaj} and \eqref{eq:ap_aak} back into \eqref{eq:ap_aai} and taking $M$ as $M_*$ and $M_t$, respectively, we obtain with probability at least $1-\delta$,
\$
\reg(T) &\le 6\sqrt{TH}\Big(5\sqrt{\log(|\cM|/\delta)\log^2 B} + 7\alpha C\Big)\sqrt{\sum_{t=1}^T\E^t_{\pi_t} \sum_{h=1}^H \big(I^h_{\sigma}(\lambda,\cM_t,\cS_t)\big)^2}\\
&\qquad +
\frac{6}{\alpha}\Big(5\sqrt{\log(|\cM|/\delta)\log^2 B} + 7\alpha C\Big)\sum_{t=1}^T\E^t_{\pi_t}\sum_{h=1}^H \big(I^h_{\sigma}(\lambda,\cM_t,\cS_t)\big)^2 + CH\\
&= \tcO\Bigg(\sqrt{TH\log|\cM|\sum_{h=1}^H\sup_{\cS_T}\sum_{t=1}^T\E^t_{\pi_t} \big(I^h_{\sigma}(\lambda,\cM_t,\cS_t)\big)^2} + C\sum_{h=1}^H\sup_{\cS_T}\sum_{t=1}^T\E^t_{\pi_t} \big(I^h_{\sigma}(\lambda,\cM_t,\cS_t)\big)^2\Bigg)\\
&= \tcO\Bigg(H\sqrt{T\log|\cM|\ED(\cM,\sqrt{\lambda/T})} + CH\cdot\ED(\cM,\sqrt{\lambda/T})\Bigg),
\$
where we take $\alpha=\sqrt{\log|\cM|\log^2 B}/C$.
\end{proof}

\subsection{Proof of Theorem \ref{th:online_lower}}\label{ss:Proof of Theorem online_lower1}
To prove the lower bound, we first create a series of hard-to-learn tabular MDP $(\cX,\cA,H,\Pb,r)$ with $4$ different state $x_0,x_1,x_2,x_3$, where $x_2,x_3$ are absorbing states, and $d$ different actions $\cA=\{a_1,...,a_d\}$. For initial state $x_0$ and each stage $h\in[H]$, the agent will stay at the state $x_0$ with probability $1-1/H$ and transition to the state $x_1$ with probability $1/H$. For state $x_1$ and each stage $h\in[H]$, the agent will only transit to state $x_2$ or $x_3$.
In addition, an optimal action $a^*_h$ is selected from the action set $\cA=\{a_1,...,a_d\}$ and the transition probability at stage $h$ can be denoted by
\$
&P_{a_h^*}^{h}(x_2|x_1,a)=
\begin{cases}
    \frac{3}{4}, & a = a_h^*\\
    \frac{1}{4}, & a \ne a_h^*,
\end{cases}\\
\$
The agent is rewarded with $1$ at the absorbing state $x_2$ during the final stage $H$, while receiving zero reward otherwise. Consequently, state $x_2$ can be regarded as the goal state and the optimal strategy at stage $h$ involves taking action $a^*_h$ to maximize the probability of reaching the goal state $x_2$. Consequently, taking a not-optimal action $a_h \ne a_h^*$ at state $x_1$ for stage $2\leq h\leq H-1$ will incur a $1/2$ regret in a episode.

For these hard-to-learn MDPs, we can divide the episodes to several groups $\cT_1,..,\cT_{H}$, where episodes can be categorized into distinct groups, denoted as $\cT_1, \ldots, \cT_{H}$, where $\cT_i (i\in[H-1])$ comprises episodes transitioning to state $x_1$ after taking action at stage $i$, and $\cT_H$ includes episodes where the agent never reaches state $x_1$. Intuitively, the learning problem can be seen as the task of learning $H$ distinct linear bandit problems, with the goal of determining the optimal action $a_h^*$ for each stage $h\in[H]$. Now, considering a fixed stage $2\leq h\leq H$, let $t_1,t_{2},...,t_{dC/2}\in \cT_{h-1}$ be the first $dC/2$ episodes that transition to state $x_1$ after taking action at stage $h-1$. To ensure an adequate number of episodes in each group $\cT_{h-1}$, we continue the learning problem indefinitely, while our analysis focuses only on the regret across the first $T$ episodes. In this scenario, it is guaranteed, with probability $1$, that each group $\cT_{h-1}$ consists of at least $dC/2$ episodes. The following lemma provides a lower bound for learning the optimal action $a_h^*$ in the presence of adversarial corruption.
\begin{lemma}\label{lemma:lower-stage}
    For each fixed stage $2\leq h\leq H-1$ and any algorithm $\textbf{Alg}$ with the knowledge of corruption level $C$, if the optimal action $a^*_h$ is uniformed random selected from the action set $\cA=\{a_1,...,a_d\}$, then the expected regret across episode $t_1,...,t_{dC/2}$ is at least $(d-1)C/16$.
\end{lemma}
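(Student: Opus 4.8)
The plan is to reduce the problem at stage $h$ to a corruption-robust two-point (or $d$-point) hypothesis-testing problem in a stochastic setting, and apply a KL-divergence argument à la the classical $\Omega(\sqrt{KT})$ lower bound for stochastic bandits, but with the corruption budget $C$ used by the adversary to ``hide'' the identity of $a_h^*$ in the first $dC/2$ visits to $x_1$. Concretely, at stage $h$ the only decision that matters is the action taken at state $x_1$; the chain on $x_0$ is just a delay mechanism and, by construction, group $\cT_{h-1}$ deterministically contains at least $dC/2$ episodes reaching $x_1$ at stage $h$, so we may restrict attention to those $dC/2$ episodes $t_1,\dots,t_{dC/2}$ and treat them as a stand-alone $d$-armed bandit where pulling the optimal arm $a_h^*$ yields a Bernoulli$(3/4)$ ``success'' (transitioning to $x_2$) and any other arm yields Bernoulli$(1/4)$, with each non-optimal pull at this stage costing $1/2$ regret in that episode.

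First I would exhibit the adversary's strategy: since the per-episode corruption at a visited $(x_1,a)$ can multiplicatively perturb the transition by $c_t^h$ with $\sum_t c_t^h\le C$, the adversary spends a budget of $2/d$ per episode on each of the first $dC/2$ episodes (total $\le C$), which lets it push the Bernoulli parameter at the pulled arm by $\Theta(1/d)$ toward $1/4$ (or toward a common reference value) regardless of which arm is $a_h^*$. The effect is that, over these $dC/2$ rounds, the distribution of the observed transitions under ``the true optimal arm is $a_h^*$'' and under ``it is $a_j^*$'' can be made to coincide on every round in which the algorithm pulls neither $a_h^*$ nor $a_j^*$, and to differ only by $O(1/d)$ in total-variation on rounds where it does. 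Then I would run the standard averaging-over-hypotheses argument: letting $N_j$ be the number of times arm $j$ is pulled among these $dC/2$ rounds, uniform choice of $a_h^*$ forces $\E[N_{a_h^*}]\le (dC/2)/d = C/2$ in expectation over the hypothesis, so with constant probability the algorithm pulls a non-optimal arm on at least, say, $(dC/2) - O(C)$ of these rounds; combined with Pinsker's inequality / the Bretagnolle–Huber inequality controlling the testing error by the accumulated KL, which is itself $O(1)$ because each informative round contributes only $O(1/d^2)\cdot(\text{something})$ — here the budget is precisely calibrated so the total KL stays bounded — one concludes a constant-probability event on which $\Omega(dC)$ of the $dC/2$ rounds use a suboptimal action, each contributing $1/2$ regret, giving expected regret $\ge (d-1)C/16$ after tracking constants.

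The main obstacle, and the step I would spend the most care on, is making the corruption construction simultaneously (i) stay within the budget $\sum_t c_t^h\le C$ as defined in Definition \ref{def:Corruption Level} (the multiplicative form, and the constraint that corrupted and true transitions share support — here automatic since we only reweight mass between $x_2$ and $x_3$), (ii) be implementable by an adversary who sees $(x_t^h,a_t^h)$ before choosing the corruption, and (iii) drive the KL (or TV) between neighboring hypotheses small enough — I expect $O(1/d)$ per informative round, summed over $\le C$ informative rounds, giving total statistical distance $O(C/d)\cdot d \cdot(\text{const})$, so the calibration has to be that the adversary equalizes the pulled-arm distribution up to $\Theta(1/d)$ so that the cumulative KL is $\Theta(C/d)\cdot d^{-1}\cdot\ldots$ — getting this bookkeeping exactly right so the testing error is bounded below by a constant is the crux. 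The auxiliary observation that needs a line of justification is that an algorithm ``with knowledge of $C$'' gains nothing here, because the adversary's construction is oblivious to the algorithm's internal state beyond the current $(x_t^h,a_t^h)$, so Fano/Le Cam applies verbatim; and that the $1-1/H$ self-loop at $x_0$ does not interfere, since conditioning on the realized membership in $\cT_{h-1}$ leaves the stage-$h$ transitions exactly Bernoulli as described.
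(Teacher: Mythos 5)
There is a genuine gap, and it lies exactly where you flagged it: the calibration of the adversary's budget. Your construction spends $2/d$ of corruption on \emph{every} one of the $dC/2$ episodes, perturbing the pulled arm's transition probability by $\Theta(1/d)$. But the gap between the optimal arm ($3/4$) and a suboptimal arm ($1/4$) is a constant, so a $\Theta(1/d)$ perturbation leaves a $\Theta(1)$ gap intact. Consequently, on any round where the algorithm pulls the candidate optimal arm, the KL divergence between the hypotheses ``$a_j$ is optimal'' and ``$a_k$ is optimal'' is $\Theta(1)$, not the $O(1/d^2)$ your bookkeeping requires; the algorithm can identify $a_h^*$ after $O(\log d)$ pulls of each arm and the Le Cam/Fano argument collapses. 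Relatedly, your claim that $\E[N_{a_h^*}]\le C/2$ ``by uniform choice of the hypothesis'' is circular: it presupposes that the pull counts do not depend on which arm is optimal, which is precisely the indistinguishability you have not yet established.

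The fix is to concentrate the budget rather than spread it. The paper's adversary corrupts the transition of the optimal arm all the way down to the null value: each time $a_h^*$ is pulled (and budget remains), it replaces $P_{a_h^*}^h$ by the reference model $P_0^h$ in which every arm has success probability $1/4$. Under the multiplicative corruption measure of Definition \ref{def:Corruption Level} this costs $|\,(3/4)/(1/4)-1\,|=2$ per corrupted pull, so the budget $C$ covers $C/2$ pulls, and as long as the optimal arm is pulled at most $C/2$ times the corrupted environment is \emph{exactly} identical to $P_0^h$ --- no KL or Pinsker argument is needed, only perfect simulation. One then runs the algorithm on $P_0^h$, uses pigeonhole to find $\Omega(d)$ arms whose expected pull count is below the masking threshold, applies Markov's inequality to get a constant-probability event on which such an arm is under-pulled, and notes that on this event the algorithm behaves identically whether that arm is optimal or not, incurring $1/2$ regret on all but $O(C)$ of the $dC/2$ episodes. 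Averaging over the uniform prior on $a_h^*$ gives the $(d-1)C/16$ bound. Your high-level intuition (use the budget to hide the optimal arm's identity during the first $dC/2$ visits) is the right one, but the thin-spreading construction cannot realize it; only the concentrated, exact-masking construction does.
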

\begin{proof}[Proof of Lemma \ref{lemma:lower-stage}]
    To proof the lower bound, we construct an auxiliary transition probability function $P^h_{0}$ such that
    \begin{align*}
        P^h_0(x_2|x_1,a)=\frac{1}4, & \forall a\in \cA.
    \end{align*}
    Now, the following corruption strategy is employed for the the transition probability $P_{a_h^*}^h$: if the optimal action $a^*_h$ is selected at stage $h$ and the total corruption level for stage $h$ up to the previous step is no more than $C-2$, then the adversary corrupts the transition probability $P_{a_h^*}^h$ to $P^h_0$ and the corruption level $c_t^h$ in this episode is $c_t^h=|P_{a_h^*}^h(x_2|x_1,a_h^*)/P^h_0(x_2|x_1,a_h^*)-1|=2$.
    In this case, regardless of the actual optimal action $a_h^*$, there is no distinction between $P{a_h^*}^h$ and $P^h_0$, unless the agent selects the optimal action $a_h^*$ at least $C/2$ times, and the adversary lacks sufficient corruption level to corrupt the transition probability.
Now, let's consider the execution of the algorithm $\textbf{Alg}$ on the uncorrupted transition probability $P^h_0$. By the pigeonhole principle, there exist at least $d/2$ different arms, whose expected selected time is less than $C/2$ times. Without loss of generality, we assume these actions are $a_1,...,a_{d/2}$. Then for each action $a_i (i\leq d/2)$, according to Markov inequality, with probability at least $1/2$, the number of episodes selecting $a_i$ at stage $h$ is less than $C/2$. Under this scenario, the performance of $\textbf{Alg}$ on transition probability $P_0^h$ is equivalent to its performance on transition probability $P_{a_i}^h$ and the regret in these episode is at least $(dC/2-C/2)\times 1/2=(d-1)C/4$. Therefore, if the optimal action $a_h^*$ is uniform randomly selected from the action set $\cA=\{a_1,...,a_d\}$, then the expected regret across episodes $t_1,..,t_{dC/2}$ is lower bound by
\begin{align*}
    \mathbb{E}\Big[\sum_{i=1}^{dC/2} \mathds{1}  (a_{t_i}^h \ne a_h^*) \cdot 1/2\Big]\ge \frac 12\times \frac 12 \times \frac{(d-1)C}4 =\frac{(d-1)C}{16}.
\end{align*}
Thus, we complete the proof of Lemma \ref{lemma:lower-stage}.    
\end{proof}

Lemma \ref{lemma:lower-stage} provides a lower bound on the expected regret for each stage $2\leq h\leq H-1$ over the first $dC/2$ visits to state $x_1$. Regrettably, Lemma \ref{lemma:lower-stage} does not directly yield a lower bound for the regret over the initial $T$ episodes, as the agent may not visit the state $x_1$ $dC/2$ times for some stage $h\in[H]$. The following lemma posits that as the number of episodes $T$ increases, the occurrence of this event becomes highly improbable.
\begin{lemma}\label{lemma:visit}
    For the proposed hard-to-learn MDPs, if the number of episodes $T$ satisfies $T > edCH+2e^2H^2\log(1/\delta)$, then  for each stage $2\leq h\leq H-1$, with a probability of at least $1-\delta$, the agent visits state $x_1$ at least $dC/2$ times.
\end{lemma}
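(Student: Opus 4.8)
The plan is to fix a stage $h$ with $2 \le h \le H-1$ and track, across the first $T$ episodes, the indicator that the agent reaches state $x_1$ at stage $h$ in a given episode. By construction of the hard instance, in every episode and every stage $h' < h$ the agent is at $x_0$ and transitions to $x_1$ with probability exactly $1/H$, independently of the action and independently across stages and episodes (the corruption only touches the $x_1 \to \{x_2,x_3\}$ transitions, not the $x_0 \to x_1$ transitions). Hence in each episode the agent reaches $x_1$ at \emph{some} stage $h' \le h$ with probability $1 - (1-1/H)^{h} \ge 1 - (1-1/H)^H$; more to the point, the event ``the agent is at $x_1$ at stage exactly $h$'' occurs with probability $(1-1/H)^{h-1}\cdot \tfrac1H \ge \tfrac1H(1-1/H)^{H} \ge \tfrac{1}{eH}$ in each episode, and these events are independent across episodes since the $x_0\to x_1$ transitions are not corrupted and do not depend on the algorithm's choices. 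So the number $N_h$ of episodes (among the first $T$) that land in $\cT_{h-1}$ stochastically dominates a $\mathrm{Binomial}(T, p)$ variable with $p = 1/(eH)$.

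Next I would apply a multiplicative Chernoff lower-tail bound to this Binomial: for $N \sim \mathrm{Binomial}(T,p)$ with mean $\mu = Tp$, one has $\Pb(N \le \mu/2) \le \exp(-\mu/8)$. We want $\mu/2 \ge dC/2$, i.e. $\mu = Tp = T/(eH) \ge dC$, which holds once $T \ge edCH$; and we want the failure probability $\exp(-\mu/8)$ to be at most $\delta$, i.e. $\mu \ge 8\log(1/\delta)$, i.e. $T \ge 8eH\log(1/\delta)$. Choosing the hypothesis $T > edCH + 2e^2 H^2 \log(1/\delta)$ comfortably covers both requirements (the second term dominates $8eH\log(1/\delta)$ since $2e^2H^2 \ge 8eH$ for $H \ge 2$, and it also gives extra slack so that $\mu/2 \ge dC/2$ even after subtracting the deviation). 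Then $N_h \ge dC/2$ with probability at least $1-\delta$, which is exactly the claim.

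The main subtlety — and the point I would be most careful about writing up — is the \emph{independence across episodes} of the events ``agent reaches $x_1$ at stage $h$'': the algorithm adapts to observed data, and the adversary adapts as well, so a priori the episodes are not i.i.d. The resolution is that the only randomness controlling whether and when the agent leaves $x_0$ is the sequence of independent $\mathrm{Bernoulli}(1/H)$ coin flips governing the $x_0 \to x_1$ transitions, and by the instance design these coins are untouched by both the adversary and the policy (the policy's action choice is irrelevant at $x_0$, and $c_t^{h'}$ for the $x_0$ transitions is $0$). Therefore the per-episode indicator of reaching $x_1$ at stage $h$ is a measurable function of i.i.d. coins alone, giving a genuine Binomial lower bound on $N_h$, and a clean Chernoff argument finishes it. A union bound over the $H-2$ stages is not needed for this lemma as stated (it is per-stage), though it would cost only an extra $\log H$ factor if desired later.
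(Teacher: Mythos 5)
Your proposal is correct and follows essentially the same route as the paper: both lower-bound the per-episode probability of reaching $x_1$ at stage $h$ by $(1-1/H)^{h-1}/H \ge 1/(eH)$ and then apply a concentration inequality to the sum of indicators, with the hypothesis on $T$ absorbing both the mean requirement and the deviation term. The only difference is cosmetic — you use a multiplicative Chernoff bound on a dominated Binomial (justified by your correct observation that the $x_0\to x_1$ coins are independent and untouched by the adversary and the policy), whereas the paper applies Azuma--Hoeffding to the adapted sequence and an AM--GM step; both yield the claim under $T > edCH + 2e^2H^2\log(1/\delta)$.
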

\begin{proof}[Proof of Lemma \ref{lemma:visit}]
    For any episode $T$ and stage $h\in[H]$, the agent, starting from the current state $x_0$, will transition to state $x_1$ with a probability of $1/H$, regardless of the selected action. Consequently, the probability that the agent visits state $x_1$ at stage $h$ can be expressed as:
    \begin{align*}
        P(x_h=x_1)=\bigg(1-\frac{1}{H}\bigg)^{h-1} \cdot \frac{1}{H}\ge \frac{1}{eH}, \forall 2\leq h\leq H-1.
    \end{align*}
    For a fixed stage $2\leq h\leq H-1$, we define the random variable $y_i= \mathds{1} (x_{i}^h=x_1)$ as the indicator function of visiting the state $x_1$ at stage $h$ of episode $i$. Subsequently, leveraging the Azuma–Hoeffding inequality (Lemma \ref{lemma:azuma}), with a probability of at least $1-\delta$, we obtain:
    \begin{align*}
        \sum_{i=1}^T y_i\ge \frac{T}{eH}- \sqrt{2T\log(1/\delta)}\ge \frac{T}{2eH} - eH\log (1/\delta),
    \end{align*}
    where the last inequality holds due to $x^2+y^2\ge 2xy$. Therefore, for $T>edCH+ 2e^2H^2\log(1/\delta)$, with probability at least $1-\delta$, we have 
    \begin{align*}
        \sum_{i=1}^T y_i\ge dC/2,
    \end{align*}
    which completes the proof of Lemma \ref{lemma:visit}.
\end{proof}
With the help of these lemmas, we are able to prove Theorem \ref{th:online_lower}.
\begin{proof}[Proof of Theorem \ref{th:online_lower}]
    For each stage $2\leq h\leq H-1$, we use $\cE_h$ to denote the events that the agent visit the state $x_1$ at least $dC/2$ times for stage $h$ across the first $T$ episodes. Under this situation, we use $t_{h,1},...,t_{h,dC/2}$ denotes the first $dC/2$ episodes that visit the state $x_1$ at stage $h$.
    
    Then for any algorithm, the expected regret can be lower bounded by
\begin{align}
    \mathbb{E}[\text{Regret(T)}]&=\mathbb{E}\Big[\sum_{t=1}^T\sum_{h=2}^{H-1} \mathds{1} (x_{t}^h=x_1)\cdot \mathds{1}  (a_t^h \ne a_h^*) \cdot 1/2\Big]\notag\\
    &\ge \sum_{h=2}^{H-1} \mathds{1}(\cE_h) \mathbb{E} \Big[\sum_{t=1}^T \mathds{1} (x_{t}^h=x_1)\cdot \mathds{1}  (a_t^h \ne a_h^*) \cdot 1/2|\cE_h \Big]\notag\\
    &\ge \sum_{h=2}^{H-1} \mathds{1}(\cE_h)  \mathbb{E} \Big[\sum_{i=1}^{dC/2} \mathds{1}  (a_{t_{h,i}}^h \ne a_h^*) \cdot 1/2|\cE_h \Big]\notag\\
    &=\sum_{h=2}^{H-1} \mathbb{E} \Big[\sum_{i=1}^{dC/2} \mathds{1}  (a_{t_{h,i}}^h \ne a_h^*) \cdot 1/2 \Big] - \sum_{h=2}^{H-1} \mathds{1}( \neg \cE_h) \mathbb{E} \Big[\sum_{i=1}^{dC/2} \mathds{1}  (a_{t_{h,i}}^h \ne a_h^*) \cdot 1/2|\neg\cE_h \Big]\notag\\
    &\ge \frac{(H-2)(d-1)C}{32}- \sum_{h=2}^{H-1} \mathds{1}( \neg \cE_h) \mathbb{E} \Big[\sum_{i=1}^{dC/2} \mathds{1}  (a_{t_{h,i}}^h \ne a_h^*) \cdot 1/2|\neg\cE_h \Big]\notag\\ 
    &\ge \frac{(H-2)(d-1)C}{32}- \frac{(H-2)\delta dC}{4}\label{eq:lower-01}
\end{align}
    where the first equation holds due to the construction of these hard-to-learn MDPs, the first inequality holds due to the fact that $\mathbb{E}[x]\leq \mathbb{E}[x|\cE] \cdot \Pr(\cE)$, the second inequality holds due to the definition of event $\cE_h$, the third inequality holds due to Lemma \ref{lemma:lower-stage} and the second inequality holds due to Lemma \ref{lemma:visit}. Finally, setting the probability $\delta = 1/32$, we have
    \begin{align*}
   \mathbb{E}[\text{Regret(T)}]     &\ge \frac{(H-2)(d-1)C}{64}.
    \end{align*}
    Thus, we complete the proof of Theorem \ref{th:online_lower}. 
\end{proof}

\subsection{Unknown Corruption Level}\label{ss:th Unknown Corruption Level}
\begin{proof}[Proof of Theorem \ref{th: Unknown Corruption Level}]
First of all, we consider the case when $C\le \bar C$. Since only the hyper-parameter $\alpha$ is modified by replacing $C$ with $\bar C$ and $C\le \bar C$, we can follow the analysis of Theorem \ref{th:online_upper} to derive the suboptimality bound with probability at least $1-\delta$,
\$
\reg(T)
    =& \tcO\Big(H\log B\sqrt{T\log|\cM|\ED(\cM,\sqrt{\lambda/T})}\\
    &\quad + \bar C H\cdot\ED(\cM,\sqrt{\lambda/T})\Big).
\$
Additionally, we can also demonstrate the relationship between the sum of weighted information ratio and the eluder dimension as Lemma~\ref{lm:Relation between Information Ratio and Eluder Dimension} by discussing the value of $\bar C$ in the first step.

For the case when $C>\bar C$, we simply take the trivial bound $T$.
\end{proof}

\section{Proof of Offline Setting}

\subsection{Proof of Theorem \ref{th:Instance-Dependent Bound}}\label{ss:Proof of Theorem Instance-Dependent Bound}

First of all, we demonstrate that for the uncertainty weight iteration in Algorithm \ref{alg:wi_off} converges and the output solution approximates the real uncertainty quantity. Since the convergence has been illustrated in \citet{ye2023corruptionb}, we restate the result in the following lemma.

\begin{lemma}[Lemma 3.1 of \citet{ye2023corruptiona}]\label{lm:converge_weight}
There exists a $T$ such that the output of Algorithm \ref{alg:wi_off} $\{\sigma_t:=\sigma_t^{K+1}\}_{t=1}^T$ satisfies:
\#\label{eq:approximate_weight}
\sigma_t \ge \max\big\{1,\psi(z_t)/2\big\}, \quad \sigma_t \le \max\big\{1,\psi(z_t)\big\},
\#
where
\[
\psi(z_t)=\sup_{M,M'\in\cM}\frac{\tv(P_M^h(\cdot|z_t)\|P_{M'}^h(\cdot|z_t))/\alpha}{\sqrt{\lambda + \sum_{s=1}^T\tv(P_M^h(\cdot|z_s)\|P_{M'}^h(\cdot|z_s))^2/\sigma_s}}.
\]
\end{lemma}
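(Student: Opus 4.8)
\textbf{Proof proposal for Lemma~\ref{lm:converge_weight}.}
The plan is to follow the argument of \citet{ye2023corruptionb}, adapting it to the TV-norm setting here. The statement has two parts: (i) the iteration in Algorithm~\ref{alg:wi_off} terminates in finitely many rounds, and (ii) at termination the output weights $\{\sigma_t\}_{t=1}^T$ sandwich the fixed-point quantity $\psi(z_t)$ as in \eqref{eq:approximate_weight}. I would treat these in order.

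For termination, the key observation is a monotonicity property of the update map. Writing the one-step update as $\sigma_t^{k} = F_t(\sigma^{k-1})$ where $\sigma^{k-1}=(\sigma_1^{k-1},\dots,\sigma_T^{k-1})$, one checks that $F_t$ is coordinatewise nondecreasing in each $\sigma_s^{k-1}$: increasing any denominator term $\sum_s l_s^h(M,M')^2/\sigma_s^{k-1}$ decreases the supremand and hence decreases the updated weight — wait, more carefully, \emph{increasing} $\sigma_s^{k-1}$ \emph{decreases} the sum in the denominator, which \emph{increases} the ratio, so $F_t$ is nondecreasing in $\sigma^{k-1}$. Starting from $\sigma^0 \equiv 1$, the smallest possible value, we get $\sigma^1 \ge \sigma^0$ coordinatewise, and by induction $\sigma^k$ is coordinatewise nondecreasing in $k$. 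Next one shows the sequence is bounded: using Assumption~\ref{as:Bounded Condition}, $l_t^h(M,M') = \tv(P_M^h(\cdot|z_t^h)\|P_{M'}^h(\cdot|z_t^h)) \le 1$, and the denominator is at least $\sqrt\lambda$, so each $\sigma_t^k \le \max\{1, 1/(\alpha\sqrt\lambda)\}$, a finite bound independent of $k$. A monotone bounded sequence converges, so $\sigma^k \to \sigma^\infty$; in particular $\sigma_t^k/\sigma_t^{k-1} \to 1$, so the stopping criterion $\max_t \sigma_t^k/\sigma_t^{k-1} \le 2$ is met after finitely many iterations. (The lemma as stated phrases this as ``there exists a $T$'' but the relevant quantity is the iteration count $K$; I will state it for the iteration index.)

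For the sandwich bound, let $K$ be the terminating iteration, so $\sigma_t := \sigma_t^{K+1}$ and $\sigma_t^{K+1}/\sigma_t^K \le 2$ for all $t$. The update gives $\sigma_t^{K+1} = \max\{1, g_t(\sigma^K)\}$ where $g_t(\sigma^K) = \sup_{M,M'} \frac{l_t^h(M,M')/\alpha}{\sqrt{\lambda + \sum_s l_s^h(M,M')^2/\sigma_s^K}}$. Comparing $g_t(\sigma^K)$ with $\psi(z_t) = g_t(\sigma^{K+1})$ (note $\psi$ uses the \emph{output} weights in its denominator): since $\sigma_s^{K+1} \le 2\sigma_s^K$ coordinatewise, the denominator in $\psi(z_t)$ is at most the denominator in $g_t(\sigma^K)$ times $\sqrt 2$... more directly, $\sum_s l_s^2/\sigma_s^{K+1} \ge \sum_s l_s^2/(2\sigma_s^K) \ge \tfrac12 \sum_s l_s^2/\sigma_s^K$, so (bounding $\lambda \ge \lambda/2$ too) $\psi(z_t) \le \sqrt2\, g_t(\sigma^K)$, and conversely $\sigma_s^{K+1}\ge\sigma_s^K$ gives $\psi(z_t)\ge$ something comparable to $g_t(\sigma^K)$ up to the same constant — I would track the constants to land exactly on the factor $2$ in \eqref{eq:approximate_weight}. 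Combining with $\sigma_t = \max\{1,g_t(\sigma^K)\}$ yields $\max\{1,\psi(z_t)/2\} \le \sigma_t \le \max\{1,\psi(z_t)\}$ after adjusting constants, as claimed.

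The main obstacle is the circular definition: $\psi(z_t)$ is defined using the output weights $\sigma_s$ in its own denominator, so the sandwich bound is really a statement about how close $\sigma^{K+1}$ is to a fixed point of the map, and one must carefully convert the \emph{termination condition} $\sigma_t^{K+1} \le 2\sigma_t^K$ into the \emph{fixed-point-gap} bound \eqref{eq:approximate_weight}. Keeping the constants consistent through that conversion (and making sure the $\sqrt2$ from the denominator comparison ends up as the factor $2$ on $\psi$, presumably by also invoking $\max\{1,\cdot\}$ truncation) is the delicate bookkeeping step; everything else is monotone-convergence boilerplate. Since \citet{ye2023corruptionb} carry this out in the regression setting and the only change is replacing squared-prediction-differences with squared TV distances — both bounded quantities entering identically into the ratio — the adaptation is routine, and I would simply cite their Lemma~3.1 after noting this correspondence.
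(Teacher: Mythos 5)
Your proposal is correct and follows exactly the intended argument (the paper itself does not reprove this lemma but simply cites \citet{ye2023corruptionb}, whose proof is the same monotone-convergence-plus-termination-condition scheme you describe). Your constant bookkeeping in fact closes cleanly: monotonicity gives $g_t(\sigma^{K})\le g_t(\sigma^{K+1})=\psi(z_t)$ for the upper bound, and the termination condition $\sigma_s^{K+1}\le 2\sigma_s^{K}$ gives $\psi(z_t)\le\sqrt{2}\,g_t(\sigma^{K})$, which is even slightly stronger than the factor $2$ required in \eqref{eq:approximate_weight}.
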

We provide the proof in Appendix \ref{ss:Lemmas for Offline Setting}.

\begin{lemma}\label{lm:off_Confidence Set}
Under Assumption \ref{as:Bounded Condition} and Algorithm \ref{alg:Corruption-Robust Pessimistic MLE}, if we choose
\$
\beta= 5\sqrt{\log(|\cM|/\delta)\log^2 B} + 7\alpha C,
\$
we have with probability at least $1-\delta$, for all $h\in[H]$ and all $t\in[T]$, $M_*\in\hcM$ and
\$
    \sum_{t=1}^T \tv(P_*^h(\cdot|z_t^h)\|P_{\bar M}^h(\cdot|z_t^h))^2/\sigma_t^h \le 2\beta^2.
\$
Moreover, for any $M\in\cM_t$, we have with probability at least $1-\delta$,
\$
    \sum_{t=1}^T \tv(P_M^h(\cdot|z_t^h)\|P_{\bar M}^h(\cdot|z_t^h))^2/\sigma_t^h \le 4\beta^2.
\$
\end{lemma}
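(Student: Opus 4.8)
The plan is to port the proof of the online confidence-set lemma (Lemma~\ref{lm:confidence_set}) to the single-batch offline setting; the one place where something genuinely new happens is that the weights $\{\sigma_t^h\}$ are now the fixed point returned by the weight-iteration Algorithm~\ref{alg:wi_off} rather than sequentially-computed quantities, and this is handled through the convergence/sandwich guarantee of Lemma~\ref{lm:converge_weight}. As a first reduction, both displayed inequalities follow from a single weighted maximum-likelihood deviation bound: with probability at least $1-\delta$, for all $h\in[H]$ and all $M\in\cM$,
\[
\sum_{t=1}^T \frac{1}{\sigma_t^h}\log\sqrt{\frac{\md P_M^h(x_t^{h+1}|z_t^h)}{\md P_*^h(x_t^{h+1}|z_t^h)}} \le -\frac12\sum_{t=1}^T \frac{\tv(P_*^h(\cdot|z_t^h)\|P_M^h(\cdot|z_t^h))^2}{\sigma_t^h} + c_1\alpha\beta C + c_1\log^2 B\log\frac{|\cM|}{\delta},
\]
for an absolute constant $c_1$. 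Granting this, the claims follow quickly. Since $P_{M_*}^h=P_*^h$ the left-hand side vanishes for $M=M_*$, while $\bar M$ maximizes the weighted log-likelihood so $0\le\sum_t(\sigma_t^h)^{-1}\log(\md P_{\bar M}^h/\md P_*^h)$; applying the deviation bound with $M=\bar M$ and rearranging gives $\sum_t(\sigma_t^h)^{-1}\tv(P_*^h(\cdot|z_t^h)\|P_{\bar M}^h(\cdot|z_t^h))^2\le\beta^2$ for the stated $\beta$ (this is where the constants $5$ and $7$ in $\beta=5\sqrt{\log(|\cM|/\delta)\log^2 B}+7\alpha C$ are calibrated, using $\log^2 B\log(|\cM|/\delta)\le\beta^2/25$ and $\alpha C\le\beta/7$), which is the first inequality and also certifies that $M_*$ satisfies the constraint defining $\hcM$. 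For the second inequality, every $M\in\hcM$ satisfies $\sum_t(\sigma_t^h)^{-1}\log\md P_M^h\ge\sum_t(\sigma_t^h)^{-1}\log\md P_{\bar M}^h-\beta^2\ge\sum_t(\sigma_t^h)^{-1}\log\md P_*^h-\beta^2$, so $\sum_t(\sigma_t^h)^{-1}\log\sqrt{\md P_M^h/\md P_*^h}\ge-\beta^2/2$; combining with the deviation bound yields $\sum_t(\sigma_t^h)^{-1}\tv(P_*^h(\cdot|z_t^h)\|P_M^h(\cdot|z_t^h))^2\le\beta^2$, and then the triangle inequality $\tv(P_M^h\|P_{\bar M}^h)^2\le 2\tv(P_M^h\|P_*^h)^2+2\tv(P_*^h\|P_{\bar M}^h)^2$ gives the claimed $\le 4\beta^2$.

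To prove the deviation bound, fix $h$ and $M$. The classical likelihood-ratio supermartingale argument (as in \citet{liu2023optimistic,agarwal2020flambe}) uses the identity $\E_{x\sim P}[\sqrt{\md Q/\md P}]=1-\tfrac12 H(P\|Q)^2$ together with $\log x\le x-1$; incorporating the weights $(\sigma_t^h)^{-1}\in(0,1]$ and Assumption~\ref{as:Bounded Condition} one forms an exponential supermartingale whose compensator, after decomposing each increment's conditional mean as in Section~\ref{ss:Uncertainty Weighting for Model-based RL}, splits into (i)~an uncorrupted part — integration against $P_*^h$ — contributing a term $\le-\tfrac12(\sigma_t^h)^{-1}\tv(P_*^h(\cdot|z_t^h)\|P_M^h(\cdot|z_t^h))^2$ (using $H^2\ge\tv^2$), and (ii)~a corrupted part $(\sigma_t^h)^{-1}\int(\md P_t^h-\md P_*^h)\log\sqrt{\md P_M^h/\md P_*^h}$. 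Splitting the integral in (ii) by the sign of $\md P_t^h-\md P_*^h$, applying $\log x\le x-1$ on each piece, and using Definition~\ref{def:Corruption Level} (so that $\int|\md P_t^h-\md P_*^h|\le c_t^h$ on the shared support, while Assumption~\ref{as:Bounded Condition} controls the integrand) bounds (ii) by $\lesssim c_t^h U_t(z_t^h)\beta/\sigma_t^h$; here the factor $\sqrt{\lambda+\text{in-sample error}}\lesssim\beta$ has been substituted, which makes the estimate mildly self-referential and is closed by solving the resulting quadratic in the in-sample error. Summing over $t$, a Markov/Ville bound together with a union bound over the finite class $\cM$ produces the $\log^2 B$ and $\log(|\cM|/\delta)$ terms, while the lower sandwich bound $\sigma_t^h\gtrsim U_t(z_t^h)/\alpha$ from Lemma~\ref{lm:converge_weight} collapses the corruption sum to $\alpha\beta\sum_t c_t^h\le\alpha\beta C$.

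The main obstacle — the one place the offline analysis genuinely differs from the online one — is that the weights $\sigma_t^h$ are computed from the whole dataset at once by Algorithm~\ref{alg:wi_off}, so they are not predictable with respect to the natural within-trajectory filtration and the exponential supermartingale above is not literally a supermartingale. Following \citet{ye2023corruptionb}, this is resolved by noting that the iteration of Algorithm~\ref{alg:wi_off} depends on the realized transitions only through the pairwise quantities $l_s^h(M,M')=\tv(P_M^h(\cdot|z_s^h)\|P_{M'}^h(\cdot|z_s^h))$, i.e.\ only through the stage-$h$ state--action pairs $\{z_s^h\}_{s=1}^T$ and the (fixed) model class; conditioning on $\{z_s^h\}_{s=1}^T$ therefore renders the weights deterministic, and — because the data-collection process draws the $T$ trajectories independently — the stage-$h$ next states $\{x_s^{h+1}\}_{s=1}^T$ remain conditionally independent with laws $P_s^h(\cdot|z_s^h)$, so the deviation bound reduces to a concentration statement for an independent weighted sum of bounded centered increments. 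Lemma~\ref{lm:converge_weight} is exactly what makes this legitimate: it guarantees the iteration converges (so the weights are well defined) and that the output satisfies $\max\{1,\psi(z_t^h)/2\}\le\sigma_t^h\le\max\{1,\psi(z_t^h)\}$ — the lower bound supplies $\sigma_t^h\gtrsim U_t(z_t^h)/\alpha$ for the corruption term, while the upper bound together with $\tv\le1$ confines all weights to a bounded range depending only on $\alpha$ and $\lambda$, so the union bound over $\cM$ (or a covering argument if $\cM$ is merely finitely coverable) costs only the stated logarithmic factors. Assembling the deviation bound with these weights and feeding it into the reduction above completes the proof.
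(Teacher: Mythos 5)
Your proposal is correct and follows essentially the same route as the paper's proof: reduce to the weighted MLE deviation bound from the online Lemma~\ref{lm:confidence_set} (uncorrupted part $\lesssim -\tv^2/\sigma$, corrupted part $\lesssim c_t^h U_t\beta/\sigma_t^h$), use the sandwich guarantee of Lemma~\ref{lm:converge_weight} to collapse the corruption sum to $O(\alpha C\sqrt{\lambda+E})$, and close the self-referential bound by solving the quadratic in $E$. Your explicit conditioning on $\{z_s^h\}_{s=1}^T$ to render the batch-computed weights deterministic (and your triangle-inequality derivation of the $4\beta^2$ bound) are minor refinements of steps the paper treats tersely, not a different argument.
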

We show the proof in Appendix \ref{ss:Lemmas for Offline Setting}.

\begin{proof}[Proof of Theorem \ref{th:Instance-Dependent Bound}]
Define the event
\[
A_2 = \Big\{M_*\in\cM_t,~\text{and}~\sum_{t=1}^T \tv(P_*^h(\cdot|z_s^h)\|P_{\bar M_t}^h(\cdot|z_s^h))^2/\sigma_s^h \le 2\beta^2\Big\}.
\]
According to Lemma \ref{lm:bellman_decomposition}, we know that $A_1$ holds with probability at least $1-\delta$.
First of all, we obtain
\begin{align*}
\subopt(\hat{\pi},x^1) = & V_*^1(x^1) - V_{\hM}^1(x^1) + V_{\hM}^1(x^1) - V_{\hpi}^1(x^1)\\
\le & V_*^1(x^1) - V_{\hM}^1(x^1)\\
\le & \E_{\pi_*}\sum_{h=1}^H \big(V_{\hM,\pi_*}^h(x^h) - V_{\hM,\hpi}^h(x^h)\big) - \E_{\pi_*}\sum_{h=1}^H \cE^h(\hM,z^h)\\
\le & - \E_{\pi_*}\sum_{h=1}^H \big(\E^{\hM}[V_{\hM}^{h+1}(x^{h+1})|z^h] - \E^{M_*}[V_{\hM}^{h+1}(x^{h+1})|z^h]\big)\\
\le & \E_{\pi_*}\sum_{h=1}^H \tv(P_{\hM}^h(\cdot|z^h)\|P_{\bar M}^h(\cdot|z^h)) + \E_{\pi_*}\sum_{h=1}^H \tv(P_{M_*}^h(\cdot|z^h)\|P_{\bar M}^h(\cdot|z^h))
\end{align*}
and the first inequality uses the condition that $M_*\in\hcM$, which implies that $V_{\hcM,\hpi}^1(x^1) \le V_{*,\hpi}^1(x^1)$, and the second inequality applies Lemma \ref{lm:bellman_decomposition} with $c_t^h=0$.

Hence, for any $M\in\hcM$, we have with probability at least $1-\delta$,
\begin{align}\label{eqap:aan}
&\E_{\pi_*}\sum_{h=1}^H \tv(P_M^h(\cdot|z^h)\|P_{\bar M}^h(\cdot|z^h))\notag\\
&\qquad \le \E_{\pi_*}\sum_{h=1}^H \min\Bigg\{1,\frac{\tv\big(P_M^h(\cdot|z^h)\|P_{\bar M}^h(\cdot|z^h)\big)}{\sqrt{\lambda+\sum_{t=1}^T\tv\big(P_M^h(\cdot|z_t^h)\|P_{\bar M}^h(\cdot|z_t^h)\big)^2/\sigma_t^h}}\sqrt{\lambda+\sum_{t=1}^T \frac{\tv\big(P_M^h(\cdot|z_t^h)\|P_{\bar M}^h(\cdot|z_t^h)\big)^2}{\sigma_t^h}}\Bigg\}\notag\\
&\qquad \le 3\beta \E_{\pi_*}\sum_{h=1}^H \frac{\tv\big(P_M^h(\cdot|z^h)\|P_{\bar M}^h(\cdot|z^h)\big)}{\sqrt{\lambda+\sum_{t=1}^T\tv\big(P_M^h(\cdot|z_t^h)\|P_{\bar M}^h(\cdot|z_t^h)\big)^2/\sigma_t^h}}\notag\\
&\qquad = 3\beta \E_{\pi_*}\sum_{h=1}^H \bigg[\frac{\tv\big(P_M^h(\cdot|z^h)\|P_{\bar M}^h(\cdot|z^h)\big)}{\sqrt{\lambda+\sum_{t=1}^T\tv\big(P_M^h(\cdot|z_t^h)\|P_{\bar M}^h(\cdot|z_t^h)\big)^2/\sigma_t^h}}\mathbbm{1}(\sigma^h(z^h)=1)\notag\\
&\qquad\qquad\qquad + \frac{\tv\big(P_M^h(\cdot|z^h)\|P_{\bar M}^h(\cdot|z^h)\big)/\sigma^h(z^h)}{\sqrt{\lambda+\sum_{t=1}^T\tv\big(P_M^h(\cdot|z_t^h)\|P_{\bar M}^h(\cdot|z_t^h)\big)^2/\sigma_t^h}}\sigma^h(z^h)\mathbbm{1}(\sigma^h(z^h)>1)\bigg]\notag\\
&\qquad = 3\beta \E_{\pi_*}\sum_{h=1}^H \bigg[\sup_{M,M'\in\hcM}\frac{\tv\big(P_M^h(\cdot|z^h)\|P_{\bar M}^h(\cdot|z^h)\big)}{\sqrt{\lambda+\sum_{t=1}^T\tv\big(P_M^h(\cdot|z_t^h)\|P_{\bar M}^h(\cdot|z_t^h)\big)^2/\sigma_t^h}}\notag\\
&\qquad\qquad\qquad + \frac{1}{\alpha}\Big(\sup_{M,M'\in\hcM}\frac{\tv\big(P_M^h(\cdot|z^h)\|P_{\bar M}^h(\cdot|z^h)\big)}{\sqrt{\lambda+\sum_{t=1}^T\tv\big(P_M^h(\cdot|z_t^h)\|P_{\bar M}^h(\cdot|z_t^h)\big)^2/\sigma_t^h}}\Big)^2\bigg],
\end{align}
where the second inequality uses Lemma \ref{lm:off_Confidence Set}, and the last inequality holds due to the definition of $\sigma^h(z^h)$ when $\sigma^h(z^h)>1$:
\[
\sigma^h(z^h) = \frac{1}{\sigma^h(z^h)}\sup_{M,M'\in\hcM}\frac{\tv\big(P_M^h(\cdot|z^h)\|P_{M'}^h(\cdot|z^h)\big)/\alpha}{\sqrt{\lambda+\sum_{t=1}^T\tv\big(P_M^h(\cdot|z_t^h)\|P_{\bar M}^h(\cdot|z_t^h)\big)^2/\sigma_t^h}}.
\]
Further, by defining the weighted form of information coefficient
\[
\IC^{\sigma}(\lambda,\hcM,\cD) = \sup_{M,M'\in\hcM} \E_{\pi_*}\bigg[ \frac{T\cdot\tv(P_M^h(\cdot|z^h)\|P_{M'}^h(\cdot|z^h))/\sigma^h(z^h)^{1/2}}{\lambda + \sum_{t=1}^T\tv(P_M^h(\cdot|z_t^h)\|P_{M'}^h(\cdot|z_t^h))^2/\sigma_t^h} \bigg| x^1=x\bigg],
\]
we deduce that
\begin{align*}
\E_{\pi_*}\sum_{h=1}^H \tv(P_M^h(\cdot|z^h)\|P_{\bar M}^h(\cdot|z^h)) \le 3\beta H\bigg[\sqrt{\frac{\IC^{\sigma}(\lambda,\hcM,\cD)}{T}} + \frac{\IC^{\sigma}(\lambda,\hcM,\cD)}{\alpha T}\bigg].
\end{align*}
Then, we use the inequality above with $M=M_*$ and $\hM$ to get
\begin{align*}
\subopt(\hat{\pi},x^1) \le & 6\beta H\bigg[\sqrt{\frac{\IC^{\sigma}(\lambda,\hcM,\cD)}{T}} + \frac{\IC^{\sigma}(\lambda,\hcM,\cD)}{\alpha T}\bigg]\\
= & 6\big(5\sqrt{\log(|\cM|/\delta)\log^2 B} + 7\alpha C\big)H\bigg[\sqrt{\frac{\IC^{\sigma}(\lambda,\hcM,\cD)}{T}} + \frac{\IC^{\sigma}(\lambda,\hcM,\cD)}{\alpha T}\bigg]\\
= & \tO\Big(\frac{H\sqrt{\IC^{\sigma}(\lambda,\hcM,\cD)\log(|\cM|/\delta)\log^2 B}}{T} + \frac{\IC^{\sigma}(\lambda,\hcM,\cD)CH}{T}\Big),
\end{align*}
where we take $\alpha=\sqrt{\log|\cM|\log^2 B}/C$. Therefore, we complete the proof.
\end{proof}

\subsection{Proof of Theorem \ref{th:offline_upper}}\label{ss:Proof of Theorem offline_upper}
Then, we will analyze the relationship between the weighted information coefficient $\IC^{\sigma}(\lambda,\hcM,\cD)$ and $\Cov(\cM,\cD)$.

\begin{lemma}\label{lm:off Connections between Weighted and Unweighted Coefficient}
Under Assumption \ref{as:offline_upper}, we have
\$
\IC^{\sigma}(\lambda,\hcM,\cD) \le \frac{1}{\Cov(\cM,\cD)}.
\$
\end{lemma}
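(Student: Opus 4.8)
The plan is to summarize the coverage of the re-weighted dataset by a single scalar $\mu := \min_{M,M'\in\cM} S^h(M,M')/\rho^h(M,M')^2$, where $S^h(M,M'):=\sum_{t=1}^T l_t^h(M,M')^2/\sigma_t^h$ and the minimum runs over the finitely many pairs with $\rho^h(M,M')>0$ (so $\mu$ is well defined, and positive under Assumption~\ref{as:offline_upper}), and then to show that $\mu$ obeys a self-referential lower bound forcing it to be large.

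First, I would reduce the summand of $\IC^{\sigma}$. Fix a stage $h$, a realized $z^h$, and a pair $M,M'\in\hcM$, and abbreviate $\rho=\rho^h(M,M')$, $S=S^h(M,M')$, $l=l^h(M,M',z^h)\le\rho$. By its definition in Definition~\ref{df:information coefficient}, $\sigma^h(z^h)$ is at least both $1$ and $l/(\alpha\sqrt{\lambda+S})$, hence $l^2/\sigma^h(z^h)\le\min\{l^2,\,\alpha l\sqrt{\lambda+S}\}$, and the $\IC^{\sigma}$-summand at $(z^h,M,M')$ is at most $\min\{T\rho^2/(\lambda+S),\,T\alpha\rho/\sqrt{\lambda+S}\}$. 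So it suffices to lower bound $\lambda+S\ge S$ by a uniform multiple of $\rho^2$, which by construction is what $\mu$ provides: $S\ge\mu\rho^2$.

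Next, I would pin down $\mu$. Since $S^h(M,M')\ge\mu\,\rho^h(M,M')^2\ge\mu\, l_t^h(M,M')^2$ for all pairs and all $t$, the quantity $\psi(z_t^h)$ of Lemma~\ref{lm:converge_weight}, which is $\sup_{M,M'}l_t^h(M,M')/(\alpha\sqrt{\lambda+S^h(M,M')})$ computed with the same converged weights, is at most $1/(\alpha\sqrt{\mu})$; the upper estimate of Lemma~\ref{lm:converge_weight} then gives $\sigma_t^h\le\max\{1,1/(\alpha\sqrt{\mu})\}$ for every $t$. Substituting this back into $S^h(M,M')=\sum_t l_t^h(M,M')^2/\sigma_t^h$ and using Assumption~\ref{as:offline_upper} yields, for every pair, $S^h(M,M')\ge\min\{1,\alpha\sqrt{\mu}\}\,T\,\Cov(\cM,\cD)\,\rho^h(M,M')^2$; dividing by $\rho^h(M,M')^2$ and taking the minimum over pairs gives $\mu\ge T\,\Cov(\cM,\cD)\min\{1,\alpha\sqrt{\mu}\}$, which I would solve by cases ($\alpha\sqrt{\mu}\ge1$ versus $\alpha\sqrt{\mu}<1$) to get $\mu\ge\min\{T\Cov(\cM,\cD),\,T^2\Cov(\cM,\cD)^2\alpha^2\}$.

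Finally, combining with $S\ge\mu\rho^2$ from the first step, the summand is at most $\min\{T/\mu,\,T\alpha/\sqrt{\mu}\}$; the two regimes above match the two branches of this minimum — if $\mu\ge T\Cov(\cM,\cD)$ then $T/\mu\le1/\Cov(\cM,\cD)$, and if $\mu\ge T^2\Cov(\cM,\cD)^2\alpha^2$ then $\sqrt{\mu}\ge T\Cov(\cM,\cD)\alpha$ so $T\alpha/\sqrt{\mu}\le1/\Cov(\cM,\cD)$. Hence the $\IC^{\sigma}$-summand is bounded by $1/\Cov(\cM,\cD)$ pointwise in $z^h$ and in $(M,M')$, and taking $\sup_{M,M'}$, then $\E_{\pi_*}$, then $\max_h$ gives $\IC^{\sigma}(\lambda,\hcM,\cD)\le 1/\Cov(\cM,\cD)$. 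I expect the middle step to be the main obstacle: the weights $\sigma_t^h$ are defined self-referentially through $S^h$, so $S^h(M,M')$ cannot be controlled pair by pair — a single sample's weight may be inflated by an unrelated pair — and packaging all of the coverage information into the one scalar $\mu$ is exactly what turns the fixed point of Algorithm~\ref{alg:wi_off} into a tractable scalar inequality; the factor-$2$ slack in Lemma~\ref{lm:converge_weight} only affects constants, which are absorbed into $\tcO(\cdot)$.
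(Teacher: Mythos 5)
Your proof is correct. It rests on the same two ingredients as the paper's: the fixed-point characterization of the converged weights from Lemma~\ref{lm:converge_weight} and the coverage condition of Assumption~\ref{as:offline_upper}, combined into a self-bounding scalar inequality that is then resolved by a two-case analysis. The difference is in the bookkeeping. The paper first derives a uniform bound $\max_t\sigma_t^h\le\max\{1,1/(\alpha^2T\Cov(\cM,\cD))\}$ by pulling $\max_s\sqrt{\sigma_s^h}$ out of the weight recursion, then splits on whether $\alpha^2T\Cov(\cM,\cD)\ge1$ and, in the nontrivial case, runs a second self-bounding argument on the query-point quantity $\psi(z^h)$ to get $\psi(z^h)\le1/(T\Cov(\cM,\cD))$. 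You instead collapse all of the coverage information into the single scalar $\mu=\min_{M,M'}S^h(M,M')/\rho^h(M,M')^2$, obtain the fixed-point inequality $\mu\ge T\Cov(\cM,\cD)\min\{1,\alpha\sqrt{\mu}\}$, and pair its two branches with the two branches of your pointwise bound $\min\{T\rho^2/(\lambda+S),\,T\alpha\rho/\sqrt{\lambda+S}\}$ on the $\IC^{\sigma}$-summand. The two routes are quantitatively equivalent (your case $\alpha\sqrt{\mu}<1$ reproduces the paper's $\sigma_t^h\le1/(\alpha^2T\Cov(\cM,\cD))$), but yours is arguably cleaner: it avoids the paper's duplicated case condition (the paper's second case is even mistyped as $\alpha^2T\Cov(\cM,\cD)\ge1$ again) and makes explicit why the self-reference of the weights is harmless, namely that only the pair-uniform scalar $\mu$ matters. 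One small note: your closing remark about absorbing the factor-$2$ slack of Lemma~\ref{lm:converge_weight} into $\tcO(\cdot)$ is unnecessary, since you only invoke the upper estimate $\sigma_t\le\max\{1,\psi(z_t)\}$, and your final bound is exactly $1/\Cov(\cM,\cD)$ as required.
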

The proof is presented in Appendix \ref{ss:Lemmas for Offline Setting}.

Now, we are ready to prove Theorem \ref{th:offline_upper}.
\begin{proof}[Proof of Theorem \ref{th:offline_upper}]
According to Theorem \ref{th:Instance-Dependent Bound}, we have with probability at least $1-2\delta$,
\begin{align*}
\subopt(\hat{\pi},x^1) =  \tO\Big(\frac{H\sqrt{\IC^{\sigma}(\lambda,\hcM,\cD)\log(|\cM|/\delta)\log^2 B}}{T} + \frac{\IC^{\sigma}(\lambda,\hcM,\cD)CH}{T}\Big).
\end{align*}
By invoking Lemma \ref{lm:off Connections between Weighted and Unweighted Coefficient}, we complete the proof.
\end{proof}

\subsection{Offline Lower bound}\label{sec-offline-lower}
In this section, we provide the lower bound for offline learning.  Here, we used the hard-to-learn instance in Section \ref{ss:Proof of Theorem online_lower1}, while modified the transition probability at stage $h$ from $1/4$ or $3/4$ to
\$
&P_{a_h^*}^{h}(x_2|x_1,a)=
\begin{cases}
    \frac{1}{2}+\eta, & a = a_h^*\\
    \frac{1}{2}-\eta, & a \ne a_h^*,
\end{cases}\\
\$
for different corruption level $C$ and data-size $T$.

For the offline data collection process, we employ the behavior policy $\pi^v$, which with probability $1-\epsilon$ will select the action $a_d$, and with probability $\epsilon$, uniform select an action from $\{a_1,..,a_{d-1}\}$. For each stage $h\in[H]$, the optimal action $a^*_h$ is uniform randomly selected from the action set $\cA=\{a_1,...,a_{d-1}\}$ and we construct an auxiliary transition probability function $P^h_{0}$ without  optimal action such that
    \begin{align*}
        P^h_0(x_2|x_1,a)=\frac{1}{2}-\eta, & \forall a\in \cA.
    \end{align*}
Then the following lemma provides upper and lower bounds for the visiting times of state-action pair $(x_1,a)$.
    \begin{lemma}\label{lemma:offline-visit}
        For each stage $h\in[H]$ and fixed action $a\in \{a_1,...,a_{d-1}\}$, if the data-size $T$ satisfied $T>4e^2(d-1)^2H^2\log(1/\delta)/\epsilon^2$,then 
        with probability at least $1-\delta$, the behavior policy visit the state $x_1$ and take action $a$ no less than times $\epsilon T/ \big(4eH(d-1)\big)$ during the data collection process. In addition, with probability at least $1-\delta$, the visiting times is no more than $3\epsilon T/ \big(H(d-1)\big)$.
    \end{lemma}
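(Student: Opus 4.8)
The plan is to reduce the statement to a one-dimensional concentration argument about a sum of i.i.d.\ Bernoulli indicators. First I would compute the probability that a single trajectory of the data-collection process visits the state $x_1$ at stage $h$ and plays the fixed action $a$ there. Starting from $x_0$, the chain reaches $x_1$ at stage $h$ exactly when it takes the self-loop at $x_0$ for the first $h-1$ stages and then jumps, an event of probability $(1-1/H)^{h-1}\cdot(1/H)$; conditioned on being at $x_1$, the behavior policy $\pi^v$ plays the fixed $a\in\{a_1,\dots,a_{d-1}\}$ with probability $\epsilon/(d-1)$. Hence the per-episode success probability is $p=(1-1/H)^{h-1}\cdot(1/H)\cdot\epsilon/(d-1)$, and using $1/e\le (1-1/H)^{h-1}\le 1$ for $h\le H$ we obtain the two-sided bound $\epsilon/\big(eH(d-1)\big)\le p\le \epsilon/\big(H(d-1)\big)$.

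Next I would let $Y_i$ be the indicator that trajectory $i$ visits $x_1$ at stage $h$ and plays $a$ there, and set $N=\sum_{i=1}^T Y_i$. The point I would argue carefully is that the $Y_i$'s are i.i.d.\ $\mathrm{Bernoulli}(p)$ despite the adaptive corruption: the corruption strategy only perturbs the transition \emph{out} of $x_1$ into the absorbing states $x_2,x_3$, so it affects neither whether the chain reaches $x_1$ at stage $h$ (which depends solely on the uncorrupted self-loop at $x_0$) nor the action drawn by the fixed behavior policy, and $x_1$ is visited at most once per trajectory. Therefore $\E[N]=Tp\in\big[\epsilon T/(eH(d-1)),\,\epsilon T/(H(d-1))\big]$.

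Then I would apply the Azuma--Hoeffding inequality (Lemma \ref{lemma:azuma}) to the $[0,1]$-valued independent summands in both directions: with probability at least $1-\delta$, $N\ge \E[N]-\sqrt{2T\log(1/\delta)}$, and separately with probability at least $1-\delta$, $N\le \E[N]+\sqrt{2T\log(1/\delta)}$. For the lower bound it remains to verify $\sqrt{2T\log(1/\delta)}\le \tfrac34\cdot\epsilon T/\big(eH(d-1)\big)$, which after squaring is $T\ge \tfrac{32}{9}e^2(d-1)^2H^2\log(1/\delta)/\epsilon^2$ and is implied by the hypothesis $T>4e^2(d-1)^2H^2\log(1/\delta)/\epsilon^2$; combined with $\E[N]\ge \epsilon T/(eH(d-1))$ this gives $N\ge \epsilon T/\big(4eH(d-1)\big)$. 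For the upper bound the same hypothesis on $T$ gives $\sqrt{2T\log(1/\delta)}\le 2\epsilon T/\big(H(d-1)\big)$, and with $\E[N]\le \epsilon T/(H(d-1))$ this yields $N\le 3\epsilon T/\big(H(d-1)\big)$.

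The argument is essentially routine; the only step needing genuine care is the i.i.d.\ claim for the visit-and-play indicators under an adaptive adversary, which I would justify via the structural fact that corruption touches only the transitions leaving the (single, absorbing-bound) visit to $x_1$. Matching the stated $4e^2$ constant is then just bookkeeping in the two Hoeffding applications.
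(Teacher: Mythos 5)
Your proposal is correct and follows essentially the same route as the paper's proof: compute the per-episode probability $(1-1/H)^{h-1}\cdot(1/H)\cdot\epsilon/(d-1)$ of visiting $x_1$ at stage $h$ and playing $a$, sandwich it in $[\epsilon/(eH(d-1)),\,\epsilon/(H(d-1))]$, and apply Azuma--Hoeffding in each direction, absorbing the $\sqrt{2T\log(1/\delta)}$ deviation using the hypothesis on $T$ (the paper does this absorption via $x^2+y^2\ge 2xy$ while you verify the inequality directly, but this is only bookkeeping). Your explicit remark that the visit-and-play indicators are unaffected by the corruption (which only touches transitions out of $x_1$) is a point the paper defers to the following lemma, but it does not change the argument.
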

\begin{proof}[Proof of Lemma \ref{lemma:offline-visit}]
    For any episode $T$ and stage $h\in[H]$, we define the random variable $y_i= \mathds{1} (x_{i}^h=x_1,a_i^h=a)$ as the indicator function of visiting the state $x_i$ and taking action $a$ at stage $h$ of episode $i$.
    Since the agent, starting from the current state $x_0$, will transition to state $x_1$ with a probability of $1/H$, regardless of the selected action, the probability that the agent visits state $x_1$ at stage $h$ and take action $a$ is upper bounded by
    \begin{align*}
        \Pr(y_i=1)=\bigg(1-\frac{1}{H}\bigg)^{h-1} \cdot \frac{1}{H}\cdot \frac{\epsilon}{d-1}\ge \frac{1}{eH}\cdot \frac{\epsilon}{d-1}.
    \end{align*}
    Therefore, applying the Azuma–Hoeffding inequality (Lemma \ref{lemma:azuma}), with a probability of at least $1-\delta$, we have:
    \begin{align*}
        \sum_{i=1}^T y_i\ge \frac{T}{eH}\cdot \frac{\epsilon}{d-1}- \sqrt{2T\log(1/\delta)}\ge \frac{T}{2eH}\cdot \frac{\epsilon}{d-1} - \frac{e(d-1)H}{\epsilon}\cdot\log (1/\delta),
    \end{align*}
    where the last inequality holds due to $x^2+y^2\ge 2xy$. Therefore, for $T>4e^2(d-1)^2H^2\log(1/\delta)/\epsilon^2$, with probability at least $1-\delta$, we have 
    \begin{align*}
        \sum_{i=1}^T y_i\ge \frac{T}{4eH} \cdot \frac{\epsilon}{d-1},
    \end{align*}
    which completes the proof of lower bounds in Lemma \ref{lemma:offline-visit}. 

    On the other hand, for the upper bound, we have 
    \begin{align}
        \Pr(y_i=1)=\bigg(1-\frac{1}{H}\bigg)^{h-1} \cdot \frac{1}{H} \leq \frac{1}{H}\cdot \frac{\epsilon}{d-1}.\notag
    \end{align}
    Similarly, applying the Azuma–Hoeffding inequality (Lemma \ref{lemma:azuma}), with a probability of at least $1-\delta$, we have:
    \begin{align*}
        \sum_{i=1}^T y_i\leq \frac{T}{H}\cdot \frac{\epsilon}{d-1}+ \sqrt{2T\log(1/\delta)}\leq \frac{2T}{H}\cdot \frac{\epsilon}{d-1}+\frac{(d-1)H}{2\epsilon}\cdot\log (1/\delta),
    \end{align*}
    where the last inequality holds due to $x^2+y^2\ge 2xy$. Therefore, for $T>4e^2(d-1)^2H^2\log(1/\delta)/\epsilon^2$, with probability at least $1-\delta$, we have 
    \begin{align*}
        \sum_{i=1}^T y_i\leq \frac{3T}{H} \cdot \frac{\epsilon}{d-1},
    \end{align*}
    which completes the proof of upper bounds in Lemma \ref{lemma:offline-visit}.
\end{proof}
For simplicity, we denote $\cE$ as the event where the high probability event in Lemma \ref{lemma:offline-visit} holds for all stage $h\in[H]$ and $a\in \{a_1,...,a_{d-1}\}$. Now, for behavior policy $\pi^{v}$, the following corruption strategy is applied to the transition probability $P_{a_h^*}^h$: if the optimal action $a^*_h$ is selected at stage $h$ with the current state $x_1$, then the adversary corrupts the transition probability $P_{a_h^*}^h$ to $P^h_0$.
Under this situation, the following lemma provides the upper bound of corruption level and lower bounds for the data coverage coefficient $\Cov(\cM,\cD)$.
\begin{lemma}\label{lemma:offline-constant}
  Conditioned on the event $\cE$, the corruption level is upper bounded by 
  \begin{align*}
      C\leq \frac{3\epsilon T}{(d-1)H} \cdot\frac{4\eta}{1-2\eta}.
  \end{align*}
  In addition, the data coverage coefficient $\Cov(\cM,\cD)$ is lower bounded by
  \begin{align*}
      \Cov(\cM,\cD)\ge \frac{\epsilon}{4eH(d-1)}.
  \end{align*}
\end{lemma}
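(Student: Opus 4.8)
The plan is to separately control the two quantities by accounting, stage by stage and action by action, for the contribution of the corrupted state-action visits. For the corruption level, recall from Definition \ref{def:Corruption Level} that the per-episode corruption at stage $h$ is $c_t^h = \sup_{x^{h+1}}\big|P_t^h(x^{h+1}\mid x_1,a_h^*)/P_*^h(x^{h+1}\mid x_1,a_h^*)-1\big|$, which is nonzero only on the episodes where the behavior policy reaches $x_1$ and selects the optimal action $a_h^*$. On such an episode, the adversary switches the transition from $P_{a_h^*}^h$, assigning mass $1/2+\eta$ (resp.\ $1/2-\eta$) to $x_2$ (resp.\ $x_3$), to the auxiliary $P_0^h$, assigning $1/2-\eta$ (resp.\ $1/2+\eta$). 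The ratio on $x_2$ is $(1/2-\eta)/(1/2+\eta)$ and on $x_3$ is $(1/2+\eta)/(1/2-\eta)$, so $c_t^h = (1/2+\eta)/(1/2-\eta)-1 = 2\eta/(1/2-\eta) = 4\eta/(1-2\eta)$. First I would multiply this per-episode cost by the number of episodes on which it is incurred: conditioned on $\cE$, Lemma \ref{lemma:offline-visit} bounds the number of visits to $(x_1,a_h^*)$ at stage $h$ by $3\epsilon T/\big(H(d-1)\big)$, so summing this bound over $h\in[H]$ — but noting only the single optimal action per stage is corrupted, so there is no extra factor of $d-1$ — gives $C \le H \cdot \tfrac{3\epsilon T}{H(d-1)}\cdot\tfrac{4\eta}{1-2\eta} = \tfrac{3\epsilon T}{(d-1)H}\cdot\tfrac{4\eta}{1-2\eta}$, which is the claimed bound.

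For the coverage coefficient, I would go back to Assumption \ref{as:offline_upper}: $\Cov(\cM,\cD)$ is the largest constant such that $\tfrac1T\sum_{t=1}^T \tv\big(P_M^h(\cdot\mid z_t^h)\|P_{M'}^h(\cdot\mid z_t^h)\big)^2 \ge \Cov(\cM,\cD)\,\big(\rho^h(M,M')\big)^2$ for all $h$ and all distinct $M,M'$. In the constructed tabular family, two models differ only in the choice of some optimal action $a_h^*$ at some stage $h$, and the TV distance between $P_M^h(\cdot\mid x_1,a)$ and $P_{M'}^h(\cdot\mid x_1,a)$ is $2\eta$ precisely when $a$ is the distinguishing action and $0$ otherwise (and the empty-support/absorbing states contribute nothing); hence $\rho^h(M,M') = 2\eta$. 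On the data side, only the state-action pairs $(x_1,a)$ with $a\in\{a_1,\dots,a_{d-1}\}$ carry any TV mass, and the empirical average $\tfrac1T\sum_t \tv(\cdot)^2$ picks up a contribution $(2\eta)^2$ weighted by the fraction of trajectories hitting $(x_1,a)$ at stage $h$. By the lower bound in Lemma \ref{lemma:offline-visit}, conditioned on $\cE$ this fraction is at least $\epsilon/\big(4eH(d-1)\big)$, so the ratio to $(2\eta)^2 = \rho^h(M,M')^2$ is at least $\epsilon/\big(4eH(d-1)\big)$, giving $\Cov(\cM,\cD) \ge \epsilon/\big(4eH(d-1)\big)$.

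The main obstacle I anticipate is not any single computation but making sure the bookkeeping is exactly right: (i) confirming that in the data-distribution normalization the number of corrupted episodes is counted once per stage (only $a_h^*$ is attacked), not once per action, so that no spurious factor of $d-1$ enters the corruption bound; and (ii) verifying that the measure $\rho^h(M,M')$ in Assumption \ref{as:offline_upper}, which is a supremum over $z$, really equals $2\eta$ and is attained at the single distinguishing pair $(x_1, a)$, while the empirical average in the same assumption only averages over the visited pairs — so the worst pair of models $M,M'$ is one whose distinguishing action is the least-visited among $\{a_1,\dots,a_{d-1}\}$, which is exactly the pair for which Lemma \ref{lemma:offline-visit}'s lower bound is the binding constraint. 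Once these two alignments are pinned down, both inequalities follow by the direct substitutions above.
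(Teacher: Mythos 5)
Your computation of the per-episode corruption ($4\eta/(1-2\eta)$, attained at $x_3$), your use of the upper bound in Lemma \ref{lemma:offline-visit} for the number of corrupted episodes per stage, and your entire argument for the coverage coefficient all match the paper's proof and are correct. (A cosmetic point: when $M$ and $M'$ both have optimal actions at stage $h$, there are two distinguishing actions with TV distance $2\eta$, not one, but this does not affect your lower bound on $\Cov(\cM,\cD)$.)

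However, there is a genuine flaw in how you pass from the per-stage count to the bound on $C$. You write that summing the per-stage contribution over $h\in[H]$ gives
$H\cdot\tfrac{3\epsilon T}{H(d-1)}\cdot\tfrac{4\eta}{1-2\eta}=\tfrac{3\epsilon T}{(d-1)H}\cdot\tfrac{4\eta}{1-2\eta}$, but the left-hand side actually equals $\tfrac{3\epsilon T}{d-1}\cdot\tfrac{4\eta}{1-2\eta}$ --- the factor of $H$ cancels rather than lands in the denominator. So if you genuinely sum over stages, your approach proves a bound that is a factor of $H$ weaker than the lemma's claim, and the algebra slip is what makes it appear to close. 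The correct route, which is what the paper does, is to \emph{not} sum over stages at all: Definition \ref{def:Corruption Level} defines $C$ as a per-stage budget, requiring $\sum_{t=1}^T c_t^h\le C$ separately for each $h\in[H]$, so $C$ is the maximum over $h$ of the per-stage totals. Conditioned on $\cE$, each per-stage total is at most $\tfrac{3\epsilon T}{H(d-1)}\cdot\tfrac{4\eta}{1-2\eta}$, and that single-stage bound is already the bound on $C$. Once you replace the sum over $h$ by this maximum, the rest of your argument goes through.
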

\begin{proof}[Proof of Lemma \ref{lemma:offline-constant}]
According the definition of corruption strategy, if the optimal action $a^*_h$ is selected at stage $h$ with the current state $x_1$, the corresponding corruption level within this episode is denoted by
\begin{align}
    c_t^h = c_t^h(x_1,a_h^*) = \sup_{x^{h+1}\in\Delta_t(\cX)}\Big|\frac{P_0^h(x^{h+1}|x_1,a_h^*)}{P_{a_h^*}^j(x^{h+1}|x_1,a_h^*)}-1\Big|=\frac{4\eta}{1-2\eta}.\notag
\end{align}
It is worth noting that the event $\cE$ only focused on the transition before reaching the state $x_1$, and remains unaffected by the adversary corruption employed. Therefore, conditioned on the event $\cE$, for each stage $h\in[H]$, the total corruption level throughout the offline data collection process is upper-bounded by
\begin{align*}
    \sum_{i=1}^T c_t^h\leq \frac{3\epsilon T}{(d-1)H}\cdot\frac{4\eta}{1-2\eta}.
\end{align*}
Thus, the corruption level for the employed strategy satisfied
\begin{align*}
    C\leq \frac{3\epsilon T}{(d-1)H}\cdot\frac{4\eta}{1-2\eta}.
\end{align*}
Regarding the data coverage coefficient $\Cov(\cM,\cD)$, it is worth noting that all models $M \in \cM$ share identical transition probabilities across states ${x_0,x_2,x_3}$. Consequently, we only need to focus on state $x_1$ and action $a\in \{a_1,...,a_{d-1}\}$. Conditioned on event $\cE$, we have
\begin{align*}
    \rho^h(M,M')^2&\leq \sum_{a\in \{a_1,...,a_{d-1}\}}\tv(P^h_{M}(\cdot|x_1,a)\|P^h_{M'}(\cdot|x_1,a))^2\\ 
    &\leq \frac{4eH(d-1)}{\epsilon}\cdot \frac{1}{T}\sum_{t=1}^T\tv(P_{M}^h(\cdot|z_t^h)\|P_{M'}^h(\cdot|z_t^h))^2,
\end{align*}
where the second inequality holds due to the definition of events $\cE$. Thus, we have
\begin{align*}
    \Cov(\cM,\cD)\ge \frac{\epsilon}{4eH(d-1)},
\end{align*}
and we complete the proof of Lemma \ref{lemma:offline-constant}.
\end{proof}
With the help of these lemmas, we can start the proof of Theorem \ref{th:offline_lower}.
\begin{proof}[Proof of Theorem \ref{th:offline_lower}]
According to the corruption strategy, regardless of the actual optimal action $a_h^*$, during the offline collection process, the behavior of the transition probability function is same as $P_0^h$. Rough speaking, the agent cannot outperform random guessing of the optimal action $a_h^*$ and subsequently outputting the corresponding optimal policy $\hat{\pi}^h$. Thus, when the optimal action $a^*_h$ is uniform randomly selected from the action set $\cA=\{a_1,...,a_{d-1}\}$, the sub-optimally gap of policy $\hat{\pi}$ can be denoted by
\begin{align*}
    \mathbb{E}[\subopt(\hat\pi, x)]&=\mathbb{E}\Big[\sum_{h=2}^{H-1} \mathds{1} (x^h=x_1)\cdot \mathds{1}  (\hat{\pi}^h(x_1) \ne a_h^*) \cdot 2\eta\Big]\notag\\
    &\ge \mathbb{E}\Big[\sum_{h=2}^{H-1} \frac{1}{eH}\cdot \mathds{1}  (\hat{\pi}^h(x_1) \ne a_h^*) \cdot 2\eta\Big]\notag\\
    &=\frac{H-1}{eH}\cdot \frac{d-2}{d-1}\cdot 2\eta.
\end{align*}
Now, for a given dataset size $T$, corruption level $C$ and data coverage coefficient $\Cov(\cM,\cD)$, according to Lemma \ref{lemma:offline-constant}, we can select the parameter as following:
\begin{align*}
    \epsilon=\Cov(\cM,\cD)\cdot 4eH(d-1),\eta =\frac{\Cov(d-1)H}{24 \epsilon T}=\frac{C}{96e \Cov(\cM,\cD) T}.
\end{align*}
Then, if the dataset size $T$ satisfied $T>C/(24e\Cov(\cM,\cD))$ and $d>3,H>2$, then we have
\begin{align*}
    \mathbb{E}[\subopt(\hat\pi, x)]\ge \Omega(\eta)= \Omega\Big(\frac{C}{\Cov(\cM,\cD) T}\Big). 
\end{align*}
Thus, we complete the proof of Theorem \ref{th:offline_lower}.
\end{proof}





\section{Proof of Supporting Lemmas}
\subsection{Lemmas for Online Setting}\label{ss:Lemmas for Online Setting}
\begin{proof}[Proof of Lemma \ref{lm:confidence_set}]
For simplicity, we assume that class $\cM$ has finite elements.
Let
\$
    E_t = \sum_{s=1}^t\tv\big(P_*^h(\cdot|z_s^h)\|P_{\bar M_{t+1}}^h(\cdot|z_s^h)\big)^2/\sigma_s^h.
\$
For each time $s\in[t-1]$, we define the region $\cD_s^h=\{x^{h+1}\in\cX:~P_s^h(x^{h+1}|z_s^h)\le P_*^h(x^{h+1}|z_s^h)\}$. For any fixed $M\in\cM$, we have
\#\label{eq:ap_aaa}
    &\E\sum_{s=1}^{t-1}\frac{1}{\sigma_s^h}\log\sqrt{\frac{\md P^h_{M}(x^{h+1}|z_s^h)}{\md P^h_*(x^{h+1}|z_s^h)}}\notag\\
    &\qquad = \underbrace{\sum_{s=1}^{t-1}\frac{1}{\sigma_s^h}\int_{\cD_s^h}\md P^h_s(x^{h+1}|z_s^h)\log\sqrt{\frac{\md P^h_{M}(x^{h+1}|z_s^h)}{\md P^h_*(x^{h+1}|z_s^h)}}}_{P_1} + \underbrace{\sum_{s=1}^{t-1}\frac{1}{\sigma_s^h}\int_{\bar \cD_s^h}\md P^h_s(x^{h+1}|z_s^h)\log\sqrt{\frac{\md P^h_{M}(x^{h+1}|z_s^h)}{\md P^h_*(x^{h+1}|z_s^h)}}}_{P_2}.
\#
For the term $P_1$, we derive
\#\label{eq:ap_aab}
    P_1 &\le \frac{1}{2}\sum_{s=1}^{t-1}\frac{1}{\sigma_s^h}\int_{\cD_s^h}\md P^h_*(x^{h+1}|z_s^h)\log\Big(\frac{\md P^h_{M}(x^{h+1}|z_s^h)}{\md P^h_*(x^{h+1}|z_s^h)}\Big)\notag\\
    &=-\frac{1}{2}\sum_{s=1}^{t-1}\frac{1}{\sigma_s^h}\int_{\cD_s^h}\md P^h_*(x^{h+1}|z_s^h)\log\Big(\frac{\md P^h_*(x^{h+1}|z_s^h)}{\md P^h_{M}(x^{h+1}|z_s^h)}\Big).
\#
For the term $P_2$, we have
\#\label{eq:ap_aac}
    P_2 &= \sum_{s=1}^{t-1}\underbrace{\frac{1}{2\sigma_s^h}\int_{\bar \cD_s^h}\big(\md P^h_s(x^{h+1}|z_s^h)-\md P^h_*(x^{h+1}|z_s^h)\big)\log\Big(\frac{\md P^h_{M}(x^{h+1}|z_s^h)}{\md P^h_*(x^{h+1}|z_s^h)}\Big)}_{Q}\notag\\
    &\qquad- \sum_{s=1}^{t-1}\frac{1}{2\sigma_s^h}\int_{\bar \cD_s^h}\md P^h_*(x^{h+1}|z_s^h)\log\Big(\frac{\md P^h_*(x^{h+1}|z_s^h)}{\md P^h_{M}(x^{h+1}|z_s^h)}\Big),
\#
from which we further bound the term $Q$ by
\#\label{eq:ap_aad}
    Q &\le \frac{1}{2\sigma_s^h}\int_{\bar \cD_s^h}\big(\md P^h_s(x^{h+1}|z_s^h)-\md P^h_*(x^{h+1}|z_s^h)\big)\cdot\bigg(\frac{\md P^h_{M}(x^{h+1}|z_s^h)}{\md P^h_*(x^{h+1}|z_s^h)}-1\bigg)\notag\\
    &\le \frac{1}{2\sigma_s^h}\sup_{x^{h+1}\in\bar \cD_s^h}\Big|\frac{\md P^h_s(x^{h+1}|z_s^h)}{\md P^h_*(x^{h+1}|z_s^h)}-1\Big|\int_{\bar \cD_s^h}\Big|\md P^h_{M}(x^{h+1}|z_s^h)-\md P^h_*(x^{h+1}|z_s^h)\Big|\notag\\
    &\le \frac{c_s^h}{2}\cdot\frac{\tv\big(P^h_*(\cdot|z_s^h)\big\|P^h_{M}(\cdot|z_s^h)\big)}{\sigma_s^h}.
\#
Therefore, by combining \eqref{eq:ap_aab}, \eqref{eq:ap_aac} and \eqref{eq:ap_aad} and take them back into \eqref{eq:ap_aaa}, we obtain that
\$
    &\E\sum_{s=1}^{t-1}\frac{1}{\sigma_s^h}\log\sqrt{\frac{\md P^h_{M}(x^{h+1}|z_s^h)}{\md P^h_*(x^{h+1}|z_s^h)}}\notag\\
    &\qquad\le -\frac{1}{2}\sum_{s=1}^{t-1}\frac{1}{\sigma_s^h}H\big(P^h_*(\cdot|z_s^h)\big\|P^h_{M}(\cdot|z_s^h)\big)^2 + \frac{1}{2}\sum_{s=1}^{t-1}\frac{c_s^h\tv\big(P^h_*(\cdot|z_s^h)\big\|P^h_{M}(\cdot|z_s^h)\big)}{\sigma_s^h}.
\$
Then, we need to bound the Variance:
\#\label{eq:ap_aae}
    \Var\bigg[\frac{1}{\sigma_s^h}\log\sqrt{\frac{\md P^h_{M}(x^{h+1}|z_s^h)}{\md P^h_*(x^{h+1}|z_s^h)}}\bigg] \le& \frac{1}{(\sigma_s^h)^2}\E\bigg[\log^2\sqrt{\frac{\md P^h_{M}(x^{h+1}|z_s^h)}{\md P^h_*(x^{h+1}|z_s^h)}}\bigg]\notag\\
    \le& \underbrace{\frac{1}{(\sigma_s^h)^2}\E_{x^{h+1}\sim|P_s^h(\cdot|z_s^h)-P_*^h(\cdot|z_s^h)|}\bigg[\log^2\sqrt{\frac{\md P^h_{M}(x^{h+1}|z_s^h)}{\md P^h_*(x^{h+1}|z_s^h)}}\bigg]}_{Q_1}\notag\\
    &\qquad + \underbrace{\frac{1}{(\sigma_s^h)^2}\E_{x^{h+1}\sim P_*^h(\cdot|z_s^h)}\bigg[\log^2\sqrt{\frac{\md P^h_{M}(x^{h+1}|z_s^h)}{\md P^h_*(x^{h+1}|z_s^h)}}\bigg]}_{Q_2}.
\#
For the term $Q_2$, by invoking Lemma \ref{lm:divergence inequality} and using Assumption \ref{as:Bounded Condition}, we obtain
\#\label{eq:ap_aaf}
    Q_2 &= \frac{1}{4(\sigma_s^h)^2}\E_{x^{h+1}\sim P_*^h(\cdot|z_s^h)}\bigg[\log^2\bigg(\frac{\md P^h_*(x^{h+1}|z_s^h)}{\md P^h_{M}(x^{h+1}|z_s^h)}\bigg)\bigg]\notag\\
    &\le \frac{1+\log B}{2(\sigma_s^h)^2}\kl\big(P^h_*(\cdot|z_s^h)\big\|P^h_{M}(\cdot|z_s^h)\big)\notag\\
    &\le \frac{(1+\log B)(3+\log B)}{2(\sigma_s^h)^2}H\big(P^h_*(\cdot|z_s^h)\big\|P^h_{M}(\cdot|z_s^h)\big)^2.
\#
For the term $Q_1$,
\#\label{eq:ap_aag}
    Q_1 &= \frac{1}{4(\sigma_s^h)^2}\int\big|\md P^h_s(x^{h+1}|z_s^h)-\md P^h_*(x^{h+1}|z_s^h)\big|\cdot\log^2\bigg(\frac{\md P^h_*(x^{h+1}|z_s^h)}{\md P^h_{M}(x^{h+1}|z_s^h)}\bigg)\notag\\
    &= \frac{1}{4(\sigma_s^h)^2}\sup_{x^{h+1}\in\cX}\Big|\frac{\md P^h_s(x^{h+1}|z_s^h)}{\md P^h_*(x^{h+1}|z_s^h)}-1\Big|\int\md P^h_*(x^{h+1}|z_s^h)\log^2\bigg(\frac{\md P^h_*(x^{h+1}|z_s^h)}{\md P^h_{M}(x^{h+1}|z_s^h)}\bigg)\notag\\
    &\le \frac{(\log B+1)c_s^h}{2(\sigma_s^h)^2}\cdot\kl\big(P^h_*(\cdot|z_s^h)\big\|P^h_{M}(\cdot|z_s^h)\big)\notag\\
    &\le \frac{(\log B+1)(\log B+3)c_s^h}{2(\sigma_s^h)^2}\cdot H\big(P^h_*(\cdot|z_s^h)\big\|P^h_{M}(\cdot|z_s^h)\big)^2\notag\\
    &\le \frac{(\log B+1)(\log B+3)c_s^h}{(\sigma_s^h)^2}\cdot \tv\big(P^h_*(\cdot|z_s^h)\big\|P^h_{M}(\cdot|z_s^h)\big),
\#
where the first inequality invokes Lemma \ref{lm:divergence inequality}, the second inequality uses Lemma \ref{lm:kl and Hellinger}, and the last inequality holds due to $H(P\|Q)^2\le2\tv(P\|Q)$.
Therefore, from \eqref{eq:ap_aae}, \eqref{eq:ap_aaf} and \eqref{eq:ap_aag}, the Variance is bounded as
\$
    \Var\bigg[\frac{1}{\sigma_s^h}\log\sqrt{\frac{\md P^h_{M}(x^{h+1}|z_s^h)}{\md P^h_*(x^{h+1}|z_s^h)}}\bigg] &\le \frac{(\log B+1)(\log B+3)}{2(\sigma_s^h)^2}H\big(P^h_*(\cdot|z_s^h)\big\|P^h_{M}(\cdot|z_s^h)\big)^2\notag\\
    &\qquad + \frac{(\log B+1)(\log B+3)c_s^h}{(\sigma_s^h)^2}\cdot \tv\big(P^h_*(\cdot|z_s^h)\big\|P^h_{M}(\cdot|z_s^h)\big).
\$
By applying Lemma \ref{lm:freedman} with $\lambda_0< 3/\log B$ and $b=\log B$, we get with probability at least $1-\delta$, for any $M\in\cM$
\$
&\sum_{s=1}^{t-1}\frac{1}{\sigma_s^h}\log\sqrt{\frac{\md P^h_{M}(x^{h+1}|z_s^h)}{\md P^h_*(x^{h+1}|z_s^h)}}\\
&\qquad\le  \frac{\log(|\cM|/\delta)}{\lambda_0} -\frac{1}{2}\sum_{s=1}^{t-1}\frac{1}{\sigma_s^h}H\big(P^h_*(\cdot|z_s^h)\big\|P^h_{M}(\cdot|z_s^h)\big)^2 + \frac{1}{2}\sum_{s=1}^{t-1}\frac{c_s^h\tv\big(P^h_*(\cdot|z_s^h)\big\|P^h_{M}(\cdot|z_s^h)\big)}{\sigma_s^h}\\
&\qquad\qquad+ \frac{\lambda_0(\log B+1)(\log B+3)}{2(1-\lambda_0\log 
B/3)}\sum_{s=1}^{t-1}\bigg(\frac{1}{2(\sigma_s^h)^2}H\big(P^h_*(\cdot|z_s^h)\big\|P^h_{M}(\cdot|z_s^h)\big)^2 + \frac{c_s^h}{(\sigma_s^h)^2}\tv\big(P^h_*(\cdot|z_s^h)\big\|P^h_{M}(\cdot|z_s^h)\big)\bigg).
\$
By taking $\lambda_0= 3/(19\log^2 B)$ and $M=\bar M_t$, we further get
\#\label{eq:ap_aah}
&\sum_{s=1}^{t-1}\frac{1}{\sigma_s^h}\log\sqrt{\frac{\md P^h_{\bar M_t}(x^{h+1}|z_s^h)}{\md P^h_*(x^{h+1}|z_s^h)}}\notag\\
&\qquad\le \frac{19\log(|\cM|/\delta)\log^2 B}{3} -\frac{1}{4}\sum_{s=1}^{t-1}\frac{1}{\sigma_s^h} H\big(P^h_*(\cdot|z_s^h)\big\|P^h_{\bar M_t}(\cdot|z_s^h)\big)^2\notag\\
&\qquad\qquad + \sum_{s=1}^{t-1}\frac{c_s^h\big(\tv\big(P^h_*(\cdot|z_s^h)\big\|P^h_{\bar M_s}(\cdot|z_s^h)\big) + \tv\big(P^h_{\bar M_s}(\cdot|z_s^h)\big\|P^h_{\bar M_t}(\cdot|z_s^h)\big)\big)}{\sigma_s^h}\notag\\
&\qquad\le \frac{19\log(|\cM|/\delta)\log^2 B}{3} -\frac{1}{4}\sum_{s=1}^{t-1}\frac{1}{\sigma_s^h} \tv\big(P^h_*(\cdot|z_s^h)\big\|P^h_{\bar M_t}(\cdot|z_s^h)\big)^2 + 2\alpha C\max_s\sqrt{\lambda+E_{s-1}},
\#
where the last inequality uses the induction that $M_*\in\cM_s$ and $\bar M_t\in\cM_{t-1}\subseteq\cM_s$, and the definition of the weight:
\begin{align*}
\sigma_s^h \ge & \frac{1}{\alpha}\cdot\sup_{M\in\cM_s}\frac{\tv\big(P_{M}^h(\cdot|z_s^h)\|P_{\bar M_s}^h(\cdot|z_s^h)\big)}{\sqrt{\lambda+\sum_{\tau=1}^{s-1}\tv\big(P_{M}^h(\cdot|z_{\tau}^h)\|P_{\bar M_s}^h(\cdot|z_{\tau}^h)\big)^2/\sigma_{\tau}^h}}\\
\ge& \frac{1}{\alpha}\cdot\frac{\tv\big(P_{M_*}^h(\cdot|z_s^h)\|P_{\bar M_s}^h(\cdot|z_s^h)\big)}{\sqrt{\lambda+\sum_{\tau=1}^{s-1}\tv\big(P_{M_*}^h(\cdot|z_{\tau}^h)\|P_{\bar M_s}^h(\cdot|z_{\tau}^h)\big)^2/\sigma_{\tau}^h}}.
\end{align*}
Since $\bar M_t$ is the maximizer of the log-likelihood,
\$
\sum_{s=1}^{t-1}\frac{1}{\sigma_s^h}\tv\big(P^h_*(\cdot|z_s^h)\big\|P^h_{\bar M_t}(\cdot|z_s^h)\big)^2 \le \frac{76\log(|\cM|/\delta)\log^2 B}{3} + 8\alpha C\max_s\sqrt{\lambda+E_{s-1}}.
\$

Additionally, we use the non-negativity of TV distance to get from \eqref{eq:ap_aah} that
\$
\sum_{s=1}^{t-1} \frac{1}{\sigma_s^h}\log P_*^h(x_s^{h+1}|z_s^h) &\ge \sum_{s=1}^t \frac{1}{\sigma_s^h}\log P_{\bar{M}_t}^h(x_s^{h+1}|z_s^h) - \frac{38\log(|\cM|/\delta)\log^2 B}{3} - 4\alpha C\max_s\sqrt{\lambda+E_{s-1}},
\$
which implies $M_*\in\cM_t$.

Moreover, for any $M\in\cM_t$ satisfying that
\begin{align}\label{eq:ap_aao}
\sum_{s=1}^{t-1} \frac{1}{\sigma_s^h}\log P_M^h(x_s^{h+1}|z_s^h) \ge \sum_{s=1}^t \frac{1}{\sigma_s^h}\log P_{\bar{M}_t}^h(x_s^{h+1}|z_s^h) - \beta^2,    
\end{align}
by taking this back into \eqref{eq:ap_aah} with a general $M$, we have
\begin{align*}
    &\frac{\beta^2}{2} -\frac{1}{4}\sum_{s=1}^{t-1}\frac{1}{\sigma_s^h} \tv\big(P^h_*(\cdot|z_s^h)\big\|P^h_{\bar M_t}(\cdot|z_s^h)\big)^2\\
    &\qquad \ge \sum_{s=1}^{t-1}\frac{1}{\sigma_s^h}\log\sqrt{\frac{\md P^h_{M}(x^{h+1}|z_s^h)}{\md P^h_*(x^{h+1}|z_s^h)}}\\
    &\qquad = \sum_{s=1}^{t-1}\frac{1}{\sigma_s^h}\log\sqrt{\frac{\md P^h_{M}(x^{h+1}|z_s^h)}{\md P^h_{\bar M_t}(x^{h+1}|z_s^h)}} + \sum_{s=1}^{t-1}\frac{1}{\sigma_s^h}\log\sqrt{\frac{\md P^h_{\bar M_t}(x^{h+1}|z_s^h)}{\md P^h_*(x^{h+1}|z_s^h)}}\\
    &\qquad \ge \sum_{s=1}^{t-1}\frac{1}{\sigma_s^h}\log\sqrt{\frac{\md P^h_{M}(x^{h+1}|z_s^h)}{\md P^h_{\bar M_t}(x^{h+1}|z_s^h)}} \ge -\frac{\beta^2}{2},
\end{align*}
which indicates that
\[
\sum_{s=1}^{t-1}\frac{1}{\sigma_s^h} \tv\big(P^h_*(\cdot|z_s^h)\big\|P^h_{\bar M_t}(\cdot|z_s^h)\big)^2 \le 4\beta^2.
\]
Therefore, we complete the proof.
\end{proof}

\begin{proof}[Proof of Lemma \ref{lm:bellman_decomposition}]
Start at step $h=1$:
\$
&V_M^1(x^1)-V_{\pi_M}^1(x^1)\\
&\qquad = \E_{a^1\sim\pi_M}\big[\E^M V_M^2(x^2) - \E^* V_{\pi_M}^2(x^2)\big]\\
&\qquad = \E_{a^1\sim\pi_M}\big[(\E^M V_M^2(x^2) - \E^* V_M^2(x^2)) + (\E^* V_M^2(x^2) - \E^t V_M^2(x^2)) + (\E^t V_M^2(x^2) - \E^t V_{\pi_M}^2(x^2))\\
&\qquad\quad + (\E^t V_M^2(x^2) V_{\pi_M}^2(x^2) - \E^* V_{\pi_M}^2(x^2))\big]\\
&\qquad \le \E_{a^1\sim\pi_M}\big[\E^M V_M^2(x^2) - \E^* V_M^2(x^2) + 2\tv(P_t^h(\cdot|x^1,a^1)\|P_*^h(\cdot|x^1,a^1)) + \E^t[V_M^2(x^2) - V_{\pi_M}^2(x^2)]\big]\\
&\qquad \le \E_{a^1\sim\pi_M}\big[\cE^1(M,x^1,a^1) + c_t^h(x^1,a^1) + \E^t[V_M^2(x^2) - V_{\pi_M}^2(x^2)]\big],
\$
where the first inequality holds from the fact that $V_M(\cdot),V_{\pi_M}(\cdot)\in[0,1]$, and the second inequality is due to the definition of bellman error and $\tv(P\|Q)\le 1/2\cdot\sup_x|P(x)/Q(x)-1|$. By further expanding the last term on the right-hand side of the inequality above, we complete the first inequality of the Lemma. Similarly, we can obtain the second inequality.
\end{proof}

\begin{proof}[Proof of Lemma \ref{lm:Relation between Information Ratio and Eluder Dimension}]
To condense notations, we use the notation $l(M,\bar M_t,z)=\tv(P_M(\cdot|z)\|P_{\bar M_t}(\cdot|z))$. Now, we follow the three steps in the proof of Lemma 5.1 from \citet{ye2023corruptiona}. 
\paragraph{Step I: Matched levels} The first step is to divide the sample set $\cS_T$ into $\log_2(T/\lambda)+1$ disjoint subsets
\$
\cS_T = \cup_{\iota=0}^{\log_2(T/\lambda)}\cS^{\iota}.
\$
For each $z_t\in\cS_T$, let $M_{z_t}\in\cM_t$ be the maximizer of
\$
\frac{l(M_{z_t}, \bar M_t, z_t)/\sigma_t^{1/2}}{\sqrt{\lambda + \sum_{s=1}^{t-1}l(M_{z_t}, \bar M_t, z_s)^2/\sigma_s}}.
\$
Since $l(M_{z_t}, \bar M_t, z_t)\in[0,1]$, we can decompose $\cS_T$ into $\log_2(T/\lambda)$ disjoint subsequences:
\[
\cS^{\iota} = \{z_t\in\cS_T \,|\, l(M_{z_t}, \bar M_t, z_t)^2\in(2^{-\iota-1},2^{-\iota}]\},
\]
and
\[
\cS^{\log_2(T/\lambda)} = \{z_t\in\cS_T \,|\, l(M_{z_t}, \bar M_t, z_t)^2\in[0,\lambda/T]\}.
\]
Correspondingly, we also divide $\rR^+$ into $\log_2(T/\lambda)+1$ disjoint subsets:
\[
\rR^+ = \cup_{\iota=-1}^{\log_2(T/\lambda)}\cR^{\iota},
\]
where we define 
\$
&\cR^{\iota}=[2^{\iota/2}\log|\cM|, 2^{(\iota+1)/2}\log|\cM|),~\text{for}~\iota=0,\ldots,\log_2(T/\lambda)-1,\\
&\cR^{\log_2(T/\lambda)}=[\sqrt{T/\lambda}\log|\cM|,+\infty),~ \cR^{\log_2(T/\lambda)} = [0,\log|\cM|).
\$
Then, there exists an $\iota_0\in\{-1,0,\ldots,\log_2(T/\lambda)\}$ such that $C\in\cR^{\iota_0}$.

\paragraph{Step II: Control weights in each level} For any $z_t\in\cS^{\log_2(T/\lambda)}$, we have
\#\label{eq:D_lambda_F_t_1}
\big(I_{\sigma}(\lambda,\cM,\cS_t)\big)^2 \le & \sup_{M\in\cM_t}\frac{l(M,\bar{M}_t,z_t)^2/\sigma_t^2}{\lambda + \sum_{s=1}^{t-1}l(M_{z_t},\bar M_t,z_s)^2/\sigma_s}\notag\\
\le & \frac{l(M,\bar{M}_t,z_t)^2}{\lambda} \le \frac{1}{T},
\#
which implies that
\[
\sum_{z_t\in\cS^{\log_2(T/\lambda)}}\big(I_{\sigma}(\lambda,\cM,\cS_t)\big)^2 \le 1.
\]
Moreover, for $\iota=0,\ldots,\log_2(T/\lambda)-1$, we aim to control the upper and lower bound of weights $\{\sigma_t:z_t\in\cS^{\iota}\}$. We define for $t\in[T]$,
\$
\psi_t =& \frac{1}{\alpha}\sup_{M\in\cM_t}\frac{l(M, \bar M_t, z_t)}{\sqrt{\lambda + \sum_{s=1}^{t-1}l(M, \bar M_t, z_s)^2/\sigma_s}}\\
= & \frac{1}{\alpha}\frac{l(M_{z_t}, \bar M_t, z_t)}{\sqrt{\lambda + \sum_{s=1}^{t-1}l(M_{z_t}, \bar M_t, z_s)^2/\sigma_s}}.
\$

When $\iota>\iota_0$, we have
\[
C < 2^{(\iota_0+1)/2}\log|\cM| < 2^{\iota/2}\log|\cM|.
\]
For any $z_t\in\cS^{\iota}$, we know that $l(M_{z_t},\bar M_t, z_t)^2<2^{-\iota}$. Hence, it follows that
\[
\psi_t \le \frac{1}{\alpha}\cdot\frac{2^{-\iota/2}}{\sqrt{\lambda}} \le \frac{C}{\sqrt{\log|\cM|}}\cdot\frac{2^{-\iota/2}}{\sqrt{\log|\cM|}} \le 1
\]
Since $\sigma_t=\max\{1,\psi_t\}$, we get $\sigma_t=1$ for all $z_t\in\cS^{\iota}$. 

When $\iota\le\iota_0$, for all $z_t\in\cS^{\iota}$ we get $C\ge2^{\iota_0/2}\log|\cM|\ge2^{\iota/2}\log|\cM|$, and $l(M,\bar M_t,z_t)^2\in(2^{-\iota-1},2^{-\iota})$. Then, we can verify that
\$
&\psi_t \le \frac{C}{\sqrt{\log|\cM|}}\cdot\frac{2^{-\iota/2}}{\sqrt{\log|\cM|}} = \frac{C}{2^{\iota/2}\log|\cM|},\\
&\psi_t \ge \frac{C}{\sqrt{\log|\cM|}}\cdot\frac{2^{-(\iota+1)/2}}{\sqrt{c_0\log|\cM|}} = \frac{C}{\sqrt{2c_0}2^{\iota/2}\log|\cM|},
\$
where the inequality of the second row applies
\$
\lambda + \sum_{s=1}^{t-1}l(M,\bar M_t,z_t) \le \lambda + \beta^2 \le c_0\log|\cM|.
\$
Since $C/(2^{\iota/2}\log|\cM|)\ge 1$, we further have for all $z_t\in\cS^{\iota}$
\$
\sigma_t^2 \in \Big[\frac{C}{\sqrt{2c_0}2^{\iota/2}\log|\cM|},\frac{C}{2^{\iota/2}\log|\cM|}\Big].
\$

\paragraph{Step III: Bound the sum} In this step, we bound the sum $\sum_{z_t\in\cS^{\iota}}(I_{\sigma}(\lambda,\cM,\cS_t))^2$ for each $\iota=0,\ldots,\log_2(T/\lambda)-1$. Fixing an $\iota$, we can decompose $\cS^{\iota}$ into $N^{\iota}+1$ disjoint subsets:
\$
\cS^{\iota} = \cup_{j=1}^{N^{\iota}+1}\cS_j^{\iota},
\$
where we define $N^{\iota}=|\cS^{\iota}|/\ED(\cM,2^{(-\iota-1)/2})$. With a slight abuse of notation, we have $\cS^{\iota}=\{z_i\}_{i=1}^{|\cS^{\iota}|}$, where the elements are arranged in the same order as in the original set $\cS_T$. Initially, let $\cS_j^{\iota}=\{\}$ for all $j\in[N^{\iota}+1]$. 
From $i=1$ to $|\cS^{\iota}|$, we find the smallest $j\in[N^{\iota}]$ such that $z_i$ is $2^{(-\iota-1)/2}$-independent of $\cS_j^{\iota}$ with respect to $\cM$. If such a $j$ does not exist, set $j=N^{\iota}+1$. Then, let the choice of $j$ for each $z_i$ be $j(z_i)$. 
According to the design of the procedure, it is obvious that for all $z_i\in\cS^{\iota}$, $z_i$ is $2^{(-\iota-1)/2}$-dependent on each of $\cS_{1,i}^{\iota},\ldots,\cS_{j(z_i)-1,i}^{\iota}$, where $\cS_{k,i}^{\iota}=\cS_{t}^{\iota}\cap\{z_1,\ldots,z_{i-1}\}$ for $k=1,\ldots,j(z_i)-1$.

For any $z_i\in\cS^{\iota}$ indexed by $t$ in $\cS_T$, 
we have $l(M_{z_t},\bar M_t,z_t)^2\ge 2^{-\iota-1}$. Then, because $z_i$ is $2^{(-\iota-1)/2}$-dependent on $\cS_{1,i}^{\iota},\ldots,\cS_{j(z_i)-1,i}^{\iota}$, respectively, we get for each $k=1,\ldots,j(z_i)-1$,
$$
\sum_{z\in\cS_{k,i}^{\iota}}l(M_{z_t},\bar M_t,z)^2 \ge 2^{-\iota-1}.
$$
Then, we obtain
$$
\frac{l(M_{z_t},\bar M_t,z_t)^2/\sigma_t}{\lambda+\sum_{s=1}^{t-1}l(M_{z_t},\bar M_t,z_s)^2/\sigma_s} \le \frac{2^{-\iota}/\sigma_t}{\lambda+\sum_{k=1}^{j(z_i)-1}\sum_{z_s\in\cS_{k,i}^{\iota}}l(M_{z_t},\bar M_t,z)^2/\sigma_s}.
$$

When $\iota>\iota_0$, recall from step II that $\sigma_t=1$ for all $z_t\in\cS^{\iota}$. Thus, we get
$$
\frac{l(M_{z_t},\bar M_t,z_t)^2/\sigma_t}{\lambda+\sum_{s=1}^{t-1}l(M_{z_t},\bar M_t,z_s)^2/\sigma_s} \le \frac{2^{-\iota}}{\lambda + (j(z_i)-1)2^{-\iota-1}} = \frac{2}{j(z_i)-1 + \lambda2^{\iota+1}}.
$$
By summing over all $z_t\in\cS^{\iota}$, we obtain
\#\label{eq:D_lambda_F_t_2}
\sum_{z_t\in\cS^{\iota}} (I_{\sigma}(\lambda,\cM,\cS_t))^2 &\le \sum_{j=1}^{N^{\iota}}\sum_{z_i\in\cS_j^{\iota}} \frac{2}{j-1 + \lambda2^{\iota+1}} + \sum_{z_i\in\cS_{N^{\iota}+1}^{\iota}} \frac{2}{N^{\iota}}\nonumber\\
&\le \sum_{j=1}^{N^{\iota}} \frac{2|\cS_j^{\iota}|}{j} + \frac{2|\cS_{N^{\iota}+1}^{\iota}|}{N^{\iota}}\nonumber\\
&\le 2\ED(\cM,2^{(-\iota-1)/2})\log N^{\iota} + 2|\cS^{\iota}|\cdot\frac{\ED(\cM,2^{(-\iota-1)/2})}{|\cS^{\iota}|}\nonumber\\
&\le 4\ED(\cM,2^{(-\iota-1)/2})\log N^{\iota},
\#
where the third inequality is deduced since by the definition of eluder dimension, we have $|\cS_j^{\iota}|\le\ED(\cM,2^{(-\iota-1)/2})$ for all $j\in[N^{\iota}]$.

When $\iota\le\iota_0$, we have from step II that $\sigma_t^2\in[C/(\sqrt{2c_0}2^{\iota/2}\log N), C/(2^{\iota/2}\log N)]$ for all $z_t\in\cS^{\iota}$, which indicates that their weights are roughly of the same order. Then, we obtain that
\$
\frac{l(M_{z_t},\bar M_t,z_t)^2/\sigma_t}{\lambda+\sum_{s=1}^{t-1}l(M_{z_t},\bar M_t,z_s)^2/\sigma_s} &\le \frac{l(M_{z_t},\bar M_t,z_t)^2/\sigma_t}{\lambda+\sum_{s\in[t-1],z_s\in\cS^{\iota}}l(M_{z_t},\bar M_t,z_s)/\sigma_s}\\
&\le \frac{2^{-\iota}\sqrt{2c_0}2^{\iota/2}\log N/C}{\lambda+(j(z_i)-1)2^{-\iota-1}\cdot2^{\iota/2}\log N/C}\\
&\le \frac{\sqrt{8c_0}}{j(z_i)-1+\lambda2^{\iota/2+1}C/\log N}\\
&\le \frac{\sqrt{8c_0}}{j(z_i)-1+\lambda2^{\iota+1}},
\$
where the last inequality uses $C\ge2^{\iota/2}\log N$. By summing over all $z_t\in\cS^{\iota}$, we have
\#\label{eq:D_lambda_F_t_3}
\sum_{z_t\in\cS^{\iota}} (I_{\sigma}(\lambda,\cM,\cS_t))^2 &\le \sum_{j=1}^{N^{\iota}}\sum_{z_i\in\cS_j^{\iota}} \frac{\sqrt{8c_0}}{j-1 + \lambda2^{\iota+1}} + \sum_{z_i\in\cS_{N^{\iota}+1}^{\iota}} \frac{2}{N^{\iota}}\nonumber\\
&\le (\sqrt{8c_0}+2)\ED(\cM,2^{(-\iota-1)/2})\log N^{\iota}.
\#

Finally, by combining \eqref{eq:D_lambda_F_t_1}, \eqref{eq:D_lambda_F_t_2} and \eqref{eq:D_lambda_F_t_3}, we have
\$
&\sum_{t=1}^T (I_{\sigma}(\lambda,\cM,\cS_t))^2\\
&\quad= \sum_{\iota=0}^{\log_2(T/\lambda)} \sum_{z_t\in\cS^{\iota}} (I_{\sigma}(\lambda,\cM,\cS_t))^2\\
&\quad\le \sum_{\iota=0}^{\iota_0}(\sqrt{8c_0}+2)\ED(\cM,2^{(-\iota-1)/2})\log N^{\iota} + \sum_{\iota=\iota_0+1}^{\log_2(T/\lambda)-1}4\ED(\cM,2^{(-\iota-1)/2})\log N^{\iota} + 1\\
&\quad\le (\sqrt{8c_0}+3)\ED(\cM,\sqrt{\lambda/T})\log_2(T/\lambda)\log T,
\$
where the last inequality uses the monotonicity of the eluder dimension. Note that if $\iota_0=-1$, let the sum from $0$ to $-1$ be $0$. Eventually, we accomplish the proof due to the arbitrariness of $Z_1^T$.
\end{proof}

\subsection{Lemmas for Offline Setting}\label{ss:Lemmas for Offline Setting}

\begin{proof}[Proof of Lemma \ref{lm:off_Confidence Set}]
For simplicity, we assume that class $\cM$ has finite elements.
This proof is the same with the proof of Lemma \ref{lm:confidence_set} except for the formulation of the weights. For conciseness, we only present the difference here. Let
\[
E = \sum_{t=1}^T \tv(P_*^h(\cdot|z_t^h)\|P_{\bar M}^h(\cdot|z_t^h))^2/\sigma_t^h.
\]
Similar to \eqref{eq:ap_aah}, we can deduce that
\begin{align*}
&\sum_{t=1}^T\frac{1}{\sigma_t^h}\log\sqrt{\frac{\md P^h_{\bar M}(x^{h+1}|z_t^h)}{\md P^h_*(x^{h+1}|z_t^h)}}\notag\\
&\qquad\le \frac{19\log(|\cM|/\delta)\log^2 B}{3} -\frac{1}{4}\sum_{t=1}^T\frac{1}{\sigma_t^h} H\big(P^h_*(\cdot|z_t^h)\big\|P^h_{\bar{M}}(\cdot|z_t^h)\big)^2 + \sum_{t=1}^T\frac{c_t^h\tv\big(P^h_*(\cdot|z_t^h)\big\|P^h_{\bar M}(\cdot|z_t^h)\big)}{\sigma_t^h}\notag\\
&\qquad\le \frac{19\log(|\cM|/\delta)\log^2 B}{3} -\frac{1}{4}\sum_{s=1}^{t-1}\frac{1}{\sigma_s^h} \tv\big(P^h_*(\cdot|z_s^h)\big\|P^h_{\bar M_t}(\cdot|z_s^h)\big)^2 + 2\alpha C\sqrt{\lambda+E},
\end{align*}
where the last inequality uses Lemma \ref{lm:converge_weight}:
\begin{align*}
\sigma_t^h \ge & \frac{1}{2\alpha}\cdot\sup_{M,M'\in\cM}\frac{\tv(P_M^h(\cdot|z_t)\|P_{M'}^h(\cdot|z_t))/\alpha}{\sqrt{\lambda + \sum_{s=1}^T\tv(P_M^h(\cdot|z_s)\|P_{M'}^h(\cdot|z_s))^2/\sigma_s^2}}\\
\ge & \frac{1}{2\alpha}\cdot\frac{\tv(P_*^h(\cdot|z_t)\|P_{\bar M}^h(\cdot|z_t))/\alpha}{\sqrt{\lambda + \sum_{s=1}^T\tv(P_*^h(\cdot|z_s)\|P_{\bar M}^h(\cdot|z_s))^2/\sigma_s^2}}.
\end{align*}
Since $\bar M$ is the maximizer of the log-likelihood, we have
\[
E \le \frac{74\log(|\cM|/\delta)\log^2 B}{3}
+ 8\alpha C\sqrt{\lambda+E},
\]
which implies that
\[
E \le 2\beta^2.
\]
On the other hand, we get
\$
\sum_{t=1}^T \frac{1}{\sigma_t^h}\log P_*^h(x_t^{h+1}|z_t^h) &\ge \sum_{t=1}^T \frac{1}{\sigma_t^h}\log P_{\bar{M}}^h(x_t^{h+1}|z_t^h) - \frac{38\log(|\cM|/\delta)\log^2 B}{3} - 4\alpha C\sqrt{\lambda+E},
\$
which implies $M_*\in\hcM$.
\end{proof}

The proof adapts the analysis of Lemma 4.1 in \citet{ye2023corruptionb}.
\begin{proof}[Proof of Lemma \ref{lm:off Connections between Weighted and Unweighted Coefficient}]
For convenience, we use the short-hand notation 
\$
l^h(M,M',z)=\tv(P_M^h(\cdot|z)\|P_{M'}^h(\cdot|z)).
\$
Recall from Definition \ref{df:information coefficient} that
\[
\IC^{\sigma}(\lambda,\hcM,\cD) = T\cdot\max_{h\in[H]} \E_{\pi_*}\bigg[\sup_{M,M'\in\hcM} \frac{l^h(M,M',z^h)^2/\sigma^h(z^h)}{\lambda + \sum_{t=1}^Tl^h(M,M',z_t^h)^2/\sigma_t^h} \bigg| x^1=x\bigg],
\]
where we define
\[
\sigma^h(z^h) = \max\bigg\{1, \sup_{M,M'\in\hcM}\frac{l^h(M,M',z^h)/\alpha}{\sqrt{\lambda + \sum_{t=1}^Tl^h(M,M',z_t^h)^2/\sigma_t^h}}\bigg\}.
\]
Besides, Assumption \ref{as:offline_upper} states that for any $h\in[H]$, and two distinct $M,M'\in\cM$,
\$
\frac{1}{T}\sum_{t=1}^Tl^h(M,M',z_t^h)^2 \ge \Cov(\cM,\cD)\rho^h(M,M')^2,
\$
where $\rho^h(M,M')=\sup_{z}\tv(P^h_{M}(\cdot|z)\|P^h_{M'}(\cdot|z))$. 

Let $M_{z^h},M_{z^h}'$ be the models that maximize
\$
\frac{l^h(M,M',z^h)^2 / \sigma^h(z^h)}{\lambda + \sum_{t=1}^T l^h(M,M',z_t^h)^2/\sigma_t^h}.
\$
We use the notation
\$
\psi(z^h) = \frac{l^h(M_{z^h},M'_{z^h},z^h)^2 / \sigma^h(z^h)}{\lambda + \sum_{t=1}^T l^h(M_{z^h},M'_{z^h},z^h)^2/\sigma_t^h}.
\$
Since 
\$
\sigma^h(z^h) \ge & \sup_{M,M'\in\hcM}\frac{l^h(M,M',z^h)/\alpha}{\sqrt{\lambda + \sum_{t=1}^Tl^h(M,M',z_t^h)^2/\sigma_t^h}}\\
= & \frac{l^h(M_{z^h},M'_{z^h},z^h)/\alpha}{\sqrt{\lambda + \sum_{t=1}^T l^h(M_{z^h},M'_{z^h},z_t^h)^2/\sigma_t^h}},
\$
we deduce that
\#\label{eqap:sigma(zh)2_bound}
(\sigma^h(z^h))^{1/2} \ge & \frac{1}{\alpha}\cdot\frac{l^h(M_{z^h},M'_{z^h},z^h)/(\sigma^h(z^h))^{1/2}}{\sqrt{\lambda + \sum_{t=1}^T l^h(M_{z^h},M'_{z^h},z_t^h)^2/\sigma_t^h}}\notag\\
= & \frac{\sqrt{\psi(z^h)}}{\alpha}.
\#

Then, we will derive a uniform upper bound for $\sigma_t^h$ for $t\in[T]$. For all $t\in[T]$, we have from Lemma \ref{lm:converge_weight} that
\$
\sigma_t^h \le & \max\bigg\{1, \sup_{M,M'\in\cM}\frac{l^h(M,M',z_t^h)/\alpha}{\sqrt{\lambda + \sum_{s=1}^T l^h(M,M',z_t^h)^2 / \sigma_s^h}}\bigg\}\\
\le & \max\bigg\{1, \sup_{M,M'\in\cM}\frac{l^h(M,M',z_t^h)/\alpha}{\sqrt{\lambda + \sum_{s=1}^T l^h(M,M',z_t^h)^2 / \max_s\sigma_s^h}}\bigg\}\\
\le & \max_s\sqrt{\sigma_s^h}\cdot\max\bigg\{1, \sup_{M,M'\in\cM}\frac{l^h(M,M',z_t^h)/\alpha}{\sqrt{\lambda + \sum_{s=1}^T l^h(M,M',z_t^h)^2}}\bigg\}\\
\le & \max_s\sqrt{\sigma_s^h}\cdot\max\bigg\{1, \sup_{M,M'\in\cM}\frac{l^h(M,M',z_t^h)/\alpha}{\sqrt{\lambda + \sum_{s=1}^T l^h(M,M',z_t^h)^2}}\bigg\}.
\$
By using Assumption \ref{as:offline_upper}, we further have
\$
\sigma_t^h \le &
\max_s\sqrt{\sigma_s^h}\cdot\max\bigg\{1, \sup_{M,M'\in\cM}\frac{l^h(M,M',z_t^h)/\alpha}{\sqrt{\lambda + T\Cov(\cM,\cD)\rho^h(M,M')^2}}\bigg\}\\
\le & \max_s\sqrt{\sigma_s^h}\cdot\max\bigg\{1, \frac{1}{\alpha\sqrt{T\Cov(\cM,\cD)}}\bigg\},
\$
which implies that
\#\label{eqap:sigmath_bound}
\max_t\sigma_t^h \le \max\Big\{1, \frac{1}{\alpha^2T\Cov(\cM,\cD)}\Big\}.
\#

There are two situations. If $\alpha^2T\Cov(\cM,\cD) \ge 1$, we have from \eqref{eqap:sigmath_bound} that $\sigma_t^h=1$ for all $t\in[T]$. Hence, since $\sigma^h(z^h)\ge 1$ it follows that
\$
\IC^{\sigma}(\lambda,\hcM,\cD) \le & T\cdot\max_{h\in[H]} \E_{\pi_*}\bigg[\sup_{M,M'\in\hcM} \frac{l^h(M,M',z^h)^2}{\lambda + \sum_{t=1}^Tl^h(M,M',z_t^h)^2} \bigg]\\
\le & T\cdot\max_{h\in[H]}\E_{\pi_*}\bigg[\sup_{M,M'\in\hcM} \frac{l^h(M,M',z^h)^2}{\lambda + T\Cov(\cM,\cD)\rho(M,M')^2} \bigg]\\
\le & T\cdot\frac{1}{T\Cov(\cM,\cD)} = \frac{1}{\Cov(\cM,\cD)}.
\$

If $\alpha^2T\Cov(\cM,\cD) \ge 1$, we combine \eqref{eqap:sigma(zh)2_bound} and \eqref{eqap:sigmath_bound} to get
\$
\psi(z^h) \le & \frac{l^h(M_{z^h},M'_{z^h},z^h)^2 \cdot \alpha^2 / \psi(z^h)}{\lambda + \sum_{t=1}^T l^h(M_{z^h},M'_{z^h},z^h)^2\cdot \alpha^2T\Cov(\cM,\cD)}\\
\le & \frac{1}{\psi(z^h)} \cdot \frac{l^h(M_{z^h},M'_{z^h},z^h)^2}{T\Cov(\cM,\cD)\cdot T\Cov(\cM,\cD)\rho(M_{z^h},M'_{z^h})^2}\\
\le & \frac{1}{\psi(z^h)} \cdot \frac{1}{(T\Cov(\cM,\cD))^2},
\$
which implies that
\$
\psi(z^h) \le \frac{1}{T\Cov(\cM,\cD)}.
\$

Therefore, we obtain
\$
\IC^{\sigma}(\lambda,\hcM,\cD) \le \frac{1}{\Cov(\cM,\cD)}.
\$
\end{proof}

\section{Auxiliary Results}
\subsection{Bounding TV-Eluder Dimension for Tabular MDPs}
\begin{theorem}\label{th:TV-Eluder Dimension for Tabular MDPs}
Consider a family of tabular MDPs with transition $P_M^h:\cX\times\cA\rightarrow[0,1]$ for any $M\in\cM,h\in[H]$. Then, we have
\[
\ED(\cM,\epsilon) \le 48SA\log(1+8SA/\epsilon^2).
\]
\end{theorem}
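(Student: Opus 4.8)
The plan is to embed the tabular model class into an $SA$-dimensional linear structure via one-hot features and then run the classical volumetric (elliptical-potential) argument for the eluder dimension. Throughout I would fix a stage $h$ and drop the superscript: the definition of $\ED(\cM,\epsilon)$ asks for a single sequence and a single $\epsilon'\ge\epsilon$ with each $z_i$ being $\epsilon'$-independent of its predecessors at \emph{every} stage, so it suffices to bound the length of any sequence that is $\epsilon'$-independent at one fixed stage. In a tabular MDP the step-$h$ transition out of $(x,a)$ is a fixed categorical distribution $P_M(\cdot|x,a)\in\Delta(\cX)$ depending only on $M$ and on $(x,a)$. Associate to each state-action pair the one-hot feature $\phi(x,a)=e_{(x,a)}\in\rR^{SA}$, so that every feature has unit norm and distinct pairs are orthogonal.

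Given a sequence $z_1,\dots,z_n$ that is $\epsilon'$-independent of its predecessors for some $\epsilon'\ge\epsilon$, set $V_i=\epsilon^2 I+\sum_{j\le i}\phi(z_j)\phi(z_j)^{\top}$, a diagonal matrix whose $(x,a)$-entry equals $\epsilon^2+n_i(x,a)$, where $n_i(x,a)$ counts the occurrences of $(x,a)$ among $z_1,\dots,z_i$. The key step is to turn ``$z_i$ is $\epsilon'$-independent of $z_1,\dots,z_{i-1}$'' into a quantitative lower bound on the potential $\phi(z_i)^{\top}V_{i-1}^{-1}\phi(z_i)$. Pick witnesses $M_i,M_i'\in\cM$ with $\sum_{j<i}\tv(P_{M_i}(\cdot|z_j)\|P_{M_i'}(\cdot|z_j))^2\le(\epsilon')^2$ and $\tv(P_{M_i}(\cdot|z_i)\|P_{M_i'}(\cdot|z_i))>\epsilon'$. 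Restricting the in-sample sum to the copies of $z_i$ lying among the predecessors gives $n_{i-1}(z_i)\cdot\tv(P_{M_i}(\cdot|z_i)\|P_{M_i'}(\cdot|z_i))^2\le(\epsilon')^2$, whereas the out-of-sample inequality gives $\tv(P_{M_i}(\cdot|z_i)\|P_{M_i'}(\cdot|z_i))^2>(\epsilon')^2$; together these force $n_{i-1}(z_i)<1$, i.e.\ $z_i$ is a pair not previously seen, and hence $\phi(z_i)^{\top}V_{i-1}^{-1}\phi(z_i)=1/\epsilon^2$. (In fact this already yields the sharper bound $\ED(\cM,\epsilon)\le SA$; the logarithmic form is a weaker but uniformly stated consequence.)

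I would then compare two estimates of $\det V_n$. Telescoping $V_i=V_{i-1}+\phi(z_i)\phi(z_i)^{\top}$ with the matrix determinant lemma gives $\det V_n=\epsilon^{2SA}\prod_{i=1}^n\big(1+\phi(z_i)^{\top}V_{i-1}^{-1}\phi(z_i)\big)=\epsilon^{2SA}(1+\epsilon^{-2})^n$, while AM-GM applied to the diagonal of $V_n$ gives $\det V_n\le(\epsilon^2+n/(SA))^{SA}$, since $\mathrm{tr}\,V_n=SA\epsilon^2+n$. Taking logarithms of the resulting inequality yields $n\log(1+\epsilon^{-2})\le SA\log(1+n/(SA\epsilon^2))$; plugging in $n\le SA$ from the previous paragraph bounds the right-hand side by $SA\log(1+8SA/\epsilon^2)$, and for $\epsilon\le1$ one has $\log(1+\epsilon^{-2})\ge\log 2\ge1/48$, so $n\le48\,SA\log(1+8SA/\epsilon^2)$. (If one prefers not to invoke the sharper bound, the inequality $n\log(1+\epsilon^{-2})\le SA\log(1+n/(SA\epsilon^2))$ is of the self-bounding form $u\le a\log(1+bu)$ and can be solved to the same effect up to constants.) The case $\epsilon>1$ is immediate, since total variation never exceeds $1$, so no pair of transitions can be $\epsilon'$-far apart for $\epsilon'\ge\epsilon$ and the eluder dimension is then $0$.

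The main obstacle is the quantitative translation in the second paragraph. For a general model class, converting ``$\epsilon'$-independence'' into a clean $\ell_2$-type potential bound is the crux of eluder-dimension arguments, and here it is further complicated by $\tv$ being an $\ell_1$ quantity over next states rather than a linear functional. The tabular structure is exactly what makes it go through without loss: since the one-hot features decouple the state-action pairs, the in-sample sum literally contains exact copies of the out-of-sample $\tv$ distance, so no $\sqrt{S}$ factor is incurred from any norm conversion; the remaining care is only the elementary case split on whether $z_i$ repeats an earlier pair, and the separate (trivial) treatment of $\epsilon>1$.
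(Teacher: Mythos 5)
Your proof is correct, and it takes a genuinely different route from the paper's. The paper runs the generic linear--eluder argument: it embeds the tabular class into $\rR^{SA}$ via one-hot features, bounds $l_i(z_i)=z_i^{\top}l_i\le\|z_i\|_{\Sigma_i^{-1}}\|l_i\|_{\Sigma_i}$ by Cauchy--Schwarz, invokes the elliptical potential lemma to get $n\epsilon\le 2\epsilon\sqrt{2nSA\log(1+n/\epsilon^2)}$, and then solves the resulting self-bounding inequality $n\le 8SA\log(1+n/\epsilon^2)$ to reach the stated constant $48$. You instead exploit a feature specific to the tabular/one-hot structure: because the in-sample error at any repeated occurrence of $z_i$ \emph{equals} the out-of-sample error at $z_i$, the witnesses $M_i,M_i'$ force $n_{i-1}(z_i)<1$, so all elements of an independent sequence are distinct and $\ED(\cM,\epsilon)\le SA$ outright --- a strictly sharper conclusion than the theorem, from which the stated bound follows trivially (your determinant/AM--GM detour is not needed once $n\le SA$ is in hand, but it is harmless and correct). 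What each approach buys: the paper's argument is the one that generalizes verbatim to the linear-MDP case (Theorem~\ref{th:TV-Eluder Dimension for Linear MDPs}), where features are not orthogonal and repeats genuinely cost a logarithm; yours is more elementary, gives the optimal $SA$ bound for the tabular case, and makes transparent why no $\sqrt{S}$ loss occurs in converting the $\ell_1$-type $\tv$ quantity into a potential bound. Your handling of the edge cases (reduction to a single stage $h$, and $\epsilon>1$ giving dimension $0$ since $\tv\le 1$) is also correct.
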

\begin{proof}
For any $i\in[n],z\in\cX\times\cA$, denote $\tv(P_{M}^h(\cdot|z)\|P_{M'}^h(\cdot|z))$ by $l_i(z)$.
Let $\{(z_i,l_i)\}_{i=1}^n$ be any sequence such that for any $i\in[n]$, $(z_i,l_i)$ is $\epsilon$-independent of its predecessors. We can formulate $z_i$ and $l_i$ as $SA$-dimensional vectors, respectively. Let $\Sigma_i=\sum_{s=1}^{i-1}z_iz_i^{\top} + \lambda I$ for any $i\in[n]$. Then, we get
\#\label{eqap:aao}
l_i(z_i) = & z_i^{\top}l_i \le \|z_i\|_{\Sigma_i^{-1}} \|l_i\|_{\Sigma_i}\notag\\
\le & \|z_i\|_{\Sigma_i^{-1}}\sqrt{l_i^{\top}\Big(\sum_{s=1}^{i-1}z_iz_i^{\top} + \lambda I\Big)l_i}\notag\\
= & \|z_i\|_{\Sigma_i^{-1}}\sqrt{\sum_{s=1}^{i-1}(l_i^{\top}z_i)^2 + \lambda SA}.
\#
On the one hand, we have
\$
\sum_{i=1}^n l_i(z_i) > n\epsilon.
\$
On the other hand, we get
\$
\sum_{i=1}^n l_i(z_i) \le & \sum_{i=1}^n \|z_i\|_{\Sigma_i^{-1}}\sqrt{\sum_{s=1}^{i-1}(l_i^{\top}z_i)^2 + \lambda SA}\\
\le & \sum_{i=1}^n \|z_i\|_{\Sigma_i^{-1}}\sqrt{\epsilon^2 + \lambda SA}\\
\le & (\epsilon + \sqrt{\lambda SA}) \sum_{i=1}^n \|z_i\|_{\Sigma_i^{-1}}\\
\le & 2\epsilon\sqrt{T\sum_{i=1}^n \|z_i\|_{\Sigma_i^{-1}}^2} \le 2\epsilon\sqrt{2nSA\log(1+n/\epsilon^2)},
\$
where the first inequality uses \eqref{eqap:aao},  the second inequality holds due to the definition of $\epsilon$-independence, and the last inequality applies elliptical potential lemma (Lemma \ref{lm:potential}) and takes $\lambda=\epsilon^2/SA$. It follows that
\$
n\epsilon \le 2\epsilon\sqrt{2nSA\log(1+n/\epsilon^2)},
\$
which implies that
\$
n \le 8SA\log(1+n/\epsilon^2).
\$
According to Lemma G.5 of \citet{wang2023benefits}, we obtain that
\$
n \le 48SA\log(1+8SA/\epsilon^2).
\$
\end{proof}

\subsection{Bounding TV-Eluder Dimension for Linear MDPs}
\begin{theorem}\label{th:TV-Eluder Dimension for Linear MDPs}
Consider a family of linear MDPs, and there exist maps $\nu^h:\cM\times\cX\rightarrow\rR^d$ and $\phi^h:\cX\times\cA\rightarrow\rR^d$ such that transition $P^h_M(x^{h+1}|z^h)=\nu^h(M,x^{h+1})^{\top}\phi^h(z^h)$, where $\|\phi(z)\|_2\le1$ for any $z\in\cX\times\cA$. Then, we have
$$
\ED(\cM,\epsilon) \le 48d\log(1+8d/\epsilon^2)
$$
\end{theorem}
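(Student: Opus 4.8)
The plan is to transcribe the proof of Theorem~\ref{th:TV-Eluder Dimension for Tabular MDPs}, replacing the one-hot encoding of state--action pairs by the feature map $\phi^h$ and the ambient dimension $SA$ by $d$. Fix a stage $h$, let $\epsilon'\ge\epsilon$, and let $\{z_i\}_{i=1}^n$ be a sequence in which each $z_i$ is $\epsilon'$-independent of its predecessors, witnessed by models $M_i,M_i'\in\cM$; thus $\tv(P_{M_i}^h(\cdot|z_i)\|P_{M_i'}^h(\cdot|z_i))>\epsilon'$ while $\sum_{s<i}\tv(P_{M_i}^h(\cdot|z_s)\|P_{M_i'}^h(\cdot|z_s))^2\le(\epsilon')^2$. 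It suffices to bound $n$.

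The new ingredient is to linearize the TV distance in $\phi^h$. Writing $P_M^h(x'|z)=\nu^h(M,x')^\top\phi^h(z)$, I would set $g_i(x')=\mathrm{sign}\big((\nu^h(M_i,x')-\nu^h(M_i',x'))^\top\phi^h(z_i)\big)$ and define $l_i=\tfrac12\int g_i(x')\big(\nu^h(M_i,x')-\nu^h(M_i',x')\big)\md x'\in\rR^d$. By construction $l_i^\top\phi^h(z_i)=\tfrac12\int|(\nu^h(M_i,x')-\nu^h(M_i',x'))^\top\phi^h(z_i)|\md x'=\tv(P_{M_i}^h(\cdot|z_i)\|P_{M_i'}^h(\cdot|z_i))$, and since $|g_i|\le1$ one also gets $|l_i^\top\phi^h(z_s)|\le\tv(P_{M_i}^h(\cdot|z_s)\|P_{M_i'}^h(\cdot|z_s))$ for every $s$. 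Finally, for any unit vector $u$, $u^\top l_i\le\tfrac12\int|u^\top\nu^h(M_i,x')|\md x'+\tfrac12\int|u^\top\nu^h(M_i',x')|\md x'\le\sqrt d$ by the standard normalization of the feature maps in the linear-MDP model, so $\|l_i\|_2\le\sqrt d$. This is the only step where linearity of the model is used beyond what the tabular proof already exploits.

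From here I would run the tabular computation verbatim. With $\Sigma_i=\sum_{s<i}\phi^h(z_s)\phi^h(z_s)^\top+\lambda I$ and $\lambda=\epsilon^2/d$, Cauchy--Schwarz gives $l_i^\top\phi^h(z_i)\le\|\phi^h(z_i)\|_{\Sigma_i^{-1}}\|l_i\|_{\Sigma_i}$, and $\|l_i\|_{\Sigma_i}^2=\sum_{s<i}(l_i^\top\phi^h(z_s))^2+\lambda\|l_i\|_2^2\le(\epsilon')^2+\lambda d$, hence $\|l_i\|_{\Sigma_i}\le\epsilon'+\sqrt{\lambda d}=\epsilon'+\epsilon\le2\epsilon'$. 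Summing over $i\in[n]$, using $\sum_i l_i^\top\phi^h(z_i)>n\epsilon'$, Cauchy--Schwarz again, and the elliptical potential lemma (Lemma~\ref{lm:potential}, as in the proof of Theorem~\ref{th:TV-Eluder Dimension for Tabular MDPs}, which bounds $\sum_i\|\phi^h(z_i)\|_{\Sigma_i^{-1}}^2\le2d\log(1+n/\epsilon^2)$ for this choice of $\lambda$), I obtain $n\epsilon'\le2\epsilon'\sqrt{2nd\log(1+n/\epsilon^2)}$, which yields $n\le8d\log(1+n/\epsilon^2)$ after using $\epsilon'\ge\epsilon$. Applying the self-bounding inequality of Lemma~G.5 of \citet{wang2023benefits} to resolve this implicit bound then gives $n\le48d\log(1+8d/\epsilon^2)$.

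The main obstacle is precisely the linearization step: one must exhibit a single vector $l_i\in\rR^d$ that reproduces the TV distance at $z_i$, dominates the absolute values of the TV distances at all predecessors, and has $\ell_2$-norm at most $\sqrt d$ so that the regularization term $\lambda\|l_i\|_2^2$ stays of order $\epsilon^2$; the $\mathrm{sign}$-function construction above delivers all three, the norm bound being the classical estimate $\int|u^\top\nu^h(M,x')|\md x'\le\sqrt d$ for linear MDPs with $\|u\|_2\le1$. Everything after that is a routine rewriting of the tabular calculation.
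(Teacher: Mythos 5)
Your proposal is correct and follows essentially the same route as the paper: your sign-function vector $l_i=\tfrac12\int\mathrm{sign}\big((\nu^h(M_i,x')-\nu^h(M_i',x'))^\top\phi^h(z_i)\big)(\nu^h(M_i,x')-\nu^h(M_i',x'))\,\md x'$ coincides (since $\int(\nu^h(M_i,\cdot)-\nu^h(M_i',\cdot))=0$) with the paper's maximizing $\int_{\cX_0}(\nu^h(M,\cdot)-\nu^h(M',\cdot))$, and the subsequent Cauchy--Schwarz, elliptical-potential, and Lemma~G.5 steps are identical, including the reliance on the standard normalization $\|\int_{\cX'}\nu^h(M,x')\md x'\|_2\le\sqrt d$ that the paper also uses implicitly in its $\lambda d$ term.
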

\begin{proof}
For any $i\in[n],z\in\cX\times\cA$, denote $\tv(P_{M_i}^h(\cdot|z)\|P_{M_i'}^h(\cdot|z))$ by $l_i(z)$.
Let $\{(z_i,l_i)\}_{i=1}^n$ be any sequence such that for any $i\in[n]$, $(z_i,l_i)$ is $\epsilon$-independent of its predecessors. Let $\Sigma_i=\sum_{s=1}^{i-1}\phi^h(z_s)\phi^h(z_s)^{\top} + \lambda I$.
For any $M,M'\in\cM$, we have
\$
\tv(P_{M}^h(\cdot|z_i)\|P_{M'}^h(\cdot|z_i)) = & \sup_{\cX_0} \Big|\int_{\cX_0} \big(\nu^h(M,x^{h+1}) - \nu^h(M',x^{h+1})\big)^{\top}\phi^h(z_i) \md x^{h+1}\Big|\\
\le & \|\phi^h(z_i)\|_{\Sigma_i^{-1}} \sup_{\cX_0}\Big\|\int_{\cX_0} \big(\nu^h(M,x^{h+1}) - \nu^h(M',x^{h+1})\big) \md x^{h+1}\Big\|_{\Sigma_i}.
\$
Since 
\$
& \sup_{\cX_0}\Big\|\int_{\cX_0} \big(\nu^h(M,x^{h+1}) - \nu^h(M',x^{h+1})\big) \md x^{h+1}\Big\|_{\Sigma_i}^2\\
&\qquad \le \sup_{\cX_0} \sum_{s=1}^{i-1} \Big(\int_{\cX_0}\big(\nu^h(M,x^{h+1}) - \nu^h(M',x^{h+1})\big)^{\top}\phi^h(z_s)\Big)^2 + \lambda d\\
&\qquad\le \sum_{s=1}^{i-1} \Big(\sup_{\cX_0} \int_{\cX_0}\big(\nu^h(M,x^{h+1}) - \nu^h(M',x^{h+1})\big)^{\top}\phi^h(z_s)\Big)^2 + \lambda d\\
&\qquad\le \sum_{s=1}^{i-1} \tv(P^h_{M}(\cdot|z_s)\|P^h_{M'}(\cdot|z_s))^2 + \lambda d.
\$
It follows that
\#\label{eqap:aap}
\tv(P_{M}^h(\cdot|z_i)\|P_{M'}^h(\cdot|z_i)) \le \|\phi^h(z_i)\|_{\Sigma_i^{-1}}\sqrt{\sum_{s=1}^{i-1} \tv(P^h_{M}(\cdot|z_s)\|P^h_{M'}(\cdot|z_s))^2 + \lambda d}.
\#
On the one hand, we have
\$
\sum_{i=1}^n l_i(z_i) > n\epsilon.
\$
On the other hand, we derive from \eqref{eqap:aap} that
\$
\sum_{i=1}^n l_i(z_i) \le & \sum_{i=1}^n\|\phi^h(z_i)\|_{\Sigma_i^{-1}}\sqrt{\sum_{s=1}^{i-1} l_i(z_s)^2 + \lambda d}\\
\le & \sqrt{\epsilon^2 + \lambda d}\sum_{i=1}^n\|\phi^h(z_i)\|_{\Sigma_i^{-1}}\\
\le & (\epsilon + \sqrt{\lambda d})\sqrt{n\sum_{i=1}^n\|\phi^h(z_i)\|_{\Sigma_i^{-1}}^2}\\
\le & 2\epsilon\sqrt{2nd\log(1+n/\epsilon^2)},
\$
where the last inequality holds by invoking Lemma \ref{lm:potential} and taking $\lambda=\epsilon^2/d$. Therefore, we obtain
\$
n\epsilon \le 2\epsilon\sqrt{2nd\log(1+n/\epsilon^2)}.
\$
Then, by applying Lemma G.5 of \citet{wang2023benefits}, we complete the proof.
\end{proof}

\begin{example}[Information Ratio for Linear Model]\label{eg:Information Ratio for Linear Model}
If the transition can be embedded as $P^h_M(x^{h+1}|z^h)=\nu^h(M,x^{h+1})^{\top}\phi^h(z^h)$, the IR defined in \eqref{eq:tv_information_ratio}
\$
    I^h(\lambda,\cM,\cS_t^h) \le \min\Big\{1,\|\phi^h(z_t^h)\|_{(\Sigma_t^h)^{-1}}\Big\}.
\$
\end{example}
\begin{proof}
Let 
\$
\Sigma_t^h = \sum_{s=1}^{t-1}\phi^h(z_s^h)\phi^h(z_s^h)^{\top}.
\$
We deduce from the definition of TV-norm and the Cauchy-Schwartz inequality that
\$
\tv(P_{M}^h(\cdot|z_t^h)\|P_{\bar M_t}^h(\cdot|z_t^h)) =& \sup_{\cX'}\Big|\phi^h(z_t^h)^{\top}\int_{\cX'}(\nu^h(M,x^{h+1}) - \nu^h(\bar M_t,x^{h+1}))\md x^{h+1}\Big|\\
\le & \|\phi^h(z_t^h)\|_{(\Sigma_t^h)^{-1}} \cdot \sup_{\cX'}\Big\| \int_{\cX'} (\nu^h(M,x^{h+1}) - \nu^h(\bar M_t,x^{h+1}))\md x^{h+1}\Big\|_{\Sigma_t^h}.
\$
We define 
\$
\cX_t = \argmax_{\cX'\subseteq\cX}\Big\| \int_{\cX'} (\nu^h(M,x^{h+1}) - \nu^h(\bar M_t,x^{h+1}))\md x^{h+1}\Big\|_{\Sigma_t^h}.
\$
Additionally, we get
\$
&\lambda+\sum_{s=1}^{t-1}\tv(P_{M}^h(\cdot|z_s^h)\|P_{\bar M_t}^h(\cdot|z_s^h))^2 \\
&\qquad = \lambda + \frac{1}{4}\sum_{s=1}^{t-1} \sup_{\cX'}\Big(\phi^h(z_s^h)^{\top}\int_{\cX'}(\nu^h(M,x^{h+1}) - \nu^h(\bar M_t,x^{h+1}))\md x^{h+1}\Big)^2\\
&\qquad \ge \lambda + \frac{1}{4}\sum_{s=1}^{t-1} \Big(\phi^h(z_s^h)^{\top}\int_{\cX_t}(\nu^h(M,x^{h+1}) - \nu^h(\bar M_t,x^{h+1}))\md x^{h+1}\Big)^2\\
&\qquad = \lambda + \Big(\int_{\cX_t}(\nu^h(M,x^{h+1}) - \nu^h(\bar M_t,x^{h+1}))\md x^{h+1}\Big)^{\top} \sum_{s=1}^{t-1}\phi^h(z_s^h)\phi^h(z_s^h)^{\top}\\
&\qquad\qquad \Big(\int_{\cX_t}(\nu^h(M,x^{h+1}) - \nu^h(\bar M_t,x^{h+1}))\md x^{h+1}\Big)\\
&\qquad \ge \Big\|\int_{\cX_t}(\nu^h(M,x^{h+1}) - \nu^h(\bar M_t,x^{h+1}))\md x^{h+1}\Big\|_{\Sigma_t^h}^2.
\$
Hence, we have
\$
&\frac{\tv(P_{M}^h(\cdot|z_t^h)\|P_{\bar M_t}^h(\cdot|z_t^h))}{\sqrt{\lambda+\sum_{s=1}^{t-1}\tv(P_{M}^h(\cdot|z_s^h)\|P_{\bar M_t}^h(\cdot|z_s^h))^2}}\\
&\qquad \le \frac{\|\phi^h(z_t^h)\|_{(\Sigma_t^h)^{-1}} \cdot \Big\| \int_{\cX_t} (\nu^h(M,x^{h+1}) - \nu^h(\bar M_t,x^{h+1}))\md x^{h+1}\Big\|_{\Sigma_t^h}}{\Big\|\int_{\cX_t}(\nu^h(M,x^{h+1}) - \nu^h(\bar M_t,x^{h+1}))\md x^{h+1}\Big\|_{\Sigma_t^h}}\\
&\qquad \le \|\phi^h(z_t^h)\|_{(\Sigma_t^h)^{-1}},
\$
which conludes the proof.
\end{proof}

Now, we use the linear MDP to illustrate the condition in Assumption \ref{as:offline_upper}. In the following lemma, we demonstrate that the condition holds as long as the learner has excess to a well-explored dataset \eqref{eq:minimum eigenvalue condition}, which is a wildly-adopted assumption in the literature of offline linear MDPs \citep{duan2020minimax,wang2020statistical,zhong2022pessimistic}.
\begin{lemma}\label{lm:minimum eigenvalue condition and condition for W and UW connections}
In the linear setting where the transition model $M$ can be embedded into a $d$-dimensional vector space: $\cM^h=\{\langle \nu^h(M,x^{h+1}),\phi^h(\cdot) \rangle : \cX\rightarrow\rR\}$ and $\|\phi(z)\|\le 1$, if we assume that the data empirical covariance satisfies
the following minimum eigenvalue condition: there exists an absolute constant $\bar{c}>0$ such that
\#\label{eq:minimum eigenvalue condition}
\sigma_{\min}\Big(T^{-1}\sum_{t=1}^T \phi(z_t^h)\phi(z_t^h)^\top\Big)=\frac{\bar{c}}{d},
\#
then, Assumption \ref{as:offline_upper}: 
for any two distinct $M,M'\in\cM$,
\$
\min_{h\in[H]}\frac{1}{T}\sum_{t=1}^T\tv(P_{M}^h(\cdot|z_t^h)\|P_{M'}^h(\cdot|z_t^h))^2 \ge \Cov(\cM,\cD) \rho(M,M')^2
\$
with $\Cov(\cM,\cD)=\bar{c}/(2d)$ will holds with probability at least $1-\delta$.
\end{lemma}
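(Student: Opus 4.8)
The plan is to use the linear structure to rewrite each total-variation distance as a linear functional of $\phi^h$, and then invoke the minimum-eigenvalue condition \eqref{eq:minimum eigenvalue condition} to transfer a lower bound from one cleverly chosen ``witness'' direction to the empirical data covariance. Throughout, I fix $h\in[H]$ and a pair of distinct $M,M'\in\cM$, and for a measurable set $\cX_0\subseteq\cX$ I write $w_{\cX_0}=\int_{\cX_0}\big(\nu^h(M,x^{h+1})-\nu^h(M',x^{h+1})\big)\,\md x^{h+1}\in\rR^d$. Because $P^h_M(\cdot|z)$ and $P^h_{M'}(\cdot|z)$ are probability densities for every $z$, we have $\int_{\cX}\big(\nu^h(M,x^{h+1})-\nu^h(M',x^{h+1})\big)^\top\phi^h(z)\,\md x^{h+1}=0$, and the Scheff\'e characterization of TV (used already in the proof of Theorem \ref{th:TV-Eluder Dimension for Linear MDPs}) gives, for every $z$,
\[
\tv\big(P^h_M(\cdot|z)\|P^h_{M'}(\cdot|z)\big)=\sup_{\cX_0\subseteq\cX}\big|\langle w_{\cX_0},\phi^h(z)\rangle\big|.
\]

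Next I would extract a single witness direction. Let $z^\dagger$ (approximately) attain $\rho^h(M,M')=\sup_z\tv\big(P^h_M(\cdot|z)\|P^h_{M'}(\cdot|z)\big)$, and let $\cX^\dagger=\{x^{h+1}:\langle\nu^h(M,x^{h+1})-\nu^h(M',x^{h+1}),\phi^h(z^\dagger)\rangle\ge0\}$ be the corresponding Scheff\'e set. Using the vanishing-integral identity above, $\langle w_{\cX^\dagger},\phi^h(z^\dagger)\rangle=\tv\big(P^h_M(\cdot|z^\dagger)\|P^h_{M'}(\cdot|z^\dagger)\big)=\rho^h(M,M')$, so Cauchy--Schwarz together with $\|\phi^h(z^\dagger)\|\le1$ yields $\|w_{\cX^\dagger}\|\ge\rho^h(M,M')$. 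Then, since $\tv\big(P^h_M(\cdot|z_t^h)\|P^h_{M'}(\cdot|z_t^h)\big)^2\ge\langle w_{\cX^\dagger},\phi^h(z_t^h)\rangle^2$ for every data point $z_t^h$,
\[
\frac1T\sum_{t=1}^T\tv\big(P^h_M(\cdot|z_t^h)\|P^h_{M'}(\cdot|z_t^h)\big)^2\ \ge\ w_{\cX^\dagger}^\top\Big(\frac1T\sum_{t=1}^T\phi^h(z_t^h)\phi^h(z_t^h)^\top\Big)w_{\cX^\dagger}\ \ge\ \sigma_{\min}\Big(\frac1T\sum_{t=1}^T\phi^h(z_t^h)\phi^h(z_t^h)^\top\Big)\,\|w_{\cX^\dagger}\|^2 .
\]
Combining this with \eqref{eq:minimum eigenvalue condition} (directly, or, if that spectral condition is imposed on the population covariance, via a standard matrix concentration bound giving $\sigma_{\min}\ge\bar c/(2d)$ with probability at least $1-\delta$ --- which is the source of the extra factor $2$ and of the $1-\delta$) and with $\|w_{\cX^\dagger}\|\ge\rho^h(M,M')$ gives $\frac1T\sum_{t=1}^T\tv(\cdots)^2\ge\frac{\bar c}{2d}\,\rho^h(M,M')^2$. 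As this holds for every $h\in[H]$ and every distinct pair, it establishes the condition in Assumption \ref{as:offline_upper} with $\Cov(\cM,\cD)=\bar c/(2d)$.

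The main obstacle is the second step: $\rho^h(M,M')$ is a supremum over all $z\in\cX\times\cA$, including directions the dataset never explores, whereas the target lower bound only averages over the observed $z_t^h$. The resolution is that the worst-case $z^\dagger$ together with its Scheff\'e set produces a \emph{fixed} vector $w_{\cX^\dagger}$ whose Euclidean norm already dominates $\rho^h(M,M')$, so the well-conditioned empirical covariance then automatically controls it along the data directions. Minor additional care is needed to handle the case where the supremum defining $\rho^h$ is not attained (a routine $\epsilon$-limiting argument) and, in the population-covariance reading, to carry out the matrix concentration step.
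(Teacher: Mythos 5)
Your proof is correct and follows essentially the same route as the paper's: rewrite each TV distance via the Scheff\'e characterization as $\sup_{\cX_0}|\langle w_{\cX_0},\phi^h(z)\rangle|$, fix a single witness vector $w$, lower-bound the empirical average by $w^\top\Lambda_T^h w\ge\sigma_{\min}(\Lambda_T^h)\|w\|^2$, and upper-bound $\rho^h(M,M')^2$ by $\|w\|^2$ via Cauchy--Schwarz and $\|\phi\|\le 1$. The only cosmetic difference is that you take $w$ to be the Scheff\'e set of the worst-case $z^\dagger$ while the paper takes the norm-maximizing set $\cX_0$ (and, like the paper, your argument actually yields the stronger constant $\bar c/d$ deterministically, so the stated $\bar c/(2d)$ and the $1-\delta$ qualifier are slack rather than something requiring an extra concentration step).
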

\begin{proof}
By using the linear model and the definition of TV-norm, we have for some $M,M'\in\cM$,
\$
&\frac{1}{T}\sum_{t=1}^T\tv(P_{M}^h(\cdot|z_t^h)\|P_{M'}^h(\cdot|z_t^h))^2\notag\\
&\qquad = \frac{1}{T}\sum_{t=1}^T\sup_{\cX'}\Big(\phi^h(z_t^h)^{\top}\int_{\cX'}(\nu^h(M,x^{h+1}) - \nu^h(M',x^{h+1}))\md x^{h+1}\Big)^2\notag\\
&\qquad \ge \sup_{\cX'}\frac{1}{T}\sum_{t=1}^T\Big(\phi^h(z_t^h)^{\top}\int_{\cX'}(\nu^h(M,x^{h+1}) - \nu^h(M',x^{h+1}))\md x^{h+1}\Big)^2\notag\\
&\qquad = \sup_{\cX'} \Big(\int_{\cX'}(\nu^h(M,x^{h+1}) - \nu^h(M',x^{h+1}))\md x^{h+1}\Big)^{\top} \Lambda_T^h \Big(\int_{\cX'}(\nu^h(M,x^{h+1}) - \nu^h(M',x^{h+1}))\md x^{h+1}\Big)\notag\\
&\qquad \ge \frac{\bar{c}}{d}\sup_{\cX'}\Big\|\int_{\cX'}(\nu^h(M,x^{h+1}) - \nu^h(M',x^{h+1}))\md x^{h+1}\Big\|^2,
\$
where we define $\Lambda_T^h=T^{-1}\sum_{s=1}^{t-1}\phi^h(z_s^h)\phi^h(z_s^h)^{\top}$, and the last inequality uses $\lambda_{\min}(\Lambda_T^h)=\bar{c}/d$. We use the short-hand notation
\$
\xi = \int_{\cX_0}(\nu^h(M,x^{h+1}) - \nu^h(M',x^{h+1}))\md x^{h+1},
\$
where $\cX_0$ is the maximizer of \eqref{eqap:aaq}.
Then, it follows that
\#\label{eqap:aaq}
\frac{1}{T}\sum_{t=1}^T\tv(P_{M}^h(\cdot|z_t^h)\|P_{M'}^h(\cdot|z_t^h))^2 \ge \frac{\bar{c}}{d}\|\xi\|^2.
\#

Additionally, we also have
\$
\rho(M,M') = & \sup_{z}\tv(P_M^h(\cdot|z)\|P_{M'}^h(\cdot|z))\\
\le & \sup_{z}\sup_{\cX'}\Big(\phi^h(z)^{\top}\int_{\cX'}(\nu^h(M,x^{h+1}) - \nu^h(M',x^{h+1}))\md x^{h+1}\Big)^2\\
\le & \sup_{\cX'}\Big\|\int_{\cX'}(\nu^h(M,x^{h+1}) - \nu^h(M',x^{h+1}))\md x^{h+1}\Big\|^2\\
= & \|\xi\|^2.
\$
Therefore, we complete the proof.
\end{proof}

\section{Technical Lemmas}

\begin{lemma}[Azuma–Hoeffding inequality, \citealt{cesa2006prediction}]\label{lemma:azuma}
Let $\{x_i\}_{i=1}^n$ be a martingale difference sequence with respect to a filtration $\{\cG_{i}\}$ satisfying $|x_i| \leq M$ for some constant $M$, $x_i$ is $\cG_{i+1}$-measurable, $\E[x_i|\cG_i] = 0$. Then for any $0<\delta<1$, with probability at least $1-\delta$, we have 
\begin{align}
    \sum_{i=1}^n x_i\leq M\sqrt{2n \log (1/\delta)}.\notag
\end{align} 
\end{lemma}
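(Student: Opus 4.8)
The plan is to prove the stated one-sided tail bound by the exponential moment (Chernoff) method, using a conditional version of Hoeffding's lemma to control the moment generating function term by term along the filtration. Throughout I will work with $S_n = \sum_{i=1}^n x_i$.

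First I would fix a parameter $\lambda > 0$ and apply Markov's inequality to the nonnegative random variable $e^{\lambda S_n}$, which reduces the problem to bounding $\E[e^{\lambda S_n}]$. To do this I would peel off the terms one at a time using the tower property and the fact that $S_{n-1}$ is $\cG_n$-measurable:
\$
\E\big[e^{\lambda S_n}\big] = \E\big[e^{\lambda S_{n-1}}\,\E[e^{\lambda x_n}\mid \cG_n]\big].
\$
For the inner conditional moment generating function I would invoke the (conditional) Hoeffding lemma: since $\E[x_n\mid\cG_n]=0$ and $|x_n|\le M$ almost surely, one has $\E[e^{\lambda x_n}\mid\cG_n]\le e^{\lambda^2 M^2/2}$ — this follows from bounding $e^{\lambda y}$ on $[-M,M]$ by its chord (convexity), taking conditional expectation, and a second-order Taylor estimate of the resulting function of $\lambda M$. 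Iterating this $n$ times gives $\E[e^{\lambda S_n}]\le e^{n\lambda^2 M^2/2}$.

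Combining with Markov yields $\Pb[S_n\ge t]\le \exp(-\lambda t + n\lambda^2 M^2/2)$ for every $\lambda>0$. I would then optimize over $\lambda$, taking $\lambda = t/(nM^2)$, to obtain $\Pb[S_n\ge t]\le \exp\big(-t^2/(2nM^2)\big)$. Finally, setting the right-hand side equal to $\delta$ and solving for $t$ gives $t = M\sqrt{2n\log(1/\delta)}$, which is exactly the claimed bound; since $\delta\in(0,1)$ this $t$ is well-defined and positive.

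I do not expect a genuine obstacle here: this is the classical Azuma–Hoeffding inequality cited from \citet{cesa2006prediction}, and the only mildly technical ingredient is the conditional Hoeffding MGF estimate, which is a standard convexity computation. The one bookkeeping point to be careful about is the measurability structure (that $x_i$ is $\cG_{i+1}$-measurable with $\E[x_i\mid\cG_i]=0$), which is precisely what licenses pulling $e^{\lambda S_{i-1}}$ out of the conditional expectation at each peeling step.
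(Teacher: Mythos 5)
Your proof is correct and is precisely the standard Chernoff-plus-conditional-Hoeffding argument; the paper itself states this lemma as a cited result from \citet{cesa2006prediction} without reproducing a proof, and your argument matches the textbook proof in that reference. The one measurability point you flag (that $S_{n-1}$ is $\cG_n$-measurable under the paper's indexing convention, which licenses pulling $e^{\lambda S_{n-1}}$ out of the conditional expectation) is handled correctly.
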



\begin{lemma}[Theorem 13.6 of \citet{zhang2023mathematical}]\label{lm:freedman}
Consider a sequence of random functions $C_1(\cS_1),\ldots,C_t(\cS_t)$. Assume that $\xi_i\le\E_{Z_i^{(y)}}\xi_i + b$ for some constant $b > 0$. Then for any $\lambda\in(0,3/b)$, with probability at least $1-\delta$:
\$
\sum_{i=1}^n\xi_i\le \sum_{i=1}^n\E_{Z_i^{(y)}}\xi_i + \frac{\lambda\sum_{i=1}^n\Var_{Z_i^{(y)}}(\xi_i)}{2(1-\lambda b/3)} + \frac{\log(1/\delta)}{\lambda}.
\$
\end{lemma}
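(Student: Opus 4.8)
The plan is to prove this by the standard exponential–supermartingale (Chernoff) argument for martingale differences that are bounded only from above. Write $\E_i[\cdot] := \E_{Z_i^{(y)}}[\cdot]$ and $\Var_i(\cdot) := \Var_{Z_i^{(y)}}(\cdot)$ for the expectation and variance conditional on the history $\cS_i$, and set $Y_i := \xi_i - \E_i[\xi_i]$. Then $\E_i[Y_i]=0$, $Y_i \le b$ by hypothesis, and $\E_i[Y_i^2] = \Var_i(\xi_i)$. Since $\sum_{i=1}^n\xi_i - \sum_{i=1}^n\E_i[\xi_i] = \sum_{i=1}^n Y_i$, it suffices to produce a high-probability upper bound on $\sum_{i=1}^n Y_i$.

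First I would record the elementary inequality $e^u - 1 - u \le \frac{u^2/2}{1-u/3}$ for all $u<3$, which follows by comparing power series term by term using $k! \ge 2\cdot 3^{k-2}$ for $k\ge 2$. Applying it with $u = \lambda Y_i$ (so $u \le \lambda b < 3$ since $\lambda \in (0,3/b)$), taking $\E_i$, and using $\E_i[Y_i]=0$ together with $1+x\le e^x$ gives
\[
\E_i\big[e^{\lambda Y_i}\big] \;\le\; 1 + \frac{\lambda^2 \Var_i(\xi_i)/2}{1-\lambda b/3} \;\le\; \exp\!\Big(\frac{\lambda^2 \Var_i(\xi_i)/2}{1-\lambda b/3}\Big).
\]
Then the process $W_n := \exp\!\Big(\lambda\sum_{i=1}^n Y_i - \sum_{i=1}^n \frac{\lambda^2 \Var_i(\xi_i)/2}{1-\lambda b/3}\Big)$, with $W_0=1$, is a supermartingale with respect to $\{\cS_n\}$: the $n$-th variance term and $W_{n-1}$ are $\cS_n$-measurable, so the display above yields $\E_n[W_n] \le W_{n-1}$, hence $\E[W_n] \le 1$.

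Finally, Markov's inequality gives $\Pb(W_n \ge 1/\delta) \le \delta$; on the complementary event, taking logarithms and dividing by $\lambda>0$ gives
\[
\sum_{i=1}^n Y_i \;\le\; \frac{\lambda \sum_{i=1}^n \Var_i(\xi_i)}{2(1-\lambda b/3)} + \frac{\log(1/\delta)}{\lambda},
\]
and substituting $\sum_i Y_i = \sum_i \xi_i - \sum_i \E_i[\xi_i]$ is exactly the claim. The only mildly delicate point — and the single place where the hypothesis $\lambda \in (0,3/b)$ is used — is the elementary moment-generating-function bound in the second paragraph; pinning down the constant $3$ there (equivalently, keeping $1-\lambda b/3>0$) is the crux, and everything else is routine supermartingale bookkeeping.
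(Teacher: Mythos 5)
The paper does not prove this lemma; it is imported verbatim as Theorem 13.6 of \citet{zhang2023mathematical}, so there is no in-paper argument to compare against. Your proposal is the standard and correct proof of this Bernstein--Freedman-type martingale inequality: center to get $Y_i=\xi_i-\E_i[\xi_i]$ with $Y_i\le b$, bound the conditional moment generating function via $e^u\le 1+u+\tfrac{u^2/2}{1-u/3}$ (using $1-\lambda Y_i/3\ge 1-\lambda b/3>0$ to freeze the denominator), form the exponential supermartingale, and apply Markov. The only loose stitch is the elementary inequality itself: the term-by-term power-series comparison via $k!\ge 2\cdot 3^{k-2}$ is valid only for $u\in[0,3)$, since for $u<0$ the dominating series alternates in sign; for $u<0$ one needs a separate (easy) monotonicity or derivative argument to confirm $e^u-1-u\le \tfrac{u^2/2}{1-u/3}$, which does hold. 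With that one-line patch the proof is complete.
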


\begin{lemma}\label{lm:divergence inequality}
    Let $\rho=\sup_z\log(p(z)/q(z))$. Then,
    \$
        \int \md P(z)\log^2\Big(\frac{\md P(z)}{\md Q(z)}\Big) \le 2(\rho+1)\kl(P\|Q).
    \$
\end{lemma}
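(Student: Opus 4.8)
The plan is to reduce the inequality to a one-dimensional pointwise estimate and then integrate. We may assume $P \ll Q$ and $\rho < \infty$, since otherwise $\kl(P\|Q) = \infty$ or the bound is vacuous and there is nothing to prove. Set $x = x(z) := \log\bigl(dP(z)/dQ(z)\bigr)$, so that $dQ(z)/dP(z) = e^{-x(z)}$ and, on the support of $P$, $x(z) \le \rho$ by definition of $\rho$. Note also that $\rho \ge 0$: since $\kl(P\|Q) = \int dP\, x \ge 0$, the supremum of $x$ over the support of $P$ cannot be negative. I then claim it suffices to prove the scalar inequality
\#\label{eq:pointwise-div}
x^2 \le 2(\rho+1)\bigl(e^{-x} - 1 + x\bigr) \quad\text{for every } x \le \rho.
\#
Indeed, substituting $x = x(z)$ into \eqref{eq:pointwise-div} and integrating against $dP$ gives $\int dP\,\log^2(dP/dQ)$ on the left, while the right side becomes $2(\rho+1)\bigl(\int dP\, e^{-x} - \int dP + \int dP\, x\bigr)$; here $\int dP\, x = \kl(P\|Q)$, $\int dP = 1$, and $\int dP\, e^{-x} = \int dP\,(dQ/dP) \le 1$, so the right side is at most $2(\rho+1)\kl(P\|Q)$, which is the desired bound.

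To establish \eqref{eq:pointwise-div} I would split on the sign of $x$. For $x \le 0$, put $y = -x \ge 0$ and use $e^{y} \ge 1 + y + y^2/2$ to get $e^{-x} - 1 + x \ge x^2/2$; since $\rho \ge 0$ this yields $2(\rho+1)\bigl(e^{-x}-1+x\bigr) \ge 2\bigl(e^{-x}-1+x\bigr) \ge x^2$. For $0 \le x \le \rho$, the crucial step is the sharper bound $e^{-x} - 1 + x \ge \frac{x^2}{2(x+1)}$, equivalently that $g(x) := 2(x+1)\bigl(e^{-x}-1+x\bigr) - x^2$ is nonnegative on $[0,\infty)$; this follows from $g(0) = 0$ together with the identity $g'(x) = 2x\bigl(1 - e^{-x}\bigr) \ge 0$, which is a short computation. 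Combining with $\frac{x^2}{2(x+1)} \ge \frac{x^2}{2(\rho+1)}$ for $0 \le x \le \rho$ gives \eqref{eq:pointwise-div}.

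I expect the main obstacle to be the second case of \eqref{eq:pointwise-div}, namely the estimate $e^{-x} - 1 + x \ge x^2/(2(x+1))$ for $x \ge 0$: the factor $2(\rho+1)$ is essentially sharp as $x \downarrow 0$ with $\rho = x$, so a cruder pointwise inequality would not close the argument. Everything else — the reduction to \eqref{eq:pointwise-div}, the $x \le 0$ case, and the final integration using $\int dP\,(dQ/dP) \le 1$ — is routine, so the proof is complete once this estimate is verified.
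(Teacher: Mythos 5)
Your proof is correct, and at its core it is the same argument as the paper's: both reduce the lemma to a pointwise comparison of the integrands, namely the scalar inequality $t\log^2 t \le 2(\rho+1)\,(t\log t - t + 1)$ for $0\le t\le e^{\rho}$, which after the substitution $t=e^{x}$ and division by $e^{x}$ is exactly your bound $x^2 \le 2(\rho+1)(e^{-x}-1+x)$ for $x\le\rho$. The differences are in execution rather than strategy. The paper integrates against $Q$ (so the KL-type integrand has expectation exactly $\kl(P\|Q)$) and obtains the constant by asserting, ``by using some algebra,'' that the ratio $t\log^2 t/(t\log t - t+1)$ is increasing and then evaluating at $t=e^{\rho}$; you integrate against $P$ (whence the extra, harmless step $\int \md P\, e^{-x}=\int_{\mathrm{supp}\,P}\md Q\le 1$) and prove the needed bound directly via the identity $g'(x)=2x(1-e^{-x})$ together with the elementary case $x\le 0$. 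Your route is arguably the more complete one: the monotonicity of the ratio, which the paper leaves unverified, is precisely the nontrivial calculus fact, and your derivative computation supplies an explicit, checkable substitute for it. Both arguments deliver the same constant $2(\rho+1)$, and your observation that $\rho\ge 0$ (needed to handle the sign of the prefactor and the $x\le 0$ case) is a point the paper glosses over but is correct as you state it.
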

\begin{proof}
It is obvious that $\rho>0$.
Now let $f_{\kl}(t)=t\log t-t+1$ and $f(t)=t\log^2 t$. Define
\$
    \kappa = \sup_{0\le t\le \exp(\rho)} \frac{f(t)}{f_{\kl}(t)}.
\$
By using some algebra, we know that $f(t)/f_{\kl}(t)$ is an increasing function of $t\in[0,+\infty)$, which implies that when $\rho\in(0,+\infty)$
\$
    \kappa \le \frac{f(\exp(\rho))}{f_{\kl}(\exp(\rho))} = \frac{\rho^2\exp(\rho)}{\rho\exp(\rho)-\exp(\rho)+1} \le 2+2\rho.
\$
Therefore, we have
\$
    \int \md P(z)\log^2\Big(\frac{\md P(z)}{\md Q(z)}\Big) = \E_{z\sim Q}f\Big(\frac{p(z)}{q(z)}\Big) \le \kappa\E_{z\sim Q}f_{\kl}\Big(\frac{p(z)}{q(z)}\Big) = \kappa\kl(P\|Q),
\$
which indicates the desired bounds.
\end{proof}

\begin{lemma}[Proposition B.11 of \citet{zhang2023mathematical}]\label{lm:kl and Hellinger}
Let $\rho=\sup_z\log(p(z)/q(z))$. Then
\$
    H(P\|Q)^2 \le \kl(P\|Q) \le (3+\rho)H(P\|Q)^2.
\$
\end{lemma}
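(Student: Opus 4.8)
The plan is to prove the two inequalities separately. For the lower bound $H(P\|Q)^2 \le \kl(P\|Q)$, I would start from the identity $H(P\|Q)^2 = 2\bigl(1 - \int \sqrt{p\,q}\bigr)$, obtained by expanding $\E_{z\sim P}(\sqrt{\md Q/\md P}-1)^2 = \int q - 2\int\sqrt{pq} + \int p$, and combine it with $\kl(P\|Q) = -2\,\E_P\bigl[\log\sqrt{q/p}\bigr]$ together with the elementary inequality $\log x \le x-1$. This gives $\kl(P\|Q) \ge -2\,\E_P\bigl[\sqrt{q/p}-1\bigr] = 2\bigl(1-\int\sqrt{pq}\bigr) = H(P\|Q)^2$. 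This direction is routine and needs no boundedness.

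The substance is the upper bound $\kl(P\|Q) \le (3+\rho)H(P\|Q)^2$, and here I would mirror the argument already used for Lemma~\ref{lm:divergence inequality}. Writing $r = p/q$, one has $\kl(P\|Q) = \E_Q\bigl[f_{\kl}(r)\bigr]$ with $f_{\kl}(t) = t\log t - t + 1$ and $H(P\|Q)^2 = \E_Q\bigl[f_H(r)\bigr]$ with $f_H(t) = (\sqrt t - 1)^2$, and both $f_{\kl}$ and $f_H$ are nonnegative. Since $r \le e^{\rho}$ on the support of $Q$ by the definition of $\rho$, it suffices to establish the pointwise bound $f_{\kl}(t) \le (3+\rho)\,f_H(t)$ on $[0,e^{\rho}]$, i.e.\ $\kappa := \sup_{0\le t\le e^{\rho}} f_{\kl}(t)/f_H(t) \le 3+\rho$. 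We may assume $\rho>0$, since $\int p = \int q$ forces $P\equiv Q$ when $\rho=0$, a trivial case.

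To bound $\kappa$ I would first show that $g(t) := f_{\kl}(t)/f_H(t)$ is nondecreasing on $(0,\infty)$, with $g(0^+)=1$ and removable singularity $g(1)=2$, so that $\kappa = g(e^{\rho})$. Monotonicity follows by computing the numerator of $g'$, $N(t) = f_{\kl}'(t)f_H(t) - f_{\kl}(t)f_H'(t)$ with $f_{\kl}'(t)=\log t$ and $f_H'(t)=1-t^{-1/2}$; after the substitution $u=\sqrt t$ it factors as $N = (u-1)\bigl(u - u^{-1} - 2\log u\bigr)$, and since $\phi(u):=u-u^{-1}-2\log u$ satisfies $\phi(1)=0$ and $\phi'(u)=(u-1)^2/u^2\ge 0$, the two factors share a sign, so $N\ge 0$. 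The last step is the endpoint estimate: writing $s=e^{\rho/2}\ge 1$ one gets $g(e^{\rho}) = \bigl(2s^2\log s - s^2 + 1\bigr)/(s-1)^2$, and a short computation gives $(3+2\log s)(s-1)^2 - \bigl(2s^2\log s - s^2+1\bigr) = 2(2s-1)\bigl[(s-1) - \log s\bigr] \ge 0$ using $2s-1\ge 1$ and $\log s\le s-1$. Integrating the pointwise inequality against $Q$ then yields the claim.

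I expect the main obstacle to be the monotonicity of $g$: establishing $N(t)\ge 0$ cleanly hinges on the change of variables $u=\sqrt t$ and the observation that $u-1$ and $\phi(u)$ always agree in sign, and one also needs a brief limiting argument near $t=1$, where $f_{\kl}$ and $f_H$ both vanish to second order (ratio $\to 2$), and near $t=0$ (ratio $\to 1$). Everything downstream—the endpoint inequality and the final integration—is elementary algebra and one application of $\log s \le s-1$.
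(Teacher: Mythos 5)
Your proof is correct: I checked the lower bound via $\log x\le x-1$, the factorization $N=(u-1)\bigl(u-u^{-1}-2\log u\bigr)$ after the substitution $u=\sqrt t$, the monotonicity via $\phi'(u)=(u-1)^2/u^2\ge 0$, and the endpoint identity $(3+2\log s)(s-1)^2-\bigl(2s^2\log s-s^2+1\bigr)=2(2s-1)\bigl[(s-1)-\log s\bigr]$, and all of these hold, so the pointwise bound $f_{\kl}(t)\le(3+\rho)f_H(t)$ on $[0,e^{\rho}]$ and hence the lemma follow. The paper itself supplies no proof of this statement --- it is imported as Proposition B.11 of \citet{zhang2023mathematical} --- but your argument is precisely the ratio-comparison template the paper does spell out for Lemma~\ref{lm:divergence inequality} (compare the two convex generators via $\kappa=\sup_{0\le t\le e^{\rho}}$ of their ratio, show the ratio is nondecreasing, and evaluate at the endpoint $t=e^{\rho}$), so it is the same approach, now made fully self-contained.
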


\begin{lemma}[Elliptical Potential Lemma \citep{dani2008stochastic,rusmevichientong2010linearly,abbasi2011improved}] \label{lm:potential}
    Let $\{x_i\}_{i \in [T]}$ be a sequence of vectors in $\rR^d$ with $\norm{x_i}_2 \leq L < \infty$ for all $t \in [T]$. Let $\Lambda_0$ be a positive-definite matrix and $\Lambda_t = \Lambda_0 + \sum_{i = 1}^{t} x_i x_i^\top$. It holds that
    $$
    \log\Big(\frac{\det(\Lambda_t)}{\Lambda_0}\Big) \le \sum_{i=1}^T \|x_i\|^2_{\Lambda_{i-1}^{-1}}.
    $$
    Further, if $\|x_i\|_2\le L$ for all $i\in[T]$, then we have
    $$
    \sum_{i=1}^T\min\{1, \|x_i\|^2_{\Lambda_{i-1}^{-1}}\} \le 2\log\Big(\frac{\det(\Lambda_t)}{\Lambda_0}\Big) \le 2d\log\Big(\frac{\mathrm{trace}(\Lambda_0) + nL^2}{d\det(\Lambda_0)^{1/d}}\Big).
    $$
    Finally, if $\lambda_{\min}(\Lambda_0) \ge \max(1,L^2)$, 
    $$
    \sum_{i=1}^T\|x_i\|^2_{\Lambda_{i-1}^{-1}} \le 2\log\Big(\frac{\det(\Lambda_t)}{\Lambda_0}\Big).
    $$
\end{lemma}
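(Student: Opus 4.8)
The plan is to reduce all three inequalities to the single telescoping identity for $\det(\Lambda_t)$ obtained from rank-one determinant updates. First I would apply the matrix determinant lemma to the rank-one update $\Lambda_i = \Lambda_{i-1} + x_i x_i^\top$ (this is legitimate since $\Lambda_{i-1} \succeq \Lambda_0 \succ 0$ is invertible), giving
\[
\det(\Lambda_i) = \det(\Lambda_{i-1})\bigl(1 + x_i^\top \Lambda_{i-1}^{-1} x_i\bigr) = \det(\Lambda_{i-1})\bigl(1 + \norm{x_i}_{\Lambda_{i-1}^{-1}}^2\bigr).
\]
Multiplying over $i = 1, \ldots, T$ and taking logarithms yields $\log\bigl(\det(\Lambda_T)/\det(\Lambda_0)\bigr) = \sum_{i=1}^T \log\bigl(1 + \norm{x_i}_{\Lambda_{i-1}^{-1}}^2\bigr)$. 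The first inequality then follows from the elementary bound $\log(1+u) \le u$ for $u \ge 0$.

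For the second inequality I would replace that scalar bound by $\min\{1, u\} \le 2\log(1+u)$, which holds for all $u \ge 0$: on $[0,1]$ concavity of $\log(1+\cdot)$ gives $\log(1+u) \ge u\log 2 \ge u/2$, and for $u \ge 1$ we have $2\log(1+u) \ge 2\log 2 \ge 1$. Applying this with $u = \norm{x_i}_{\Lambda_{i-1}^{-1}}^2$ and summing gives $\sum_{i=1}^T \min\{1, \norm{x_i}_{\Lambda_{i-1}^{-1}}^2\} \le 2\log\bigl(\det(\Lambda_T)/\det(\Lambda_0)\bigr)$. To convert the log-determinant into the claimed trace bound I would use the AM--GM inequality on the eigenvalues of $\Lambda_T$, namely $\det(\Lambda_T) \le \bigl(\mathrm{trace}(\Lambda_T)/d\bigr)^d$, together with $\mathrm{trace}(\Lambda_T) = \mathrm{trace}(\Lambda_0) + \sum_{i=1}^T \norm{x_i}_2^2 \le \mathrm{trace}(\Lambda_0) + TL^2$, which gives $2\log\bigl(\det(\Lambda_T)/\det(\Lambda_0)\bigr) \le 2d\log\bigl((\mathrm{trace}(\Lambda_0)+TL^2)/(d\det(\Lambda_0)^{1/d})\bigr)$.

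Finally, for the last inequality I would observe that when $\lambda_{\min}(\Lambda_0) \ge \max\{1, L^2\}$, the Loewner ordering $\Lambda_{i-1} \succeq \Lambda_0$ forces $\norm{x_i}_{\Lambda_{i-1}^{-1}}^2 \le \norm{x_i}_2^2 / \lambda_{\min}(\Lambda_0) \le L^2/\lambda_{\min}(\Lambda_0) \le 1$, so the truncation is inactive and $\sum_{i=1}^T \norm{x_i}_{\Lambda_{i-1}^{-1}}^2 = \sum_{i=1}^T \min\{1, \norm{x_i}_{\Lambda_{i-1}^{-1}}^2\}$, which is bounded by the second inequality. I do not anticipate a genuine obstacle: this is the classical determinant-potential argument, and the only points requiring minimal care are verifying the scalar inequality $\min\{1,u\} \le 2\log(1+u)$ and using $\Lambda_{i-1} \succeq \Lambda_0$ correctly when bounding the weighted norms.
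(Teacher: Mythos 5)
Your proof is correct and is precisely the classical determinant-potential argument from the cited references (Dani et al., Abbasi-Yadkori et al.); the paper itself states this lemma with a citation and gives no proof, so there is nothing to diverge from. All three steps check out — the rank-one determinant identity, the scalar bounds $\log(1+u)\le u$ and $\min\{1,u\}\le 2\log(1+u)$, the AM--GM trace bound (note the $nL^2$ in the statement should read $TL^2$, as you correctly use), and the observation that $\lambda_{\min}(\Lambda_0)\ge\max(1,L^2)$ makes the truncation inactive.
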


\end{document}